\newcommand\calD{{\mathcal{D}}}
\newcommand\calF{{\mathcal{F}}}
\newcommand\calH{{\mathcal{H}}}
\newcommand\calI{{\mathcal{I}}}
\newcommand\calK{{\mathcal{K}}}
\newcommand\calL{{\mathcal{L}}}
\newcommand\calN{{\mathcal{N}}}
\newcommand\calP{{\mathcal{P}}}
\newcommand\calQ{{\mathcal{Q}}}
\newcommand\calR{{\mathcal{R}}}
\newcommand\calT{{\mathcal{T}}}
\newcommand\calV{{\mathcal{V}}}
\newcommand{\Bcal}{\mathcal{B}}
\newcommand{\Hcal}{\mathcal{H}}
\newcommand{\Kcal}{\mathcal{K}}
\newcommand{\Pcal}{\mathcal{P}}
\newcommand{\Wcal}{\mathcal{W}}
\newcommand\sH{{\mathbb{H}}}
\newcommand\sP{{\mathbb{P}}}
\newcommand{\NN}{\mathbb{N}}
\newcommand{\RR}{\mathbb{R}}
\newcommand{\W}{\mathcal{W}}
\newcommand{\CW}{\mathcal{CW}}
\newcommand{\AW}{\mathcal{AW}}
\newcommand{\DKL}{\calD_{\mathrm{KL}}}
\newcommand{\TVD}{\mathrm{TV}}
\newcommand{\AVD}{\mathrm{AV}}
\newcommand{\cpl}{\Pi}
\newcommand{\bccpl}{\Pi_{\text{bc}}}
\newcommand{\ccpl}{\Pi_{\text{c}}}
\newcommand{\E}{\mathbb{E}}
\newcommand{\R}{\mathbb{R}}
\newcommand{\N}{\mathbb{N}}
\newcommand{\avar}{\mathrm{AVaR}}
\newcommand{\var}{\mathrm{VaR}}
\newcommand{\es}{\mathrm{ES}}
\newcommand{\defeq}{\vcentcolon=}
\newcommand{\mudata}{\mu_{\mathrm{data}}}
\newcommand{\murecon}{\mu_{\mathrm{rec}}}
\newcommand{\murec}{\mu_{\mathrm{rec}}}
\newcommand{\mugen}{\mu_{\mathrm{gen}}}
\newcommand{\mulatent}{\mu_{\mathrm{latent}}}
\newcommand{\muprior}{\mu_{\mathrm{prior}}}
\newcommand{\muemp}{\hat{\mu}_{\mathrm{data}}}
\newcommand{\platent}{p_{\mathrm{latent}}}
\newcommand{\pprior}{p_{\mathrm{prior}}}
\newcommand{\De}{\mathrm{De}}
\newcommand{\Lrecon}{\mathcal{L}_{\mathrm{rec}}}
\newcommand{\Llatent}{\mathcal{L}_{\mathrm{latent}}}
\newtheorem{theorem}{Theorem}[section]
\newtheorem{corollary}[theorem]{Corollary}
\newtheorem{lemma}[theorem]{Lemma}
\theoremstyle{remark}
\newtheorem{remark}[theorem]{Remark}
\theoremstyle{definition}
\newtheorem{definition}[theorem]{Definition}
\newtheorem{example}[theorem]{Example}
\crefname{section}{Section}{Section}
\numberwithin{equation}{section} 
\definecolor{newblue}{rgb}{0.0, 0.0, 0.9}
\title{Time-Causal VAE: Robust Financial Time Series Generator}
\author{Beatrice Acciaio\thanks{Department of Mathematics, ETH Z\"{u}rich, Switzerland.\newline \emph{beatrice.acciaio@math.ethz.ch}, \emph{songyan.hou@math.ethz.ch}}, \,Stephan Eckstein\thanks{Department of Mathematics, University of T\"{u}bingen, Germany. \newline \emph{stephan.eckstein@uni-tuebingen.de}} \,and\, Songyan Hou\footnotemark[1]}
\date{\today }
\begin{document}

\maketitle

\begin{abstract}
We build a time-causal variational autoencoder (TC-VAE) for robust generation of financial time series data. Our approach imposes a \emph{causality constraint} on the encoder and decoder networks, ensuring a causal transport from the real market time series to the fake generated time series. Specifically, we prove that the TC-VAE loss provides an upper bound on the causal Wasserstein distance between market distributions and generated distributions. Consequently, the TC-VAE loss controls the discrepancy between optimal values of various dynamic stochastic optimization problems under real and generated distributions. To further enhance the model's ability to approximate the latent representation of the real market distribution, we integrate a RealNVP prior into the TC-VAE framework. Finally, extensive numerical experiments show that TC-VAE achieves promising results
on both synthetic and real market data. This is done by comparing real and generated distributions according to various statistical distances, demonstrating the effectiveness of the generated data for downstream financial optimization tasks, as well as showcasing that the generated data reproduces stylized facts of real financial market data.

\noindent\emph{Keywords:} adapted Wasserstein distance, empirical measure, convergence rate, kernel smoothing\\
MSC (2020): 37M10, 68T07

\end{abstract}

\section{Introduction}
\label{sect.intro}
For financial time series, the shortage of samples makes it statistically hard for empirical processes to achieve an acceptable confidence level in describing the underlying market distribution. In practice, it is widely recognized among financial engineers that back-testing exclusively on empirical market data results in significant over-fitting, which leads to unpredictably high risks in decision making based on these tests \cite{Bailey2016Tpo}. Synthetic data are therefore generated to augment scarce market data, and used to improve  back-testing, stress-testing, exploring new scenarios, and in deep learning processes in financial applications; see the overview given in \cite{Assefa2020Gsd}. For those purposes, the generated data should look like plausible samples from
the underlying market distribution, for example reproducing stylized facts observed in the market. In particular, we want the distribution of the generated data to be close to the underlying market distribution in their performance on decision making problems, such as pricing and hedging, as well as optimal stopping and utility maximization. Notably, these problems are not continuous with respect to widely used distances, such as the Maximum Mean Discrepancy (MMD) and the Wasserstein distances ($\W$-distances). On the other hand, these problems are Lipschitz-continuous with respect to stronger metrics, called adapted Wasserstein distances ($\AW$-distances) \cite{BackhoffVeraguas2020AWd, Pflug2014Mso}. Therefore, for the augmentation of market data with the purpose of e.g.~testing performance of different strategies within any of the above problems,
it is desirable to find a generated distribution which is close to the underlying market distribution in $\AW$-distance. 

In that spirit, one natural choice would be to build a generative adversarial network (GAN) employing adapted distances as loss functions. Xu et al.~\cite{Xu2020CgG} introduce the COT-GAN, using causal Wasserstein distances ($\CW$-distances) as a compromise between $\W$-distances and $\AW$-distances. Since $\CW$-distances are still considerably more expensive to compute than $\W$-distances in a multi-step setting, COT-GAN can only provide satisfactory results for time series with few time steps. This limits the application of $\CW$-GAN for financial time series generation. Aside from the time-step constraint, recent research has shown that distributions generated via Wasserstein GANs often are far from the source distributions in  $\W$-distance \cite{Stanczuk2021WGw}. This also explains why GANs show mode collapse \cite{thanh2020catastrophic}, which refers to a scenario where the generator starts producing a limited variety of outputs, often very similar to each other, instead of a diverse range that represents the real data distribution.
Not to mention that the adversarial training to find the saddle point of the min-max problem is notoriously unstable. Lastly, GANs are also usually ``data hungry" \cite{karras2020training}, and scarcity of market data is the initial problem we started with.

For these reasons, we decided to avoid adversarial minimization, and instead adopt the network structure of variational autoencoders (VAEs) introduced in \cite{Kingma2014Aev}. VAEs are highly expressive models that retain the computational efficiency of fully factorized models \cite{cinelli2021variational} and have found wide applications in generating data for speech, images, and text \cite{bowman2015generating}.
Notably, very deep VAEs generalize autoregressive models and can outperform them on images \cite{child2020very}. Moreover, VAEs frameworks are not only useful in generation, but also able to learn a disentangled latent representation of the data distribution, see \cite{chen2018isolating, mathieu2019disentangling}. This is in particular true for $\beta$-VAE, which we use in the present paper, see \cite{burgess2018understanding, bhowalvariational}. Recently, a series of papers have presented different extensions of 
VAEs to process sequential data, see for example the summary paper \cite{girin2020dynamical}. In the present paper, we introduce a variation of VAEs, which is able to learn the conditional distribution of financial time series under $\CW$-distance. Specifically, the encoder maps the market underlying data distribution $\mudata$ into a latent distribution $\mulatent$ on the latent space, while the decoder maps the latent distribution back to a reconstructed distribution $\murecon$ on the data space. The decoder will be used to generate a distribution $\mugen$ by pushing a prior distribution $\muprior$ defined on the latent space. 
As it is common for VAEs, we want to achieve two goals at the same time: 1) minimize the reconstruction error $\Lrecon$ between $\mudata$ and $\murecon$; 2) minimize the latent error $\Llatent$ between $\mulatent$ and the prior distribution $\muprior$. As a result, the generated distribution $\mugen$ should also be close to the data distribution $\mudata$.

Crucially, we incorporate two modifications to VAEs:
\begin{enumerate}[(i)]
    \item \textbf{Causality constraint:} we impose a causality condition on the encoder and decoder, so that the reconstruction path at time $t$ depends on the input path only up to time $t$. We name the resulting network structure \emph{Time-Causal}-VAE (TC-VAE).
    \item \textbf{Flexible prior:} we apply a flexible learnable prior distribution $\muprior$, and specifically the RealNVP introduced in \cite{Dinh2016Deu, gatopoulos2020super}. 
\end{enumerate}
Networks with a time-causal structure are already present in the literature. Those include, for example, recurrent neural networks and causal self-attention networks, both proven highly successful in time series generation; see \cite{chung2015recurrent,Yoon2019TsG,Esteban2017RvM, yang2021causal}. 

From the causal optimal transport point of view, our encoder and decoder together transport the market data distribution to the reconstructed distribution in a causal fashion. Consequently, we can prove that  the $\CW$-distance between $\mudata$ and $\murecon$ is bounded by the reconstruction error $\Lrecon$. On the other hand, RealNVP has been proven very successful in approximating distributions, and its density computationally very tractable \cite{dinh2016density}, which allows an easy computation of the KL-divergence. The flexibility and tractability of RealNVP empowers TC-VAE such that $\muprior$ and $\mulatent$ are close enough {(in KL-divergence)} to control the $\CW$-distance between $\mugen$ and $\murec$. Consequently, the TC-VAE loss controls the 
$\CW$-distance between $\mugen$ and $\mudata$,  thereby providing one-sided guarantees of such control problems through the $\CW$-distance.

With these improvements, TC-VAE achieves the goals which we laid out above, generating financial time series data with strong statistical guarantees according to causal Wasserstein distances and showcasing promising numerical results for financial tasks. On synthetic datasets like the Black-Scholes model, Heston model and Path-Dependent-Volatility model, TC-VAE  learns the data distribution very well in terms of drift, volatility, marginal distribution, Wasserstein distance \cite{Arjovsky2017WGA}, Gaussian maximum mean discrepancies \cite{Gretton2012Akt}, Signature maximum mean discrepancies \cite{liao2024sig}, adapted Wasserstein distance \cite{BackhoffVeraguas2020AWd}, and optimal values of multistage optimization problems, like mean-variance portfolio optimization \cite{Forsyth2022MPM}, log-utility maximization \cite{Merton1975Oca}, and optimal stopping \cite{Becker2019Dos}. On real market datasets, such as S\&P~500 and VIX, conditional TC-VAE enables us to generate paths, as many as possible and as long as possible. The generated paths reproduce stylized facts of financial time series \cite{Cont2001Epo} capturing key properties such as gain/loss asymmetry, skewness and kurtosis of returns, heavy-tail returns, no correlation in returns, short time correlation in square returns, long time correlation in absolute returns, and volatility clustering.\\

\noindent \textbf{Organization of the paper.} In the rest of \cref{sect.intro}, we give a brief overview of related works and introduce relevant notation. In \cref{sect.cau_gen}, we introduce the architecture of TC-VAE and its loss function. In \cref{sect.rob} we prove robustness of stochastic optimization problems w.r.t.~causal distances. Finally, in \cref{sec.exp}  we show that TC-VAE achieves promising results on both unconditional and conditional financial time series generation on several different metrics and datasets\footnote{The code and data are available at \href{https://github.com/justinhou95/TimeCausalVAE}{https://github.com/justinhou95/TimeCausalVAE}.}.

\paragraph{Related Literature.}
Numerous methodologies have been explored for generating financial time series. \cite{Kondratyev2019TMG} were among the first to use restricted Boltzmann machines to generate synthetic foreign exchange rates. Recent advances in deep learning have introduced promising techniques, including Variational Autoencoders (VAEs) introduced in \cite{Kingma2014Aev} and Generative Adversarial Networks (GANs) pioneered in \cite{Goodfellow2014GAN}, for generating synthetic financial data.

Variational Autoencoders (VAEs) have recently gained significant popularity in financial data generation. The first contribution in this field is the logSig-VAE, introduced by \cite{Buehler2020Add}, which utilizes a log-signature transformation and then applies the VAE in the transformed log-signature space. In parallel, \cite{desai2021timevae} proposed Time-VAE, specifically designed for predicting time series data. In the application of simulating option markets, \cite{Wiese2021MAS} combines the autoencoder structure with normalizing flows, while seamlessly integrating a no-arbitrage condition to ensure market consistency. Later, \cite{cai2023hybrid} developed a hybrid VAE to integrate the learning of local patterns and temporal
dynamics by variational inference for time series forecasting. Meanwhile, \cite{liu2022time} introduced an innovative VAE variant that bridges temporal convolutional networks and transformers through a layer-wise parallel structure, enhancing the model's ability to handle temporal sequences. Furthermore, \cite{huang2024generative} presents FTS-Diffusion, a novel VAE designed in particular to model irregular and scale-invariant patterns in time series. In addition, \cite{chung2024generative} introduce the SigMMD-VAE, which uses the signature MMD to separate distributions. More recently, \cite{schwarz2024interpretable} addresses the critical balance between model performance and interpretability by connecting ARMA-GARCH with LSTM-based VAE.

Other research mostly follows the GAN approach. This approach is first adapted to financial data generation in \cite{Takahashi2019Mft}, and later its variants are explored in \cite{efimov2020using, de2019enriching}. To better address the time dependency of financial time series, GANs with temporal structure are also explored in \cite{Wiese2020QGd, Fu2022Sft, Esteban2017RvM, Yoon2019TsG}. Conditional GANs are also studied in \cite{Koshiyama2021Gan,Coletta2021Trm}. A big family of time series GAN is based on signature transformation, e.g. in \cite{Ni2021SWG, liao2024sig, Lou2024PGg, biagini2024universal, Issa2024Nat}. In the signature-based GANs, neural SDEs introduced in \cite{Kidger2021Ond} have been proven successful. The generative expressiveness of neural SDEs is then studied in \cite{Kidger2021Ond} as infinite-dimensional GANs. Some research particularly focus on the financial quantities, see e.g. \cite{Cont2022TgL, ericson2024deep, vuletic2024fin, rizzato2023generative}. Many other approaches have also been explored. For example, \cite{Hamdouche2023GMf} suggests a novel time series generation approach using the Schrödinger bridge framework, while \cite{nagy2023generative} uses autoregressive models to generate limit order book data. In fact, a great amount of literature is dedicated to the generation of limit order book data, e.g.~\cite{hultin2023generative, cont2023limit,li2020generating,ozyar2021learning,hultin2021generative,coletta2023conditional}. 

For an extensive overview on synthetic data generation, we refer the reader to \cite{lu2023machine} (for general data), \cite{iglesias2023data} (for time series), 
\cite{eckerli2021generative} (GANs for financial time series), and \cite{assefa2020generating} (for general data in finance).

\paragraph{Notation.}
All random variables are defined on a fixed probability space $(\Omega, \mathcal{F},\sP)$, and all equalities and inequalities are intended to hold in $\sP$-almost sure sense. For $n\in\NN$, we denote by $\Pcal(\RR^n)$ the set of probability measures on $\RR^n$ and by $\Pcal_p(\RR^n)$ its subset of  measures with finite $p$-th moment, $p\in[1,\infty)$. For $m \in \R^n$ and a positive-semidefinite matrix $\Sigma \in \R^{n^2}$, we write $\calN(m,\Sigma)\in\calP(\R^n)$ for the normal distribution on $\R^n$ with mean $m$ and covariance matrix $\Sigma$, and  $\varphi_{m,\Sigma} $ for its density. For simplicity, if $m = \mathbf{0}$ and $\Sigma = \mathbf{id}_{n}$ is the identity matrix, we denote this density by $\varphi$. The \textit{entropy} of a measure $\mu\in\calP(\R^n)$ with density $p_{\mu}$ is given by
\begin{equation*}
    \sH(\mu)= -\int \log(p_{\mu}(x))p_{\mu}(x)dx.
\end{equation*}
For measures $\mu,\nu\in\calP(\R^n)$ with densities  $p_{\mu},p_{\nu}$, the \textit{Kullback–Leibler (KL) divergence} between $\mu$ and $\nu$ (or \textit{relative entropy} of $\mu$ w.r.t.~$\nu$) is given by 
\begin{equation*}
    \DKL(\mu|\nu) = 
    \int \log\Big(\frac{p_{\mu}(x)}{p_{\nu}(x)}\Big)p_{\mu}(x)dx.
\end{equation*}
For $p\in[1,\infty)$ and $\mu,\nu\in\Pcal_p(\RR^n)$, the $p$-Wasserstein distance between $\mu$ and $\nu$ is given by
\[
\Wcal_p(\mu,\nu) :=\inf_{\pi\in\Pi(\mu,\nu)}\left(\int \|x-y\|^p\pi(dx,dy)\right)^{1/p},
\]
where $\Pi(\mu,\nu)$ denotes the subset of $\Pcal(\RR^n\times\RR^n)$ of measures with first marginal $\mu$ and second marginal $\nu$. 
The elements of $\Pi(\mu,\nu)$ are called couplings of $\mu$ and $\nu$. 
For $\pi\in\Pi(\mu,\nu)$, we denote by
$\pi^x$ its kernel (disintegration) w.r.t.~$\mu$, so that $\pi(dx,dy)=\mu(dx)\pi^x(dy)$. For $n,N\in\NN$ and $\xi \in \Pcal (\RR^n)$, the push-forward measure of $\xi$ through a measurable map $T: \RR^n \rightarrow \RR^N$, denoted by $T_\# \xi$, is the probability measure on $\RR^N$ such that $T_\#\xi(A)=\xi(T^{-1}(A))$ for all Borel sets $A\in\Bcal(\RR^N)$. For $d,T\in\N$, we will look at the space $\R^{dT}$ as the collection of $d$-dimensional paths of length $T$. For $x=(x_1,\ldots,x_T)\in\R^{dT}$, we adopt the notation $x_{s:t}=(x_s,...,x_t)$, for $1\leq s\leq t\leq T$.
Moreover, we denote the up-to-time-$t$ marginal of $\mu \in \mathcal{P}(\R^{dT})$ by $\mu_{1:t}$, and the kernel of $\mu$ w.r.t.~it by $\mu_{x_{1:t}}$, so that $\mu(dx)=\mu(dx_{1:t})\mu_{x_{1:t}}(dx_{t+1:T})$.
Similarly, we denote the up-to-time-$t$ marginal of $\pi\in\Pi(\mu,\nu)$ by $\pi_{1:t}$, and the kernel of $\pi$ w.r.t.~it by $\pi_{x_{1:t}, y_{1:t}}$.

\section{Causal generator} 
\label{sect.cau_gen}
Our goal is to construct a path generator such that the generated paths are close to the observed ones, in the sense that they can be thought of as originating from the same underlying distribution. For $d,T\in\N$, we are interested in the space $\calP_1(\R^{dT})$ of distributions on $d$-dimensional paths of length $T$. We start with the observation of a set of such paths, which we consider to be an i.i.d. sample from an underlying \textit{data distribution}
$\mudata \in \calP_1(\R^{dT})$. The aim then is to build a generator that produces paths from a \textit{generated distribution} $\mugen\in \calP_1(\R^{dT})$ which we want to be as close as possible to $\mudata$. As explained in the introduction, our main motivation is data augmentation for robust decision making. The robustness results presented in Section~\ref{sect.rob} motivate us to use a modified version of the classical Wasserstein distance, called causal Wasserstein distance, which we recall in the next subsection. Afterwards, we introduce a specific structure of variational autoencoder, where we impose causality constraints on the encoder and decoder maps, and build a generator connected to it; see Figure~\ref{fig:vae}. We will show that, thanks to this causal structure, one can successfully control the causal Wasserstein distance between the data distribution and the generated one.

\subsection{Causal Distances}
\label{sect.cau_dist}
When facing stochastic optimization problems in a dynamic setting (that is, when the optimization depends on a process that evolves in time), distances from classical optimal transport (OT), such as Wasserstein distances, have proven unsuitable; see e.g.~\cite{Pflug2014Mso}.
The key observation is that, in a dynamic context, the value process depends  crucially on the conditional distributions forward in time and hence agents
take decisions accordingly. This suggests to modify  OT-distances by putting additional emphasis on conditional laws. This leads to couplings $\pi \in \Pi(\mu,\nu)$ such that the conditional law of $\pi$ is still a coupling of the conditional laws of $\mu$ and $\nu$, that is $\pi_{x_{1:t},y_{1:t}} \in \Pi\big(\mu_{x_{1:t}}, \nu_{y_{1:t}}\big)$. Such couplings are called
\textit{bi-causal} and we denote their collection by $\bccpl(\mu,\nu)$; see \cite{Lassalle2018Ctp,Pflug2012Adf,Pflug2014Mso}. We call the coupling $\pi$ \textit{causal} if $\pi_{x_{1:t},y_{1:t}} \in \Pi\big(\mu_{x_{1:t}}, \cdot\big)$, and write $\pi \in \ccpl(\mu,\nu)$, and we call it \textit{anti-causal} if $\pi_{x_{1:t},y_{1:t}} \in \cpl\big(\cdot, \nu_{y_{1:t}}\big)$, and write $\pi \in \cpl_{\mathrm{ac}}(\mu,\nu)$. Clearly, $\bccpl(\mu,\nu)=\ccpl(\mu,\nu)\cap\cpl_{\mathrm{ac}}(\mu,\nu)$. The causality constraint can be expressed in several different ways, see e.g.~\cite{BackhoffVeraguas2017Cti,Acciaio2020Cot} in the context of transport, and \cite{Bremaud1978Cof} in the filtration enlargement framework. Roughly speaking, in a causal transport, for every time $t$, only information on the $x$-coordinate up to time $t$ is used to determine the mass transported to the $y$-coordinate at time $t$.
And in a bi-causal transport this holds in both directions, i.e.~also when exchanging the role of $x$ and $y$. By means of these concepts, one can introduce constrained optimal transport problems, where the allowed couplings satisfy the causality or bicausality condition. We introduce them directly in the space of interest for us, that is $\mathcal{P}(\R^{dT})$, and for a specific cost, that is the sum over time steps of the Euclidean norm on $\R^d$.

\begin{definition}[Causal Wasserstein distance]
    The (first order) \textit{causal Wasserstein distance} $\CW_1$ on $\mathcal{P}_1(\R^{dT})$ is defined by 
    \begin{equation}\label{eq:causalwd}
        \CW_1(\mu,\nu) = \inf_{\pi \in \ccpl(\mu,\nu)}\int\sum_{t=1}^{T}\big\lVert x_{t}-y_{t}\big\rVert_{\R^{d}}\pi(dx,dy).
    \end{equation}
\end{definition}
We need to stress that calling $\CW_1$ a distance is an abuse of terminology, as this is clearly an asymmetric notion. We still adopt it as this is customary in the  literature. One way to recover symmetry, is to use bi-causal couplings instead of causal ones.
\begin{definition}[Adapted Wasserstein distance / Nested distance]
    The (first order) \textit{adapted Wasserstein distance} $\AW_1$ on $\mathcal{P}_1(\R^{dT})$ is defined by 
    \begin{equation}\label{eq:adaptedwd}
        \AW_1(\mu,\nu) = \inf_{\pi \in \bccpl(\mu,\nu)}\int\sum_{t=1}^{T}\big\lVert x_{t}-y_{t}\big\rVert_{\R^{d}}\pi(dx,dy).
    \end{equation}
\end{definition}
Bi-causal couplings and the corresponding optimal transport problem were considered by R\"{u}schendorf~\cite{Rueschendorf1985TWd} in so-called `Markov-constructions'. This concept was independently introduced by Pflug-Pichler~\cite{Pflug2012Adf} in the context of stochastic multistage optimization problems (see also \cite{Pflug2014Mso,Pflug2015Dgo,Pflug2016Feo,Glanzer2019Ism,Pichler2013Eor}), and considered by  Bion-Nadal and Talay in \cite{BionNadal2019OaW} and by Gigli in \cite{Gigli2008Otg}. 

As already mentioned above, adaptedness (or bi-causality) turns out to be the correct constraint to impose on couplings in order to modify the Wasserstein distance so to ensure robustness of a large class of stochastic optimization problems. That is to say, if two measures $\mu,\nu$ are close w.r.t.~this distance, then solving w.r.t.~$\mu$ optimization problems such as optimal stopping, optimal hedging, utility maximization etc, provides an ``almost optimizer" for $\nu$; see \cite{BackhoffVeraguas2020AWd, Pflug2014Mso}. This is not true for the Wasserstein distance, which is in fact unable to capture fundamental differences in the evolution of time series, see \cite{Pflug2014Mso}. 
We refer to Section~\ref{sect.rob} below for robustness results w.r.t.~the causal Wasserstein distance.

We conclude this subsection by showing some simple ordering between distances, that will turn out to be useful for our estimates later. For this we recall the concepts of total variation and adapted total variation, for $\mu,\nu \in \calP(\R^{dT})$:
\[
\TVD_0(\mu,\nu) = \inf_{\pi \in \cpl(\mu,\nu)}\int \mathbbm{1}_{x \neq y}\pi(dx,dy)\quad \text{and}\quad \AVD_0(\mu,\nu) = \inf_{\pi \in \bccpl(\mu,\nu)}\int \mathbbm{1}_{x \neq y}\pi(dx,dy),
\]
respectively, with $|\mu - \nu| = \mu + \nu - 2(\mu \wedge \nu)$.

\begin{lemma}
\label{thm:dom}
    Let $B_1 = \{x\in\R^{dT}\colon \Vert x \Vert \leq 1\}$. Then, for all $\mu,\nu \in \calP_1(B_1)$,
    \begin{equation}
        \CW_1(\mu,\nu) \leq \AW_1(\mu,\nu) \leq 2\AVD_0(\mu,\nu) \leq C \TVD_0(\mu,\nu) \leq C\sqrt{\frac{1}{2}\DKL(\mu|\nu)},
    \end{equation}
    where $C = 2(2^T -1)$.
    \end{lemma}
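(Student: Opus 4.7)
The plan is to prove the four inequalities in the chain sequentially. The first, $\CW_1(\mu,\nu) \leq \AW_1(\mu,\nu)$, follows directly from the inclusion $\bccpl(\mu,\nu) \subseteq \ccpl(\mu,\nu)$. For $\AW_1 \leq 2\AVD_0$, I would interpret the norm on $\R^{dT}$ as $\|x\| = \sum_{t=1}^T \|x_t\|_{\R^d}$ (the natural choice given the $\CW_1$ cost), so that $(x,y) \in B_1 \times B_1$ yields the triangle bound $\sum_t \|x_t - y_t\|_{\R^d} \leq \|x\| + \|y\| \leq 2$. Combined with the obvious vanishing on $\{x = y\}$, this gives the pointwise majorization $\sum_t \|x_t - y_t\| \leq 2\,\mathbbm{1}_{x \neq y}$ $\pi$-a.s.~for any $\pi \in \bccpl(\mu,\nu)$; integrating and taking the infimum over such $\pi$ yields the desired bound. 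The final link $\TVD_0 \leq \sqrt{\tfrac{1}{2}\DKL}$ is exactly Pinsker's inequality.

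The substantial step is $2\AVD_0 \leq C\TVD_0$ with $C = 2(2^T - 1)$, which is equivalent to $\AVD_0(\mu,\nu) \leq (2^T - 1)\TVD_0(\mu,\nu)$; I would prove this by induction on $T$. The base case $T=1$ is trivial, since then $\AVD_0 = \TVD_0$. For the inductive step, disintegrate $\mu = \mu_1 \otimes \mu_{x_1}$ and $\nu = \nu_1 \otimes \nu_{y_1}$ and construct an explicit bi-causal coupling $\pi$ in two stages: first couple the first-marginals $(\mu_1, \nu_1)$ via the maximum-overlap TV coupling, whose diagonal part carries mass $1 - \TVD_0(\mu_1,\nu_1)$ and is distributed as $\rho := \mu_1 \wedge \nu_1$; on $\{x_1 = y_1\}$, continue with an $\AVD_0^{T-1}$-optimal bi-causal coupling of $\mu_{x_1}$ and $\nu_{x_1}$; on $\{x_1 \neq y_1\}$, couple the remaining conditionals arbitrarily (say as a product). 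A direct check shows $\pi \in \bccpl(\mu,\nu)$, and one reads off
\[
\AVD_0(\mu,\nu) \leq \pi(\{x \neq y\}) = \TVD_0(\mu_1, \nu_1) + \int \AVD_0^{T-1}(\mu_{x_1}, \nu_{x_1}) \, d\rho(x_1).
\]

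Invoking the induction hypothesis inside the integral reduces everything to the key estimate
\[
\int \TVD_0(\mu_{x_1}, \nu_{x_1}) \, d\rho(x_1) \leq 2\TVD_0(\mu,\nu), \qquad (\star)
\]
which is the main obstacle of the argument: the standard identities relate the total TV distance to averages of conditional TVs weighted by $\mu_1$ (or $\nu_1$), but not by the minimum $\rho = \mu_1 \wedge \nu_1$, so a direct density manipulation is needed. To establish $(\star)$, I would pass to densities $f, g$ of $\mu, \nu$ with first-marginal densities $f_1, g_1$ and conditional densities $f_{x_1}, g_{x_1}$, and set $h_1 := \min(f_1, g_1)$. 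On $\{f_1 \leq g_1\}$, expanding gives the identity $h_1(f_{x_1} - g_{x_1}) = (f - g) + (g_1 - f_1)\, g_{x_1}$, and symmetrically on $\{g_1 < f_1\}$; the triangle inequality followed by integration in $y$ and then $x_1$ yields
\[
\int h_1 \int |f_{x_1} - g_{x_1}|\, dy\, dx_1 \leq 2\TVD_0(\mu,\nu) + 2\TVD_0(\mu_1, \nu_1) \leq 4\TVD_0(\mu,\nu),
\]
which is $(\star)$ after dividing by $2$. Substituting back, $\AVD_0(\mu,\nu) \leq \TVD_0(\mu,\nu) + (2^{T-1} - 1)\cdot 2\TVD_0(\mu,\nu) = (2^T - 1)\TVD_0(\mu,\nu)$, closing the induction and hence the full chain of inequalities.
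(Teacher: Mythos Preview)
Your argument is correct. For the first, second, and fourth inequalities you proceed exactly as the paper does (inclusion $\bccpl\subseteq\ccpl$, the pointwise bound $\sum_t\|x_t-y_t\|\leq 2\,\mathbbm{1}_{x\neq y}$ on $B_1\times B_1$, and Pinsker). The substantive difference lies in the third inequality $\AVD_0\leq (2^T-1)\TVD_0$: the paper does not prove this at all but simply invokes Lemma~3.5 in \cite{Eckstein2022Cmf}, whereas you supply a complete inductive proof. Your construction---maximal-overlap coupling at the first marginal, then an $\AVD_0^{T-1}$-optimal bi-causal continuation on the diagonal---is precisely the recursive structure underlying bi-causal couplings, and the density identity you use to obtain $(\star)$ is clean and correct (it works verbatim with densities taken relative to any common dominating measure, so no absolute-continuity hypothesis is needed). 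Your route is therefore more self-contained than the paper's; the only points you might tighten in a formal write-up are the measurable selection of the conditional $\AVD_0$-optimizers in $x_1$ (or replace them by $\varepsilon$-optimizers) and an explicit remark that $\TVD_0(\mu_1,\nu_1)\leq\TVD_0(\mu,\nu)$ is being used twice in the final substitution.
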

\begin{proof}
    The first inequality is obvious, and the second one is due to the fact that $\mu,\nu \in \calP_1(B_1)$. The third inequality is Lemma~3.5 in \cite{Eckstein2022Cmf} and the last inequality follows by Pinsker's inequality \cite{Takezawa2005Itn}.
\end{proof}

\subsection{TC-VAE: time-causal variational autoencoder}
\label{sect.tcvae}
\textit{Variational Autoencoders} (VAEs), introduced in \cite{Kingma2014Aev}, are \textit{deep latent-variable models} (DLVMs) employed to generate new data. Our data space is $\R^{dT}$, while as latent space we consider $\R^{d_Z T}$ for some fixed $d_Z\in \N$. A VAE consists of an encoding map (encoder) and a decoding map (decoder) where the former allows to go from the data space to the latent one, and the latter goes in the opposite direction. The encoder involves two networks $\mu_{\phi}\colon \R^{d T} \to \R^{d_Z T}$ and $\sigma_{\phi}\colon \R^{d T} \to \R^{d_Z T}$, parameterized by $\phi$. Here, $\mu_{\phi}$ encodes data $x\in\R^{dT}$ to a latent point $\mu_{\phi}(x)\in \R^{d_Z T}$, and $\sigma_{\phi}(x)$ defines the scaling of Gaussian noise that is later added to the latent point. On the other hand, the decoder map consists of a network $\De_{\theta}\colon \R^{d_Z T} \to \R^{d T}$, parameterized by $\theta$, simply mapping points from the latent space back to the data space. 
The input distribution is our data distribution $\mudata$, and we let $X\sim \mudata$. We consider Gaussian noise $\varepsilon \sim\calN(\mathbf{0},\mathbf{I}_{d_Z T})$ independent of $X$, and define 
\begin{equation}\label{VAE}
\left\{\begin{aligned}
    Z &=  \mu_{\phi}(X) + \sigma_{\phi}(X)\varepsilon,\\
    Y &= \De_\theta(Z).\\
\end{aligned}\right.
\end{equation}
We denote by $\mulatent = Z_{\#}\sP$ the distribution on the latent space $\R^{d_Z T}$ resulting from the encoding step, and by $\murecon = Y_{\#}\sP$ the reconstructed distribution on the data space $\R^{d T}$ resulting from the combination of the encoding and decoding steps.

As a DLVM, VAE generates data in the following two steps: (1) sampling $\hat{z}^{(i)} \in \R^{d_Z T}$ from a prior distribution $\muprior \in \calP_1(\R^{d_Z T})$; (2) generating a sample $\hat{x}^{(i)} \in \R^{dT}$, conditioned on $\hat{z}^{(i)}$, through the pushforward of the decoder network, i.e.~getting $\hat{x}^{(i)}$ as sample from the generated distribution 
\[
\mugen={\De_{\theta}}_{\#} \muprior \in \calP_1(\R^{dT}).
\]
Our goal is to control the causal Wasserstein distance $\CW_1(\mudata,\mugen)$, so we aim at training our VAE such that this is minimized. As we discussed already, estimating $\CW_1$  is difficult and computationally intractable due to the lack of explicit causal couplings. To overcome this difficulty, we use the fact that $\CW_1$ satisfies the triangle inequality (see \cite{Pammer2024Ano}), so that we can control $\CW_1(\mudata,\mugen)$ as follows:
\begin{equation}
\label{eq:triangle}
    \CW_1(\mudata,\mugen) \leq \CW_1(\mudata,\murecon) + \CW_1(\murecon,\mugen).
\end{equation}

Crucially, if we restrict $\mu_{\phi}$, $\sigma_{\phi}$ and $\De_\theta$ to the class of causal maps introduced below, $(X,Y)_{\#}\sP$
is a causal coupling from $\mudata$ to $\murecon$, i.e.~$(X,Y)_{\#}\sP \in \ccpl(\mudata,\murecon)$. We will show that $\CW_1(\mudata,\murecon)$ can be bounded by the transport cost associated to the coupling $(X,Y)_{\#}\sP$, 
while $\CW_1(\murecon,\mugen)$ can be  bounded by a computationally tractable quantity. 

\paragraph{Causal maps.} 
A map $\calT\colon \R^{d_1 T} \to \R^{d_2 T}$, $d_1,d_2\in\N$, is causal if and only if there exist Borel-measurable maps $\calT^t\colon \R^{d_1 t} \to \R^{d_2}$, $t=1,\ldots,T$, such that
\begin{equation*}
    \calT(x) = (\calT^{1}(x_{1:1}),\calT^{2}(x_{1:2}),\dots,\calT^{T}(x)),\qquad x \in \R^{d_1 T}.
\end{equation*}
Intuitively, the $t$-coordinate of $\calT(x)=\calT(x_{1:T})$ only depends on $x_{1:t}$, i.e.~on the values of $x$ up to time $t$. 

Let $X\sim\mu \in \calP_1(\R^{d_{1}T})$, $Y = \calT(X)$, and denote the law of $Y$ by $\nu = \calT_{\#}\mu \in \calP_1(\R^{d_{2}T})$, so that $\pi = (\mathbf{id},\calT)_{\#}\mu \in \cpl(\mu,\nu)$. Note that, for $\calT$ causal, for all $t=1,\dots,T-1$,
\begin{equation*}
\mathrm{Law}(X_{t+1}|Y_{1:t}, X_{1:t}) = \mathrm{Law}(X_{t+1}|\calT^{t}(X_{1:t}), X_{1:t}) = \mathrm{Law}(X_{t+1}|X_{1:t}),
\end{equation*}
which implies that $\pi_{x_{1:t},y_{1:t}}(dx_{t+1}) = \mu_{x_{1:t}}(dx_{t+1})$. Therefore, in this case $\pi$ is a causal coupling from $\mu$ to $\nu$, i.e.~$\pi \in \ccpl(\mu,\nu)$. Moreover, if we have a sequence of causal maps, then their composition is still a causal map by definition. This enables us to build complex maps by using simple causal maps as building blocks. 

\paragraph{Time-causal VAE (TC-VAE).} We let $\mu_{\phi}\colon \R^{d T} \to \R^{d_Z T}$, $\sigma_{\phi}\colon \R^{d T} \to \R^{d_Z T}$ and $\De_{\theta}\colon \R^{d
_Z T} \to \R^{d T}$ in VAE be \textit{causal maps} parameterized by $\phi$ and $\theta$. Let $\varepsilon \sim\calN(\mathbf{0},\mathbf{I}_{d_Z T})$  be independent of $X\sim\mudata$ and consider the autoencoder structure in \eqref{VAE};
see Figure~\ref{fig:vae} for visualization.
\begin{figure}[H]
\centering
\includegraphics[width=\textwidth]{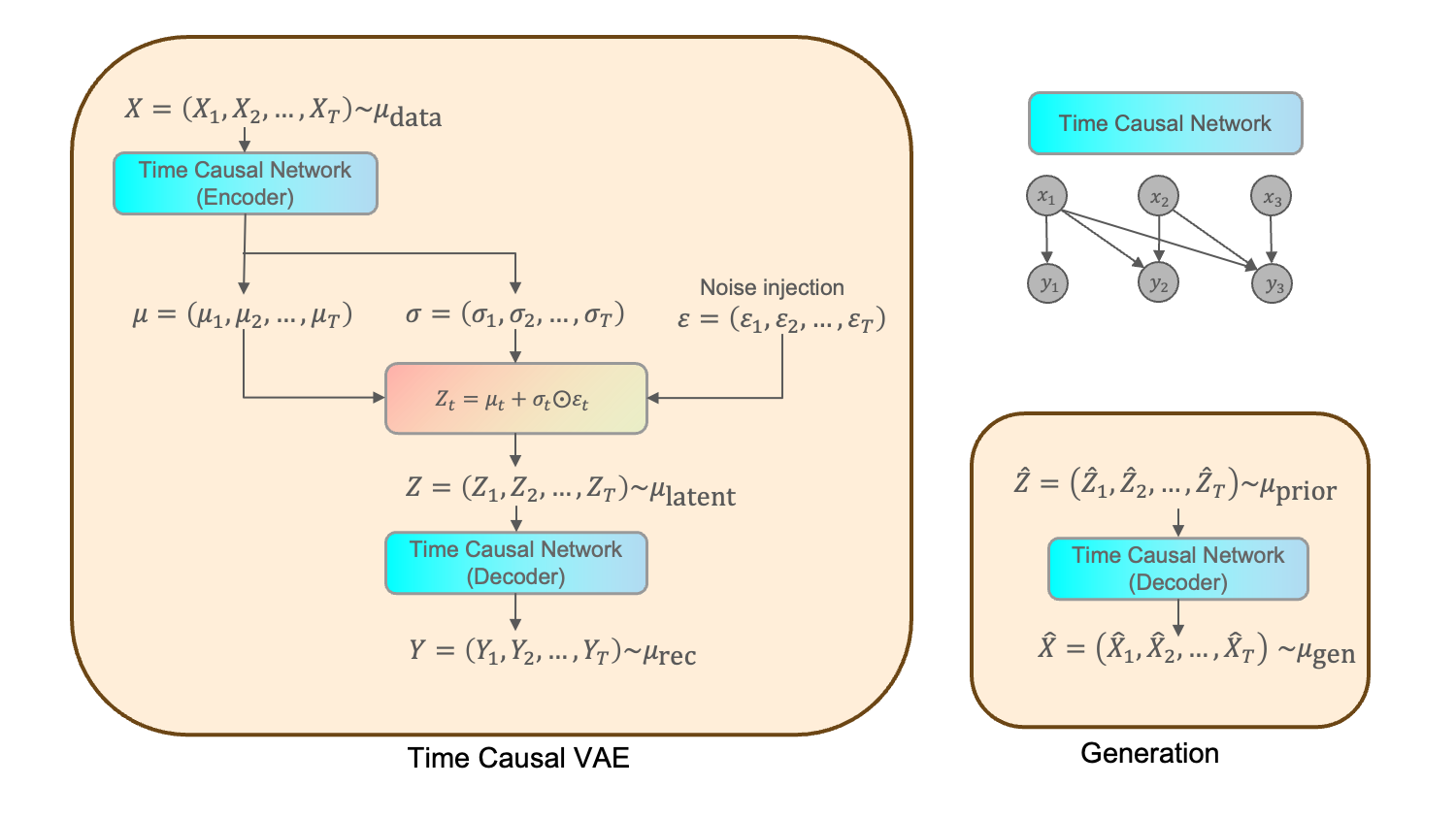}
\caption{Time-causal variational autoencoder and generation}
\label{fig:vae}
\end{figure}
Since $X$ and $\varepsilon$ are independent, we have that, for all $t=1,\dots,T-1$,
\begin{align*}
    \mathrm{Law}(X_{t+1}|Y_{1:t}, X_{1:t}) &= \mathrm{Law}(X_{t+1}|\De_{\theta}(\mu_{\phi}(X_{1:t}) + \sigma_{\phi}(X_{1:t})\varepsilon_{1:t}), X_{1:t}) = \mathrm{Law}(X_{t+1}|X_{1:t}, \varepsilon_{1:t})\\
    &= \mathrm{Law}(X_{t+1}|X_{1:t}),
\end{align*}
so that $(X,Z)_{\#}\sP$ is a causal coupling.

Now we define the reconstruction loss by 
\[
\Lrecon:=\E_{\sP}\Big[\big\Vert X - Y\big\Vert\Big]=\E_{\sP}\Big[\big\Vert X - \De_\theta(\mu_{\phi}(X) + \sigma_{\phi}(X)\varepsilon)\big\Vert\Big].
\]
Hereby, we emphasize that both $Y$ and $\Lrecon$ implicitly depend on both $\theta$ and $\phi$. Intuitively, minimizing $\Lrecon$ over the set of parameters $\phi,\theta$ corresponds to minimizing a relaxed version of $\CW_1(\mudata,\murecon)$ where causal couplings are restricted to a subset of couplings defined by our network's (causal) structure.

\begin{lemma}\label{lemma.L_rec}
We can estimate the first term in \eqref{eq:triangle} via  
 \begin{equation}
\label{eq:L_recons}
\CW_1(\mudata,\murecon) \leq   \Lrecon.
\end{equation}
\end{lemma}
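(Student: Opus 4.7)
The plan is to recognize that the joint law of the data path and the VAE reconstruction is, by construction, a causal coupling from $\mudata$ to $\murecon$, and then to use it as a feasible point in the infimum defining $\CW_1(\mudata,\murecon)$.

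First I would verify that $(X,Y)_{\#}\sP \in \ccpl(\mudata,\murecon)$. The paragraph preceding the lemma already shows, via the independence of $X$ and $\varepsilon$ and the causality of $\mu_\phi$ and $\sigma_\phi$, that $(X,Z)_{\#}\sP$ is a causal coupling from $\mudata$ to $\mulatent$. To upgrade this to $(X,Y)_{\#}\sP$, I would invoke the causality of $\De_\theta$: since $Y_{1:t}=(\De_\theta^{1}(Z_{1:1}),\ldots,\De_\theta^{t}(Z_{1:t}))$ is a measurable function of $Z_{1:t}$, and $Z_{1:t}$ is a measurable function of $(X_{1:t},\varepsilon_{1:t})$, conditioning on $Y_{1:t}$ in addition to $X_{1:t}$ contributes no additional information on $X_{t+1}$. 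More precisely,
\[
    \mathrm{Law}(X_{t+1}\mid X_{1:t},Y_{1:t}) \;=\; \mathrm{Law}(X_{t+1}\mid X_{1:t},\varepsilon_{1:t}) \;=\; \mathrm{Law}(X_{t+1}\mid X_{1:t}),
\]
using independence of $X$ and $\varepsilon$ in the last equality. This is exactly the kernel characterization required for $(X,Y)_{\#}\sP$ to be a causal coupling, and as noted in the text preceding the lemma this is precisely the role played by the causality constraint on $\mu_\phi$, $\sigma_\phi$, $\De_\theta$.

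Next, I would plug this feasible coupling into the definition~\eqref{eq:causalwd} of the causal Wasserstein distance to obtain
\[
    \CW_1(\mudata,\murecon) \;\leq\; \int \sum_{t=1}^{T}\|x_t-y_t\|_{\R^d}\,(X,Y)_{\#}\sP(dx,dy) \;=\; \E_\sP\!\left[\sum_{t=1}^{T}\|X_t-Y_t\|_{\R^d}\right].
\]
Interpreting the path norm $\|\cdot\|$ appearing in the definition of $\Lrecon$ as the $\ell^1$-type norm $\|x\|=\sum_{t=1}^{T}\|x_t\|_{\R^d}$ (which is the natural convention matching the additive-in-time cost used in $\CW_1$), the right-hand side equals exactly $\Lrecon$, and the claimed bound follows.

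There is no genuine obstacle here: once the causal-coupling property of $(X,Y)_{\#}\sP$ is secured, the lemma is a one-line consequence of taking an infimum over a larger set. The only point requiring care is ensuring the two displays stay consistent at the level of the norm, so the trade-off between a ``sum of per-time Euclidean norms'' (in $\CW_1$) and a ``norm on paths'' (in $\Lrecon$) must be spelled out; this is a notational matter rather than an analytic one.
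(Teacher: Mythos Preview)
Your proposal is correct and follows exactly the paper's approach: show that $(X,Y)_{\#}\sP$ is a causal coupling and plug it into the infimum defining $\CW_1$. You actually spell out the causality verification and the norm-consistency issue more carefully than the paper does, which simply writes ``similarly as above'' and leaves the identification of $\sum_t\|x_t-y_t\|_{\R^d}$ with the path norm $\|\cdot\|$ implicit.
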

\begin{proof}
Similarly as above, we can see that $\pi = (X,Y)_{\#}\sP \in \ccpl(\mudata,\murecon)$. Then, by the definition of causal Wasserstein distance,  we have the estimate
\[
\CW_1(\mudata,\murecon) \leq \E_{\sP}[\Vert X-Y\Vert] =  \E_{\sP}\Big[\big\Vert X - \De_\theta(\mu_{\phi}(X) + \sigma_{\phi}(X)\varepsilon)\big\Vert\Big].
\]
\end{proof}

Next we focus on the second term  in \eqref{eq:triangle}, that is $\CW_1(\murecon,\mugen)$. 

\begin{theorem}
\label{thm:CW_data_gen}
Assume $\mugen,\murecon \in \calP_1(B_1)$, where $B_1 = \{x\in\R^{dT}\colon \Vert x \Vert \leq 1\}$. Then
    \begin{equation}
    \label{eq:CW_recon_gen}
        \CW_1(\murecon, \mugen) \leq C\sqrt{\frac{1}{2}\DKL(\mulatent| \muprior)}.
    \end{equation}
    \end{theorem}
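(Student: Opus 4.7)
The plan is to apply Lemma \ref{thm:dom} directly to the pair $(\murecon,\mugen)$ and then transfer the resulting KL bound from the data side to the latent side via the data-processing inequality. Since by hypothesis $\murecon,\mugen \in \calP_1(B_1)$, Lemma \ref{thm:dom} applies immediately and gives
\[
\CW_1(\murecon,\mugen) \;\leq\; C\sqrt{\tfrac{1}{2}\DKL(\murecon\,|\,\mugen)},
\]
with $C = 2(2^T-1)$. So the only remaining task is to bound $\DKL(\murecon|\mugen)$ by $\DKL(\mulatent|\muprior)$.

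For this I would invoke the data-processing inequality for the Kullback--Leibler divergence: for any measurable map $f$ and probability measures $\alpha,\beta$ one has $\DKL(f_{\#}\alpha \,|\, f_{\#}\beta) \leq \DKL(\alpha \,|\, \beta)$, a standard consequence of Jensen's inequality applied to the conditional density ratio given $f$. Recalling that by construction $\murecon = {\De_\theta}_{\#}\mulatent$ and $\mugen = {\De_\theta}_{\#}\muprior$, applying the inequality with $f = \De_\theta$, $\alpha = \mulatent$, $\beta = \muprior$ yields $\DKL(\murecon\,|\,\mugen) \leq \DKL(\mulatent\,|\,\muprior)$, and chaining the two estimates closes the argument.

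I do not expect a substantive obstacle here: causality of $\De_\theta$ was crucial only in Lemma \ref{lemma.L_rec} (to ensure that $(X,Y)_{\#}\sP$ is a causal coupling), whereas the present bound uses only the containment $\murecon,\mugen \in \calP_1(B_1)$ and the fact that pushforwards contract KL. An alternative route that stays within the transport framework is to take an arbitrary $\pi \in \bccpl(\mulatent,\muprior)$, push it forward by $(\De_\theta,\De_\theta)$ to a bi-causal coupling of $\murecon$ and $\mugen$ (here the causality of $\De_\theta$ would genuinely be needed to preserve the bi-causality structure), and then observe that $\mathbbm{1}_{\De_\theta(x)\neq\De_\theta(y)}\leq \mathbbm{1}_{x\neq y}$, which gives $\AVD_0(\murecon,\mugen)\leq \AVD_0(\mulatent,\muprior)$; feeding this into the chain of inequalities of Lemma \ref{thm:dom} produces the same conclusion with the same constant. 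I would nonetheless present the first, shorter route.
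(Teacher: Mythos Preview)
Your proposal is correct and matches the paper's own proof essentially step for step: apply Lemma~\ref{thm:dom} to $(\murecon,\mugen)$, then use the data-processing inequality for KL with the common pushforward map $\De_\theta$ to pass from $\DKL(\murecon|\mugen)$ to $\DKL(\mulatent|\muprior)$. Your remark that causality of $\De_\theta$ is not needed here is also accurate.
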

\begin{proof}
Lemma~\ref{thm:dom} implies
\begin{equation}
\label{eq:thm:CW_data_gen:1}
    \CW_1(\murecon, \mugen) \leq C\sqrt{\frac{1}{2}\DKL(\murecon| \mugen)}.
 \end{equation}   
Now, note that $\murecon$ and $\mugen$ originate from the same pushforward map $\De_\theta$, applied to the latent distribution $\mulatent$ and to the prior distribution $\muprior$, respectively, i.e.
\[
\murecon={\De_\theta}_\#\mulatent,\qquad \mugen={\De_\theta}_\#\muprior.
\]
Then, the data processing inequality (see e.g. \cite[Lemma 1.6]{Nutz2021Ite})  yields        \begin{equation}
    \DKL(\murecon, \mugen)\leq\DKL(\mulatent, \muprior),
    \end{equation}
which concludes the proof.
\end{proof}
Combining  \eqref{eq:triangle}, Lemma~\ref{lemma.L_rec} and Theorem~\ref{thm:CW_data_gen}, gives the following estimate for the causal Wasserstein distance between market data and generated data.
\begin{corollary}\label{coro:CW_data_gen}
 Assume $\mugen,\murecon \in \calP_1(B_1)$, where $B_1 = \{x\in\R^{dT}\colon \Vert x \Vert \leq 1\}$. Then   
    \begin{equation}
    \label{eq:CW_data_gen}
        \CW_1(\mudata, \mugen) \leq \Lrecon + C\sqrt{\frac{1}{2}\DKL(\mulatent| \muprior)}.
    \end{equation}
\end{corollary}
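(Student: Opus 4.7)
The statement is a direct consequence of the three ingredients already assembled in the excerpt, so my plan is simply to chain them together. First I would invoke the triangle inequality for $\CW_1$ stated in \eqref{eq:triangle}, namely
\[
\CW_1(\mudata,\mugen) \leq \CW_1(\mudata,\murecon) + \CW_1(\murecon,\mugen),
\]
which is permitted since $\CW_1$ satisfies the triangle inequality (as cited from \cite{Pammer2024Ano}).

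Next I would bound each of the two terms on the right-hand side separately. For the first term, Lemma~\ref{lemma.L_rec} gives directly $\CW_1(\mudata,\murecon) \leq \Lrecon$, exploiting the fact that the causal structure of $\mu_\phi$, $\sigma_\phi$ and $\De_\theta$ together with the independence of $\varepsilon$ and $X$ makes $(X,Y)_{\#}\sP$ a causal coupling between $\mudata$ and $\murecon$. For the second term, I would apply Theorem~\ref{thm:CW_data_gen}, whose hypothesis $\mugen,\murecon \in \calP_1(B_1)$ coincides with that of the corollary, to obtain
\[
\CW_1(\murecon,\mugen) \leq C\sqrt{\tfrac{1}{2}\DKL(\mulatent|\muprior)}.
\]

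Finally, summing the two bounds yields the claimed inequality
\[
\CW_1(\mudata,\mugen) \leq \Lrecon + C\sqrt{\tfrac{1}{2}\DKL(\mulatent|\muprior)},
\]
which completes the proof. There is essentially no obstacle here: the content of the corollary is purely assemblative, since the nontrivial causal-coupling argument sits inside Lemma~\ref{lemma.L_rec}, the nontrivial data-processing step sits inside Theorem~\ref{thm:CW_data_gen}, and the triangle inequality for $\CW_1$ has already been cited. The only point worth double-checking would be that the assumption $\murecon,\mugen \in \calP_1(B_1)$ is sufficient to apply Theorem~\ref{thm:CW_data_gen} as stated, but this is immediate because the theorem requires exactly the same hypothesis.
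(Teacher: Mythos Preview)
Your proposal is correct and matches the paper's approach exactly: the paper simply states that the corollary follows by combining \eqref{eq:triangle}, Lemma~\ref{lemma.L_rec} and Theorem~\ref{thm:CW_data_gen}, which is precisely the chain you spell out.
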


\begin{remark}
    The compactness assumption in Theorem~\ref{thm:CW_data_gen} and Corollary~\ref{coro:CW_data_gen} is not a constraint on our data since, in practice, we typically normalize our training data to reside within a bounded region, typically a ball.
\end{remark}

The estimate in \eqref{eq:CW_data_gen} allows us to obtain 
one sided estimates for a class of stochastic optimization problems, when employing generated rather than observed data, see Remark~\ref{rem.Vmugen} below. 

\subsection{TC-VAE Loss}
\label{sect.tcvae_loss}
Motivated by Corollary~\ref{coro:CW_data_gen}, we aim at training our TC-VAE by minimizing the two terms on the RHS of \eqref{eq:CW_data_gen}: $\Lrecon$ and $\DKL(\mulatent| \muprior)$. The first term $\Lrecon$, defined in \eqref{eq:L_recons}, is the well-known reconstruction loss and can be efficiently estimated by taking batch-wise sample expectations. However, the second term $\DKL(\mulatent| \muprior)$ involves an intractable term $\mulatent$, also called the \textit{aggregated posterior distribution}. Recall that $\mulatent = Z_{\#}\sP = (\mu_{\phi}(X) + \sigma_{\phi}(X)\varepsilon)_{\#}\sP$ where $X\sim\mudata$ and $\varepsilon \sim\calN(\mathbf{0},\mathbf{I}_{d_Z T})$ are independent. We denote the density of $\mulatent$ and $\muprior$ by $p_{\mathrm{latent}}$ and $p_{\mathrm{prior}}$, respectively. Then we have $p_{\mathrm{latent}}(\cdot) = \int q_{\phi}(\cdot|x)\mudata(dx)$, where $q_{\phi}(\cdot|x)$ is the density of $\calN(\mu_{\phi}(x),\Sigma_\phi(x))$ and $\Sigma_\phi(x)$ is the diagonal matrix with diagonal vector $\sigma_{\phi}(x)$ for $x\in \R^{dT}$. 
Note that the integral form of $\platent$ makes it computationally intractable, 
and so is $\DKL(\mulatent| \muprior)$. Luckily,  $\DKL(\mulatent| \muprior)$ can be bounded by $\int \DKL(q_{\phi}(\cdot|x)|\pprior) \mudata(dx)$, following the same arguments used in \cite{Kingma2014Aev} to bound  log-likelihood by evidence lower bound:
\begin{equation}
\label{eq:L_regu}
\begin{split}
    \DKL(\mulatent| \muprior) &= - \int \log(\pprior(z))\platent(z)dz - \sH(\platent) \\
    &= -\int \log(\pprior(z))\int q_{\phi}(z|x)\mudata(dx)dz - \sH(\platent) \\
    &= \int \DKL(q_{\phi}(\cdot|x)|\pprior) \mudata(dx)+ \int \sH(q_{\phi}(\cdot|x))\mudata(dx) - \sH\bigg(\int q_{\phi}(\cdot|x)\mudata(dx)\bigg)\\
    &\leq \int \DKL(q_{\phi}(\cdot|x)|\pprior) \mudata(dx) =  \E_{\sP}[\DKL(q_{\phi}(\cdot|X)|\pprior)] \eqqcolon \Llatent,
\end{split}
\end{equation}
where last inequality follows by Jensen's inequality, and $\Llatent$ stands for ``latent loss". Here again, for  simplicity, we omit $\phi$ in the notation of $\Llatent$. 
Now, instead of minimizing $\DKL(\mulatent| \muprior)$, we minimize $\Llatent$, which is tractable. More precisely,
we follow the loss design in $\beta$-VAE in \cite{Higgins2016bvL} by setting $\calL_{\theta,\phi}:=\Lrecon + \beta \Llatent$ and solving
\begin{equation*}
   \min_{\theta,\phi} \calL_{\theta,\phi},
\end{equation*}
where $\beta$ is an hyper-parameter. We now focus on computing $\calL_{\theta,\phi}$. Notice that $q_{\phi}(\cdot|x)$ is the density of $\mathrm{Law}(Z|X=x)$, so we can rewrite
\begin{equation*}
   \Llatent = \E_{\sP}\Big[\E_{Z\sim q_{\phi(\cdot|X)}}\big[\log q_{\phi}(Z|X) - \log \pprior(Z)\big]\Big] = \E_{\sP}\Big[\log q_{\phi}(Z|X) - \log \pprior(Z)\Big].
\end{equation*}
This yields
\begin{equation*}
    \calL_{\theta,\phi} = \E_{(X,\epsilon)\sim\mudata\otimes \calN(\mathbf{0},\mathbf{I}_{d_Z T}), Z = \mu_{\phi}(X) + \sigma_{\phi}(X)\varepsilon}\Big[\big\Vert X - \De_\theta(Z)\big\Vert + \beta \,\Big(\log q_{\phi}(Z|X) - \log \pprior(Z)\Big)\Big].
\end{equation*}
\paragraph{Sample-based loss.} 
In practice, we have no access to $\mudata$, but only to finitely many samples from observation. We let $x^{(i)}\in\R^{dT}, i=1,\ldots,n$, be i.i.d.~samples from $\mudata \in \calP_1(\R^{dT})$, and $\muemp = \frac{1}{n}\sum_{i=1}^{n}\delta_{x^{(i)}}$ be the corresponding empirical distribution. So we take expectation under $\muemp$ instead of $\mudata$ and minimize a sample-based version of the loss:
\begin{equation*}
    \calL^n_{\theta, \phi} =  \frac{1}{n}\sum_{i=1}^{n}\big\Vert x^{(i)} - \De_\theta(z^{(i)})\big\Vert+ \beta\,\frac{1}{n}\sum_{i=1}^{n} \Big(\log q_{\phi}(z^{(i)}|x^{(i)}) - \log \pprior(z^{(i)})\Big),
\end{equation*}
where $\{(x^{(i)},\epsilon^{(i)})\}_{i=1,\ldots,n}$  are i.i.d. samples from $\mudata \otimes \calN(\mathbf{0},\mathbf{I}_{d_Z T})$, and $z^{(i)} = \mu_{\phi}(x^{(i)}) + \sigma_{\phi}(x^{(i)})\epsilon^{(i)}$ for $i=1,\dots,n$.

\subsection{Flow-based prior distribution} 
\label{sect.tcvae_prior}
In the standard VAE, the prior distribution $\muprior$ is a fix standard Gaussian prior, hence non-learnable, and the regularization term pushes the aggregated posterior to match it. In practice, the matching is not satisfied because  $\Lrecon$ forces the encoder to be irregular and, in the end, it is almost impossible to precisely match a fixed-shaped prior. As a result, one obtains `holes', namely regions in the latent space where the aggregated posterior assigns low probability while the prior assigns relatively high probability. This is an issue in generation because sampling from the prior, from the hole, may result in a sample that is of extremely low quality \cite{Rezende2018Tv}. It is even more problematic in our application, because we are not only interested in generating samples on the manifold where empirical data lies, but also generating a distribution close to the empirical measure under some strong probabilistic metric. Therefore, in the latent space, we not  only require that the aggregated posterior has no holes, but actually need that the aggregated posterior is close to the prior. 

Sampling directly from the posterior distribution at first glance seems to solve this issue, but it potentially leads to  overfitting of the empirical data. Therefore, we consider a learnable prior to match the posterior distribution. There are many options such as mixture of Gaussians \cite{dilokthanakul2016deep}, VampPrior \cite{tomczak2018vae}, generative topographic mapping \cite{bishop1998gtm}, flow-based prior \cite{chen2016variational}, etc. In our case, we use the flow-based prior because of its high flexibility with time series data. Let $Z_0$ be a random variable on $\R^{d_{Z} T}$ 
with density $p_0(z_0)$. Let $f_1\colon \R^{d_Z T} \to \R^{d_Z T}$ be invertible, set $Z_1 = f_1(Z_0)$ and denote its density by $p_1(z_1)$. Then, for all $z_1 = f_1(z_0)$, $z_0 \in \R^{d_Z T}$,
\begin{equation*}
    p_1(z_1) = p_0(z_0)\Big\vert \mathrm{det} \frac{dz_0}{dz_1} \Big\vert = p_0(z_0)\Big\vert \mathrm{det}\nabla f_1^{-1}(z_1)\Big\vert,
\end{equation*}
where $\nabla f_1^{-1}(z_1)$ is the Jacobian of $f_1^{-1}$ at $z_1$. Now, let $f_1,\dots,f_N$ be a sequence of invertible functions, and set $f = f_1\circ\dots\circ f_N$. Let $Z_{j} = f_{j}(Z_{j-1})$ and denote its density by $p_{j}(z_j)$, for $j=1,\dots,N$. Then, for all $j=1,\dots,N$, $z_j = f_j(z_{j-1})$, $z_0 \in \R^{d_Z T}$, we have
\begin{equation*}
    p_j(z_j) = p_{j-1}(z_{j-1}) \Big\vert \mathrm{det}\nabla f_j^{-1}(z_{j})\Big\vert .
\end{equation*}
Given such a chain of probability density functions, we have 
\begin{equation*}
    \log(p_N(z_N)) = \log\bigg( p_0(z_{0}) \prod_{j=1}^{N}\Big\vert \mathrm{det} \nabla f_j^{-1}(z_{j})\Big\vert \bigg) = \log(p_0(z_{0})) + \sum_{j=1}^{N} \log\bigg(\Big\vert \mathrm{det} \nabla f_j^{-1}(z_{j})\Big\vert \bigg).
\end{equation*}
Next, we parameterize $f^{\lambda}_1,\dots,f^{\lambda}_N$ by a parameter $\lambda$ and let $f^{\lambda} = f_1^{\lambda}\circ\dots\circ f_N^{\lambda}$. Then $Z_N = f^{\lambda}(Z_0)$ has a learnable density $p_\lambda(z)$. 

\paragraph{Sample-based loss with flow-based prior distribution.} 
Setting $\pprior = p_\lambda$ in our TC-VAE, we end up with a learnable prior, and $\log \pprior(z)$ in the regularization term $\Llatent$ becomes 
\begin{equation*}
    \log \pprior(z) = \log(p_0(z_{0})) + \sum_{j=1}^{N} \log\bigg(\Big\vert \mathrm{det} \nabla{f_j^{\lambda}}^{-1}(z_{j})\Big\vert \bigg),
\end{equation*}
where $z = z_N$, $z_j = f_j^{\lambda}(z_{j-1})$ for all $j=1,\dots,N$, $z_0 \in \R^{d_Z T}$. 
We end up minimizing the following loss:
\begin{equation*}
    \calL^n_{\theta, \phi, \lambda} =  \frac{1}{n}\sum_{i=1}^{n}\big\Vert x^{(i)} - \De_\theta(z^{(i)})\big\Vert + \beta\,\frac{1}{n}\sum_{i=1}^{n}\log q_{\phi}(z^{(i)}|x^{(i)}) - \beta\,\frac{1}{n}\sum_{i=1}^{n}\bigg(\log(p_0(z_{0}^{(i)})) + \sum_{j=1}^{N} \log\Big(\Big\vert \mathrm{det} \nabla{f_j^{\lambda}}^{-1}(z_{j}^{(i)})\Big\vert \Big)\bigg),
\end{equation*}
where $z_N^{(i)} = z^{(i)}$, $z_j^{(i)} = f_j^{\lambda}(z_{j-1}^{(i)})$ for all $i=1,\dots,N$. In practice, we choose the sequence of invertible transformations from the class of real-valued non-volume-preserving  (real NVP) transformations; see \cite{Dinh2016Deu}.

\begin{remark}
In the flow-based prior, we generalize the neural SDE \cite{Kidger2021Nsa} with noise process driven by a richer class of distribution beyond the Gaussian noise.
\end{remark}

\section{Causal Robustness}
\label{sect.rob}
In this section, we show why causal (resp.~adapted) Wasserstein distance is a suitable way to measure closeness when considering a broad class of stochastic optimization problems. This is done by establishing one-sided (resp.~two-sided) robustness of controlled problems under this distance. Let $\calQ \colon \R^{dT}\times \R^{dT} \to \R$ be a measurable function. For $\mu \in \calP_1(\R^{dT})$, we consider the following  stochastic optimization problem:
\begin{equation}\label{eq.emp}
    \calV(\mu) := \inf_{H \in \calH} V(H,\mu), \quad V(H,\mu) =  \int \calQ(x,H(x))\mu(dx) = \E_{X\sim \mu}[\calQ(X,H(X))],
\end{equation}
where $\Hcal$ is a closed convex subset of
 the set $\Kcal$ of adapted strategies (controls):
\[
\Kcal:=\{H = \left(H_t\right)_{t=1}^{T} \colon H_t(x) = H_t(x_{1:t}), H_t\colon \R^{dT} \to \R^{d}\; \text{measurable}\}.
\]
Here, with an abuse of notation, we write $H_t(x_{1:t})$, meaning that the function $H_t$ defined on $\R^{dT}$ only depends on the first $t$ coordinates of the argument, i.e. $H$ is adapted.
\begin{theorem}
\label{thm:causal_risk}
Let $L\geq 0$ and $\calQ$ be s.t. $(x,h)\mapsto\calQ(x,h)$ is uniformly $L$-Lipschitz in $x$ and convex in $h$. Then, for all $\mu, \nu \in \calP_1(\R^{dT})$,
    \begin{equation*}
    \begin{split}
        \calV(\mu) - \calV(\nu) &\leq L\,\CW_1(\mu,\nu),\\[0.1cm]
        \vert \calV(\mu) - \calV(\nu) \vert &\leq L\,\AW_1(\mu,\nu).
    \end{split}
    \end{equation*}
\end{theorem}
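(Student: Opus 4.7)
The plan is to bound $\calV(\mu) - \calV(\nu)$ by transporting a near-optimal strategy for $\nu$ onto the $\mu$-side through a near-optimal causal coupling, using causality to preserve adaptedness and convexity to compare values. Concretely, for any $\varepsilon > 0$, I would first pick $H^\nu \in \calH$ with $V(H^\nu,\nu) \leq \calV(\nu) + \varepsilon$ and a coupling $\pi \in \ccpl(\mu,\nu)$ with transport cost within $\varepsilon$ of $\CW_1(\mu,\nu)$. Then I would define a candidate strategy on the $\mu$-side by
\begin{equation*}
    H^\mu_t(x_{1:t}) := \E_\pi\big[H^\nu_t(Y_{1:t}) \,\big|\, X_{1:t} = x_{1:t}\big], \qquad t = 1,\dots,T.
\end{equation*}
The key role of causality is that, since $\pi \in \ccpl(\mu,\nu)$, the conditional law of $Y_{1:t}$ given the entire path $X_{1:T}$ coincides with its conditional law given $X_{1:t}$; hence $H^\mu_t$ is genuinely a function of $x_{1:t}$ only (so $H^\mu \in \Kcal$), and moreover the vector identity $H^\mu(x) = \E_\pi[H^\nu(Y) \,|\, X = x]$ holds.

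Next, I would apply Jensen's inequality to the convex map $h \mapsto \calQ(x,h)$ pointwise in $x$, then use the $L$-Lipschitz bound $\calQ(x,h) \leq \calQ(y,h) + L\sum_{t=1}^T \|x_t-y_t\|_{\R^d}$, to chain
\begin{equation*}
    V(H^\mu,\mu) \leq \int \calQ(x,H^\nu(y))\,\pi(dx,dy) \leq V(H^\nu,\nu) + L \int \sum_{t=1}^T \|x_t-y_t\|_{\R^d}\,\pi(dx,dy).
\end{equation*}
Combining with the choices of $H^\nu$ and $\pi$ and letting $\varepsilon \downarrow 0$ yields $\calV(\mu) \leq \calV(\nu) + L\,\CW_1(\mu,\nu)$, which is the first inequality.

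For the symmetric bound I would exploit that the coordinate swap of a bi-causal coupling is again bi-causal, so $\AW_1(\mu,\nu) = \AW_1(\nu,\mu)$, and that bi-causal couplings are in particular causal, so $\CW_1(\mu,\nu) \leq \AW_1(\mu,\nu)$ and likewise $\CW_1(\nu,\mu) \leq \AW_1(\mu,\nu)$. Applying the causal inequality in both directions then gives $|\calV(\mu) - \calV(\nu)| \leq L\,\AW_1(\mu,\nu)$.

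The main obstacle I anticipate is verifying that $H^\mu$ actually lies in $\calH$, not merely in $\Kcal$: this requires that the closed convex subset $\calH$ is stable under the conditional-expectation operation, which I would handle by approximating the kernel of $\pi$ by finitely-supported kernels, writing the resulting $H^\mu$ as a finite convex combination of elements of $\calH$ (using convexity), and then passing to the limit using closedness. Secondary technicalities are the measurability of $H^\mu$ via a regular disintegration of $\pi$, the $\mu$-integrability ensuring $V(H^\mu,\mu)$ is well-defined, and the existence of $\varepsilon$-optimal causal couplings, all of which are standard in the causal optimal transport setting on Polish spaces with finite first moment.
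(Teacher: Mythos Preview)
Your proposal is correct and follows essentially the same route as the paper: both transport a (near-)optimal strategy from the $\nu$-side to the $\mu$-side by conditional expectation under a causal coupling, use causality to verify adaptedness of the resulting strategy, apply Jensen's inequality via convexity of $\calQ$ in $h$, and bound the remainder by the Lipschitz constant times the transport cost. Your treatment is in fact slightly more careful than the paper's on two points: you work with $\varepsilon$-optimal couplings and strategies rather than asserting existence of optimizers, and you explicitly flag that membership of the averaged strategy in $\calH$ (as opposed to merely $\calK$) requires the closed convexity of $\calH$, a step the paper simply asserts.
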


\begin{proof} 
Let $\pi \in \ccpl(\mu,\nu)$ be an optimal coupling for $\CW_1(\mu,\nu)$.
 Notice that, for all $G \in \calH$, we have
    \begin{equation}
    \label{eq:thm:causal_risk:1}
    \begin{split}
        - V(G,\nu) &= - \int \calQ(y,G(y))\nu(dy) = -\int  \calQ(y,G(y))\pi(dx,dy)     \\
        &=\int \left[\calQ(x,G(y)) -\calQ(y,G(y))\right]\pi(dx,dy)-\int \calQ(x,G(y))\pi(dx,dy) .
    \end{split}
    \end{equation}
    We first estimate the first term on the RHS of \eqref{eq:thm:causal_risk:1}. By the Lipschitz property of $\calQ$, we have 
    \begin{equation}\label{eq:thm:causal_risk:2}
        \begin{split}
            \int \left[\calQ(x,G(y)) -\calQ(y,G(y))\right]\pi(dx,dy) &\leq L \int \Vert x - y\Vert \pi(dx,dy) = L \cdot \CW_1(\mu,\nu).
        \end{split}
    \end{equation}
    Next, we estimate the second term on the RHS of \eqref{eq:thm:causal_risk:1}. By convexity of $\calQ$ and Jensen's inequality, we have
    \begin{equation*}
    \begin{split}
         -\int \calQ(x,G(y))d\pi &= -\int\int  \calQ(x,G(y))\pi_{x}(dy)\mu(dx)\\
         &\leq -\int  \calQ\!\left(x,\int G(y)\pi_{x}(dy)\right)\mu(dx) = -\int    \calQ\!\left(x,\tilde{G}(x)\right)\mu(dx),
    \end{split}
    \end{equation*}
   where $\tilde{G} = (\tilde{G}_t)_{t=1}^{T}\colon \R^{dT} \to \R^{dT}$ is defined as $\tilde{G}(x) \defeq\int G(y)\pi_{x}(dy)$. Then, since $\pi \in \ccpl(\mu,\nu)$, for all $x = x_{1:T} \in \R^{dT}$ and $t=1,\dots,T$, we have
    \begin{equation*}
        \tilde{G}_t(x) = \int G_t(y)\pi_{x}(dy) = \int G_t(y_{1:t})\pi_{x}(dy_{1:t}) = \int G_t(y_{1:t})\pi_{x_{1:t}}(dy_{1:t}) = \tilde{G}_t(x_{1:t}),
    \end{equation*}
    where the second equality follows by adaptedness of $G$, and the third one by causality of $\pi$. This implies that $\tilde{G}$ is also an adapted strategy, and thus $\tilde{G} \in \calH$. 
    Therefore, for the second term on the RHS of \eqref{eq:thm:causal_risk:1}, we have
    \begin{equation}\label{eq:thm:causal_risk:3}
        -\int \calQ(x,G(y))\pi(dx,dy) \leq  - V(\tilde{G},\mu) \leq  - \calV(\mu).
    \end{equation}
    Combining \eqref{eq:thm:causal_risk:1}, \eqref{eq:thm:causal_risk:2} and \eqref{eq:thm:causal_risk:3}, for all $H,G\in\calH$ we have that
    \begin{equation*}
         - V(G,\nu) \leq L \cdot \CW_1(\mu,\nu) - \calV(\mu),
    \end{equation*}
    so that
    \[
    \calV(\mu)- V(G,\nu) \leq L \cdot \CW_1(\mu,\nu).
    \]
    Then, by the arbitrarity of $G\in\calH$, we conclude that
    \begin{equation*}
        \calV(\mu) - \calV(\nu) \leq L \cdot \CW_1(\mu,\nu).
    \end{equation*}
    By symmetry, the claimed inequality for $\AW_1$ holds as well. This completes the proof.
\end{proof}

\begin{remark}\label{rem.Vmugen} Thanks to the estimate obtained in
Corollary~\ref{coro:CW_data_gen}, Theorem~\ref{thm:causal_risk} provides one-sided estimates for stochastic optimization problems as in \eqref{eq.emp}, when employing generated rather than observed data. 
Specifically, for $\mugen,\murecon \in \calP_1(B_1)$, $B_1 = \{x\in\R^{dT}\colon \Vert x \Vert \leq 1\}$, and $\calQ$ as in Theorem~\ref{thm:causal_risk}, we have
\begin{equation*}
\begin{split}
\calV(\mudata) & \leq \calV(\mugen) + L\bigg(\Lrecon + C\sqrt{\frac{1}{2}\DKL(\mulatent| \muprior)}\bigg).
\end{split}
\end{equation*}
This means that, if we solve the minimization problem on the samples we generate, the optimal value, together with the reconstruction loss and the latent loss, gives a conservative upper bound of the optimal value $\calV(\mudata)$ under the data distribution. 
\end{remark}

In order to provide an example of application of the above theorem, let us recall the definitions of Value at Risk and Average Value at Risk.
\begin{definition}[VaR and AVaR]
Let $\alpha \in (0,1)$ and $U$ be a real random variable on $(\Omega, \calF, \sP)$. The Value-at-Risk (VaR) of $U$ at confidence level $\alpha$ is defined as the negative $\alpha$-quantile of $U$ under $\sP$:
\begin{equation*}
    \var_{\alpha}(U) = -\inf\{x\in\R \colon \sP(U\leq x) \geq \alpha\}.
\end{equation*}  
The Average Value at Risk (or Expected Shortfall) of $U$ at confidence level $\alpha$ is defined as the average of the VaR below $\alpha$: 
\begin{equation*}
\avar_{\alpha}(U) = \es_{\alpha}(U) = \frac{1}{\alpha}\int_0^\alpha \var_u(U)du= \min_{z \in \mathbb{R}}\left\{\frac{1}{\alpha}E[(z-U)_{+}] -z\right\}.
\end{equation*}
\end{definition}

\begin{corollary}
\label{cor:causal_risk_avar}
Let $L\geq 0$ and $\calQ$ be s.t. $(x,h)\mapsto\calQ(x,h)$ is uniformly $L$-Lipschitz in $x$ and concave in $h$. For $\alpha \in (0,1)$ and $\mu \in \calP_1(\R^{dT})$, define 
    \begin{equation*}
        \calR_{\alpha}(\mu) = \inf_{H\in\calK}\es_{\alpha}(\calQ(X,H(X))),\quad \text{where } X=(X_1,\ldots,X_T) \sim \mu.
    \end{equation*}
    Then, for all $\mu, \nu \in \calP_1(\R^{dT})$,
    \begin{equation}\label{eq.es_est}
    \begin{split}
        \calR_{\alpha}(\mu) - \calR_{\alpha}(\nu) &\leq \frac{L}{\alpha} \CW_1(\mu,\nu),\\
        \vert \calR_{\alpha}(\mu) - \calR_{\alpha}(\nu) \vert &\leq \frac{L}{\alpha} \AW_1(\mu,\nu).
    \end{split}
    \end{equation}
\end{corollary}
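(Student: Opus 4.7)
The plan is to reduce Corollary~\ref{cor:causal_risk_avar} to Theorem~\ref{thm:causal_risk} by linearizing the Expected Shortfall via the Rockafellar--Uryasev dual representation given in the definition of $\es_{\alpha}$, namely
\[
\es_{\alpha}(U) = \min_{z \in \R}\Big\{\tfrac{1}{\alpha}\E\big[(z-U)_+\big] - z\Big\}.
\]
This representation turns the outer nonlinear risk functional into an ordinary expectation at the cost of one extra scalar variable $z$, which can be treated as a trivially adapted deterministic control.

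Concretely, I would set $\tilde{\calQ}(x,(h,z)) \defeq \tfrac{1}{\alpha}\big(z-\calQ(x,h)\big)_+ - z$ and use the representation above together with Fubini to rewrite
\[
\calR_{\alpha}(\mu) \;=\; \inf_{(H,z)\,\in\,\calK\times\R}\int \tilde{\calQ}\big(x,(H(x),z)\big)\,\mu(dx).
\]
So Corollary~\ref{cor:causal_risk_avar} becomes a direct instance of Theorem~\ref{thm:causal_risk}, with cost $\tilde\calQ$, enlarged control set $\tilde\calH = \calK \times \R$, and Lipschitz constant $L/\alpha$, provided we check the two hypotheses of that theorem.

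For the hypotheses: since $(\cdot)_+$ is $1$-Lipschitz and $\calQ$ is $L$-Lipschitz in $x$,
\[
\big|\tilde{\calQ}(x,(h,z))-\tilde{\calQ}(y,(h,z))\big| \le \tfrac{1}{\alpha}\big|\calQ(x,h)-\calQ(y,h)\big| \le \tfrac{L}{\alpha}\|x-y\|.
\]
For convexity in the control variable $(h,z)$, note that $(h,z)\mapsto z-\calQ(x,h)$ is convex (linear in $z$ and convex in $h$ because $\calQ$ is concave in $h$); composing with the convex non-decreasing map $(\cdot)_+$ preserves convexity; subtracting the linear term $z$ also preserves it. Hence $\tilde\calQ$ meets the structural assumptions of Theorem~\ref{thm:causal_risk}.

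The only delicate point -- and where I expect the main friction -- is to justify that augmenting $\calK$ by the scalar parameter $z$ is harmless for the averaging argument in the proof of Theorem~\ref{thm:causal_risk}. The key step there defines $\tilde G(x)=\int G(y)\pi_x(dy)$ and uses causality of $\pi$ together with adaptedness of $G$ to conclude that $\tilde G$ is adapted. Applied to a pair $(G,z)$, the averaging leaves the constant component $z$ untouched and reproduces the original argument on the $G$-component, so adaptedness of $(\tilde G, z)$ is immediate. With this observation, the proof of Theorem~\ref{thm:causal_risk} applies verbatim to $\tilde\calQ$ and $\tilde\calH$, yielding both estimates in \eqref{eq.es_est}; the symmetric inequality for $\AW_1$ then follows by the same symmetry argument as in Theorem~\ref{thm:causal_risk}.
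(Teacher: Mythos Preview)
Your proposal is correct and follows essentially the same route as the paper: both linearize $\es_\alpha$ via the Rockafellar--Uryasev dual, check that the resulting cost is $L/\alpha$-Lipschitz in $x$ and convex in the augmented control, and then invoke Theorem~\ref{thm:causal_risk}. The only cosmetic difference is bookkeeping: the paper embeds the scalar $z$ by enlarging the state and control dimensions to $\R^{(d+1)T}$ (prepending a constant coordinate to $X$ and a constant coordinate $z$ to each $H_t$) so that Theorem~\ref{thm:causal_risk} applies verbatim, whereas you keep the state space fixed and observe directly that the averaging step $\tilde G(x)=\int G(y)\pi_x(dy)$ in the proof of Theorem~\ref{thm:causal_risk} leaves the constant component $z$ unchanged.
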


\begin{proof}
By the dual representation of the expected shortfall, we can rewrite $\calR_{\alpha}(\mu)$ as a stochastic optimization problem on an enlarged space: 
\begin{equation*}
    \calR_{\alpha}(\mu) = \inf_{H\in\calK} \inf_{z\in\R}\E\big[\frac{1}{\alpha}(z - \calQ(X,H(X)))_{+} - z\big] = \inf_{\tilde{H}\in\tilde{\calH}} \E[\tilde{\calQ}_{\alpha}(\tilde{X},\tilde{H}(\tilde{X}))],
\end{equation*}
with
\begin{align*}
        \tilde{\calQ}_{\alpha}(\tilde{x},\tilde{h}) &= \frac{1}{\alpha}(\tilde{h}_1^{(1)} - \calQ(\tilde{x}^{(2:d+1)},\tilde{h}^{(2:d+1)}))_{+} - \tilde{h}_1^{(1)},\\
        \tilde{X} &= ([1, X_t]^\top)_{t=1}^{T} \in \R^{(d+1)T},\\
        \tilde{\calH} &= \{\tilde{H} \colon \tilde{H}_t(\tilde{x}) = [z, H_t(\tilde{x}^{(2:d+1)})]^\top, {H\in\calK}, z\in\R\},
\end{align*}
where, for $v=[v_1,\ldots,v_m]^T$, we use the notation $v^{(i)}=v_i$ and $v^{(i:j)}=(v_i,\ldots,v_j)$.
On the one hand, $\tilde{\calQ}_{\alpha}$ is uniformly $\frac{L}{\alpha}$-Lipschitz in the first argument, because $u \mapsto \frac{1}{\alpha}u_{+}$ is $\frac{1}{\alpha}$-Lipschitz and $\calQ$ is uniformly $L$-Lipschitz in the first argument. On the other hand, $\tilde{Q}_{\alpha}$ is convex in the second argument, because $u \mapsto \frac{1}{\alpha}u_{+}$ is convex and $-Q$ is convex in the second argument. Therefore, we can apply Theorem~\ref{thm:causal_risk} and complete the proof. 
\end{proof}
A concrete and practical example of a function $Q$ satisfying the assumptions in Theorem~\ref{thm:causal_risk} is the profit and loss function for bounded strategies.
\begin{example}[P\&L]
Consider $\calQ(x, h) = \sum_{t=1}^{T-1} h_t(x_{t+1} - x_{t}).$ Then $\calQ$ is linear  in $h$. Also notice that
\begin{equation*}
    \Big\Vert \sum_{t=1}^{T-1} h_t(x_{t+1} - x_{t}) - \sum_{t=1}^{T-1} h_t(x'_{t+1} - x'_{t})\Big\Vert \leq \sum_{t=1}^{T-1}\Vert h \Vert_{\infty}(\Vert x'_{t+1} - x_{t+1}\Vert + \Vert x'_{t} - x_{t}\Vert) \leq 2\Vert h \Vert_{\infty} \Vert x - x' \Vert.
\end{equation*}
For $B \geq 0$, then, over all $h \in \R^{dT}$ s.t. $\Vert h \Vert_{\infty} \leq B$, $\calQ$ is uniformly $2B$-Lipschitz in $x$. Therefore, for all $\alpha\in (0,1)$, if we consider the risk minimization problem with $B$-bounded strategies:
\begin{equation*}
    \calR_{\alpha}^{B}(\mu) = \inf_{H\in\calH_B}\es_{\alpha}(\calQ(X,H(X))) = \inf_{H\in\calH_B}\es_{\alpha}\big(\sum_{t=1}^{T-1}H_t(X_{1:t})(X_{t+1} - X_t)\big),
\end{equation*}
where $\calH_{B} = \{{H\in \calK} \colon \Vert H \Vert_{\infty} \leq B\}$, then, by Corollary~\ref{cor:causal_risk_avar}, the following holds true: 
\begin{equation}\label{eq.es_est2}
\begin{split}
    \calR_{\alpha}^{B}(\mu) - \calR_{\alpha}^{B}(\nu) &\leq \frac{2B}{\alpha} \CW_1(\mu,\nu),\\
    \vert \calR_{\alpha}^{B}(\mu) - \calR_{\alpha}^{B}(\nu) \vert &\leq \frac{2B}{\alpha} \AW_1(\mu,\nu).
\end{split}
\end{equation}
\end{example}

To conclude, while the Wasserstein distance is not strong enough to guarantee closeness in the performance of stochastic optimization problems (see \cite{Pflug2014Mso}), we show that the causal Wasserstein distance can control the closeness in a Lipschitz fashion. The above results can be regarded as asymmetric versions of robustness results known for the adapted Wasserstein distance; see \cite{BackhoffVeraguas2020AWd}.

\section{Experiments}
\label{sec.exp}
In this section, we test the generative capabilities  of TC-VAE in three major aspects. First, we compare the generated data and market data under weak metrics \cite{Rachev2013Tmo}. These include financial statistics \cite{Cont2001Epo}, Wasserstein distance \cite{Arjovsky2017WGA}, and different maximum mean discrepancies \cite{Gretton2012Akt}. Second, we compare the generated data and market data under adapted metrics \cite{BackhoffVeraguas2020AWd}. These include adapted Wasserstein distance \cite{BackhoffVeraguas2020AWd} and the optimal value of multistage optimization problems, like portfolio optimization \cite{Forsyth2022MPM}, utility maximization \cite{Merton1975Oca}, and optimal stopping \cite{Becker2019Dos}. Finally, we evaluate the diversity in generated data compared to market data. This tests how much we enlarge the market dataset with new samples. Throughout this section, we also refer to market data as real paths and to generated data as fake paths.

\subsection{Synthetic data}

\subsubsection{Black-Scholes model}
\label{sect.bs}
The Black-Scholes model \cite{Black1973Tpo} is the most renowned model in mathematical finance. Various stochastic optimization problems have analytic optimal solutions under this model \cite{Li2000Odp, Merton1975Oca}. This provides benchmarks for the comparison between market and generated data. Let $(S^{\mathrm{BS}}_t)_{t\geq 0}$ be defined by $S^{\mathrm{BS}}_0 = 1$ and $dS^{\mathrm{BS}}_t = S^{\mathrm{BS}}_t(\mu dt + \sigma dW_t)$ for all $t\geq 0$, with drift $\mu = 0.1$, volatility $\sigma = 0.2$, and where $(W_t)_{t\geq 0}$ is the Wiener process. Since we work in discrete time, we let $\Delta t = 1/12$, $T=5$, $N_T = T/\Delta t$ to model monthly prices over an investment horizon of $5$ years. We choose $N=1000$ to be the number of samples in market data, reflecting the scarcity of data in practice \cite{Buehler2020Add}. 

\subsubsection*{Evaluation under weak metrics}
First, we visually compare real paths and fake paths to provide a proof-of-concept for TC-VAE. Figure~\ref{fig:bs_real_fake} illustrates that fake paths are visually indistinguishable from real paths. 
\begin{figure}[H]
\centering
\includegraphics[width=1\textwidth]{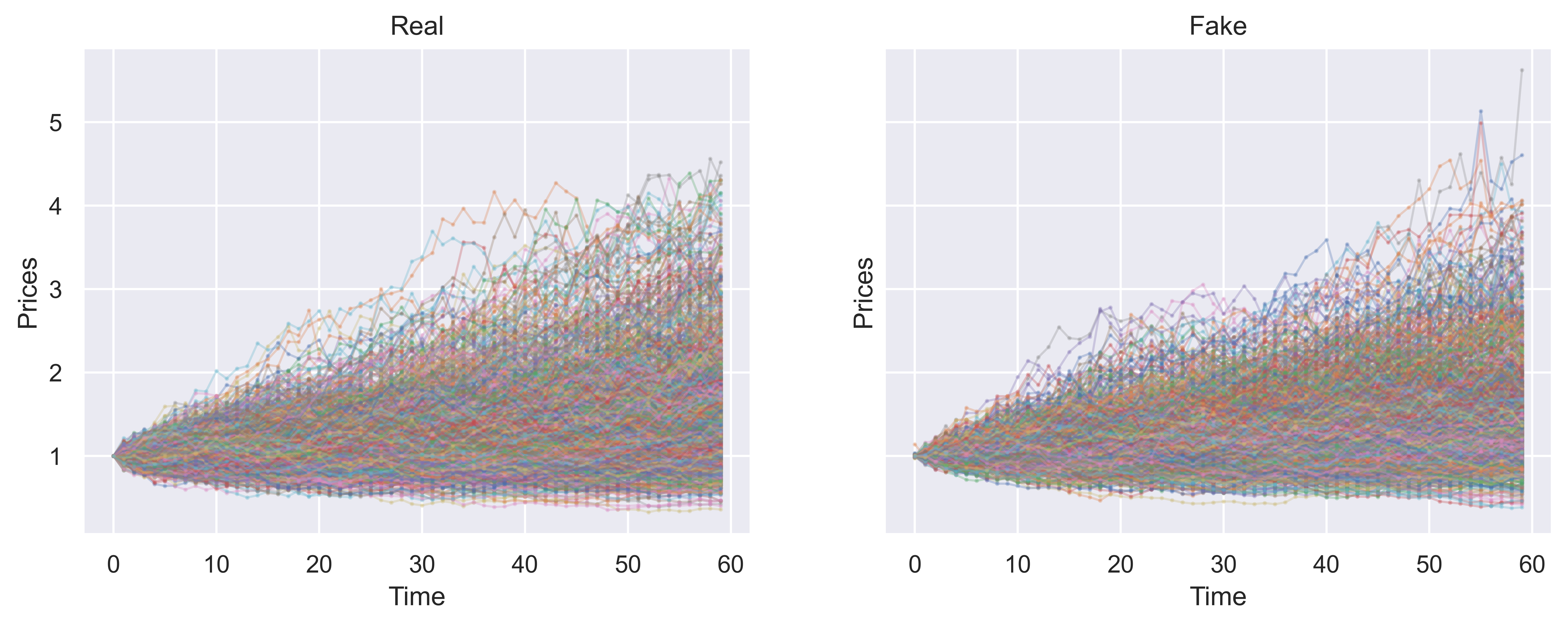}
\caption{Illustration of real paths from a discretized Black-Scholes model (left) compared to fake paths generated from the TC-VAE model (right).}
\label{fig:bs_real_fake}
\end{figure}
Then we compare the marginal histograms between real and fake paths. Figure~\ref{fig:bs_real_fake_marginal} shows that real and fake paths are close in one-dimensional marginal distributions.
\begin{figure}[H]
\centering
\includegraphics[width=\textwidth]{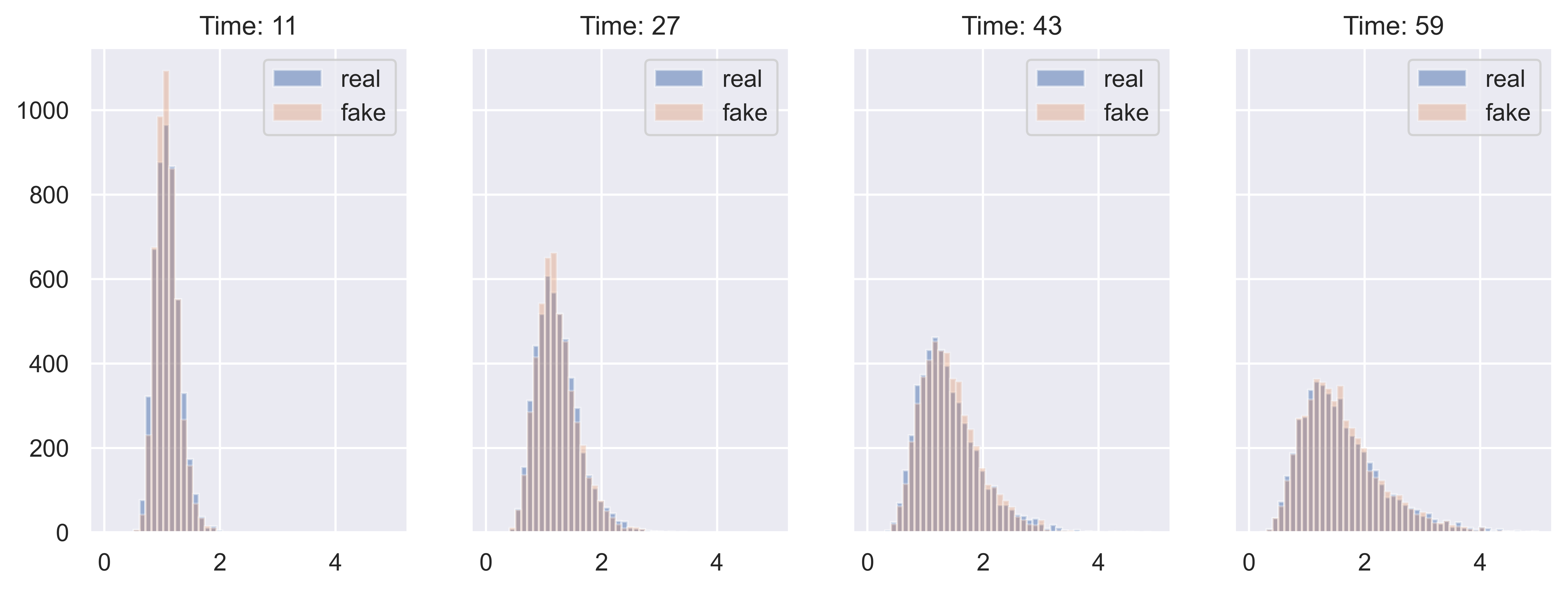}
\caption{Visualization of marginal distributions at different time slices for real paths from a discretized Black-Scholes model (blue) compared to fake paths generated from the TC-VAE model (orange).}
\label{fig:bs_real_fake_marginal}
\end{figure}
Furthermore, we compare drift and volatility between real and fake paths. For sample paths $(S^{(n)}_{j\Delta t})_{\substack{j=0,\dots,N_T\\\!\!\!\!n = 1,\dots, N}}$, we compute the log-paths and the volatility: 
\[
\left(\log S^{(n)}_{j\Delta t}\right)_{\substack{j=0,\dots,N_T\\\!\!\!\!n = 1,\dots, N}}, \quad \left(\sqrt{\frac{1}{N_T \Delta t}\sum_{j=1}^{N_T}\left(\log S^{(n)}_{j\Delta t} - \log S^{(n)}_{(j-1)\Delta t}\right)^2}\,\,\right)_{n=1,\dots,N}.
\]
We compare the log-path distribution and the volatility distribution for both real and fake paths; see Figure~\ref{fig:bs_drift_volatility}. Overall real and fake paths are close in drift and volatility.
\begin{figure}[H]
    \centering
    \includegraphics[width=\linewidth]{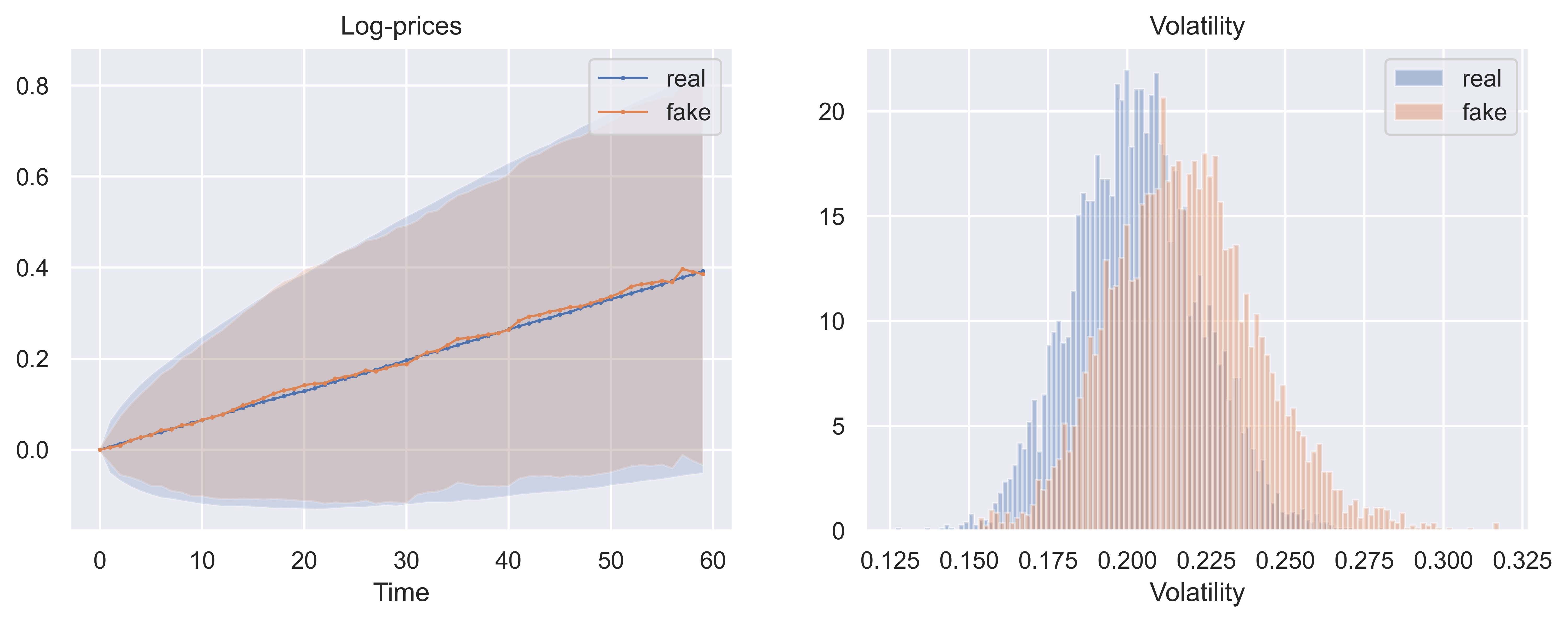}
    \caption{The left-hand side visualizes the log-paths for real prices (blue) compared to fake prices (orange). The solid lines represent the mean and the widths of shadow areas represent the standard deviation. The right-hand side visualizes the histogram of the volatility of real paths (blue) compared to fake paths (orange).}
    \label{fig:bs_drift_volatility}
\end{figure}

Next, we compute the sliced Wasserstein distance, which is a principled method to simultaneously compare all one-dimensional  projected distributions between two measures (see \cite{Kolouri2019Gsw}). We compute the sliced Wasserstein distance between real and fake paths with $10$ realization. To benchmark, we compute the sliced Wasserstein distance between real and control paths, where control paths are discretized Black-Scholes paths different from real paths only in volatility. Moreover, we conduct the same evaluation under Gaussian kernels MMD \cite{Gretton2012Akt} and signature MMD \cite{Chevyrev2018Smt}, which are widely used to evaluate the discrepancy between probability measures \cite{Li2017MgT,Ni2021SWG}. In Figure~\ref{fig:mmd_swd}, we show that real and fake paths are relatively close in sliced Wasserstein distance, Gaussian MMD, and signature MMD. As a comparison, we train the unconditional Sig-VAE introduced in \cite{chung2024generative} using the same Black-Scholes distribution. We utilize the code from the authors’ repository available at \url{https://github.com/luchungi/Generative-Model-Signature-MMD}.

\begin{figure}[H]
    \centering
    \includegraphics[width=\linewidth]{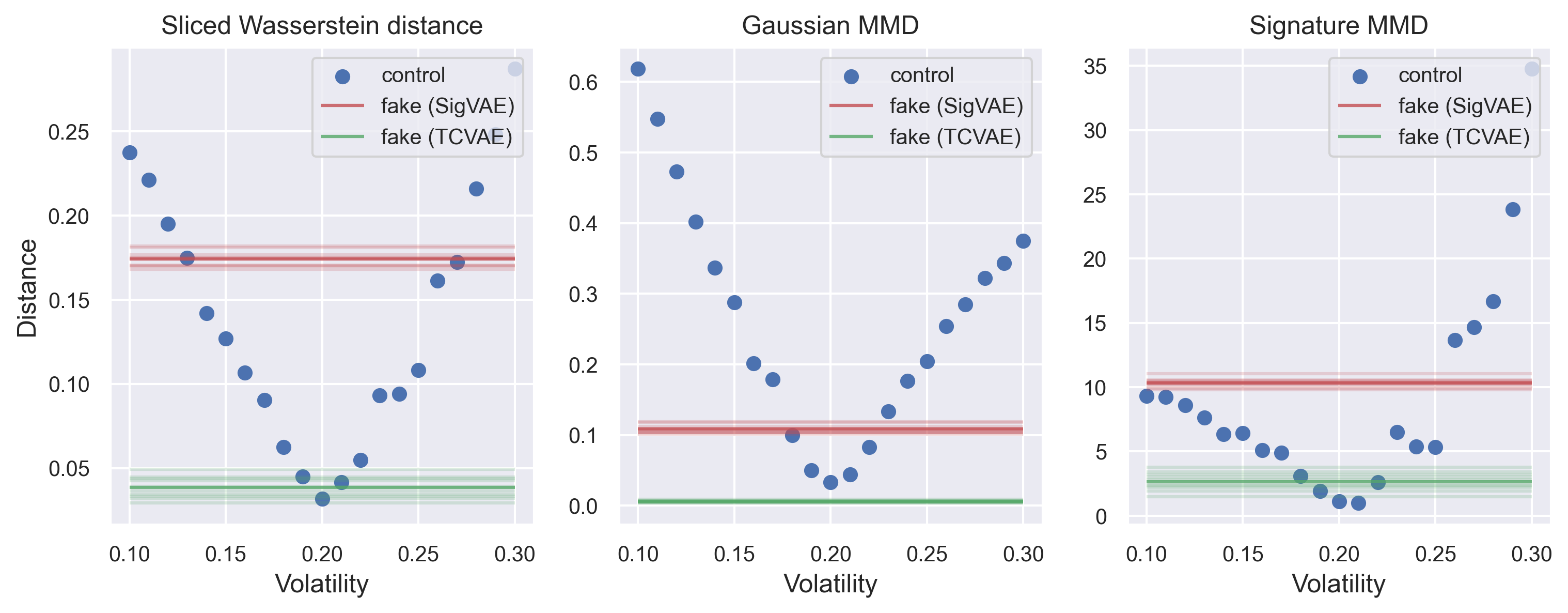}
    
    \caption{From left to right, we visualize the sliced Wasserstein distance, Gaussian MMD, and signature MMD. The green (resp.~red) lines illustrate distances between real paths of the Black-Scholes model and fake paths generated from TC-VAE (resp.~Sig-VAE); each line froms a different random seed. The blue dots show the distances between real paths and control paths under different volatility levels.}
    \label{fig:mmd_swd}
\end{figure}

Next, we compare real and fake paths using adapted metrics. In particular, we study the mean-variance portfolio optimization problem, the log-utility maximization problem, the optimal stopping problem and the adapted Wasserstein distance. 

\subsubsection*{Mean-variance portfolio optimization problem}
The mean-variance portfolio optimization problem is one of the most classical and frequently used portfolio selection rules \cite{Rubinstein2002Mps}. The investor aims at maximizing the expected return from terminal wealth while minimizing the variance of the portfolio:
\begin{equation*}
    \calV(\mu) = \sup_{\theta} \E_{\mu}[V^{\theta}_T] -\kappa \mathrm{Var}_{\mu}[V^{\theta}_T].
\end{equation*}
The supremum is taken over the set of self-financing trading strategies (see \cite[Definition 5.4]{Foellmer2011Sfa}) and $\kappa$ is the hyper-parameter. $V^\theta_T$ is the corresponding terminal value of the value process (see \cite[Definition 5.6]{Foellmer2011Sfa}), where the market consists of one risky asset (either real, fake or control paths in our case) and a risk-free asset $S^{\mathrm{bond}}_t = e^{rt}$, $t\geq 0$, with risk-free rate $r=0.01$. In Section~\ref{sect.rob}, we showed that this problem is a multistage optimal control problem, for which the optimal values are Lipschitz w.r.t.~the adapted Wasserstein distance. 

As a first comparison, we consider constant proportional trading strategies. For each constant strategy, we calculate mean and variance of the corresponding terminal wealth. By varying proportions, we obtain the efficient frontier of constant proportional trading strategies. We compare such efficient frontiers for real, fake and control paths. Control paths, as before, are discretized Black-Scholes paths different from real paths only in volatility. The efficient frontier by fake paths matches extremely well to the one by real paths, see the left-hand side of Figure~\ref{fig:mean_variance_portfolio}. 

Next we compare the efficient frontiers under optimal strategies. For Black-Scholes paths (real and control paths), we know the analytical optimal strategies \cite{Li2000Odp, Forsyth2022MPM}, so that we can directly calculate the corresponding efficient frontiers. For fake paths, we restrict the optimization to optimal strategies of the Black-Scholes model. So we compute efficient frontiers with the optimal strategies of Black-Scholes under different volatility $\sigma$, denoted by $\theta_\sigma$. Then we take the supremum of all efficient frontiers as the efficient frontier of fake paths. The efficient frontier of fake paths matches again extremely well to the one by real paths, see the right-hand side of Figure~\ref{fig:mean_variance_portfolio}. This implies that the performance of mean-variance portfolio optimization problem are close between real and fake paths.

\begin{figure}[H]
\centering
\includegraphics[width=\textwidth]{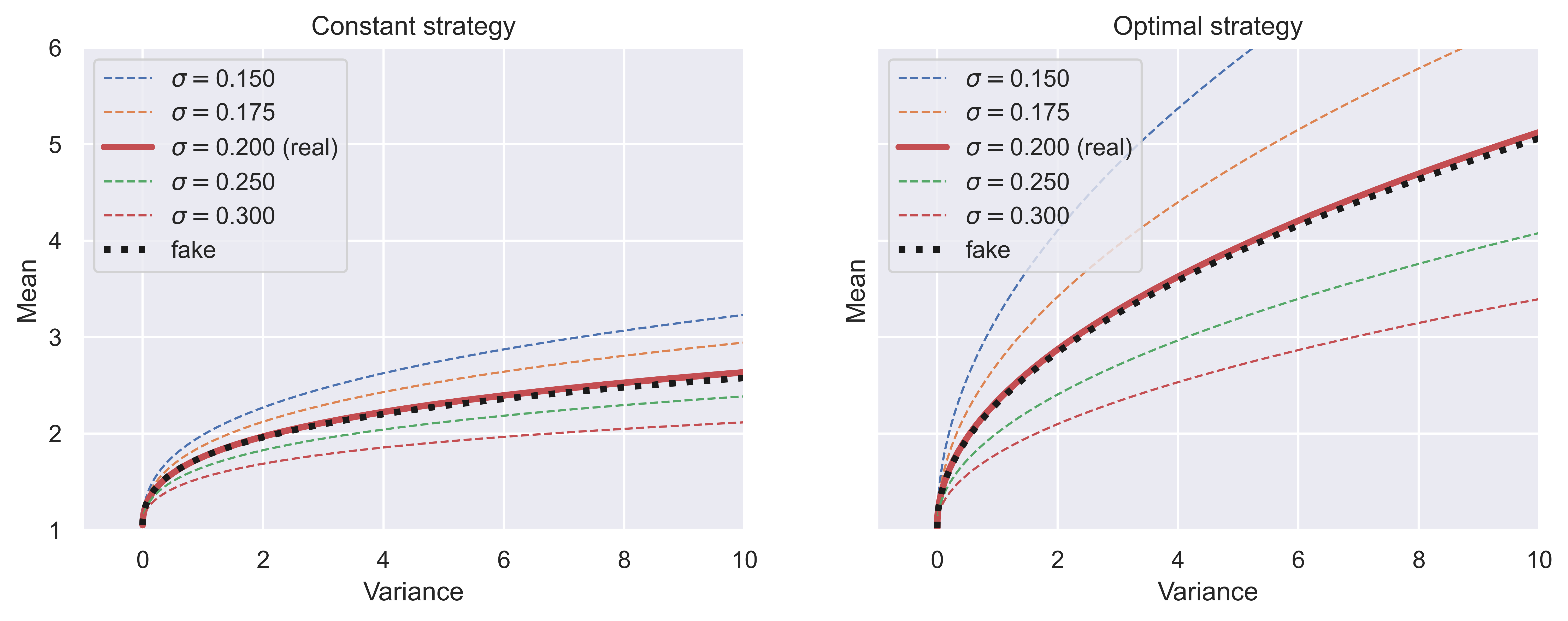}
\caption{Visualization of efficient mean-variance portfolio frontiers for real (red bold solid line), fake (black dotted lines) and control (dashed lines) paths. On the left-hand side, we show efficient frontiers under constant proportional trading strategies. On the right-hand side, we show efficient frontiers under optimal strategies.}
\label{fig:mean_variance_portfolio}
\end{figure}

\subsubsection*{Log-utility maximization}
Next we consider the log-utility maximization problem (Merton's problem \cite{Merton1975Oca}):
\begin{equation*}
    \calV(\mu) = \sup_{\theta} \E_{\mu}[\log(V^{\theta}_T)],
\end{equation*}
under the same setting as the mean-variance problem above. For both real and fake paths, we solve the log-utility maximization problem numerically among constant proportional trading strategies. We denote $v^{*}$ the theoretical optimal utility,  $\mu_\mathrm{real}$ the empirical measure of real paths with $50000$ samples,  $\mu_\mathrm{fake}$ the empirical measure of fake paths with $50000$ samples, $G^*$ the optimal strategy, $G_{\mathrm{real}}$ the optimal constant proportional trading strategy under $\mu_\mathrm{real}$, $G_{\mathrm{fake}}$ the optimal constant proportional trading strategy under $\mu_\mathrm{fake}$, and $V(\mu, G)$ the expected log-utility which arises from using the trading strategy $G$ in the market $\mu$. We compute $V(\mu_\mathrm{real}, G_{\mathrm{real}})$, $V(\mu_\mathrm{real}, G_{\mathrm{fake}})$, $V(\mu_\mathrm{fake}, G_{\mathrm{real}})$ and $V(\mu_\mathrm{fake}, G_{\mathrm{fake}})$, and compare them on the left-hand side of Figure~\ref{fig:log_utility}. To benchmark, we compute the numerically optimal expected log-utility of real paths with 1000 samples (same size as training data). Then we calculate the utilities with 200 random realizations and plot the histogram also on the left-hand side of Figure~\ref{fig:log_utility}. First we notice that $V(\mu_\mathrm{real}, G_{\mathrm{real}})$ is close to $v^{*}$. This justifies reliability of the numerical solver. Since $V(\mu_\mathrm{real}, G_{\mathrm{real}})$, $V(\mu_\mathrm{real}, G_{\mathrm{fake}})$, $V(\mu_\mathrm{fake}, G_{\mathrm{real}})$ and $V(\mu_\mathrm{fake}, G_{\mathrm{fake}})$ are all within the support of the histogram, they are considered relatively close to $v^{*}$. In particular, applying $G_{\mathrm{fake}}$ to $\mu_{\mathrm{real}}$ yields a very close estimate of the optimal utility. 

On the right-hand side of Figure~\ref{fig:log_utility}, we plot the curve of theoretical optimal log-utility vs volatility. We can interoperate $V(\mu_\mathrm{fake}, G_{\mathrm{fake}})$ as the optimal log-utility of Black-Scholes distribution with a different volatility level, which is very close to the real volatility level $\sigma=0.2$.

\begin{figure}[H]
\centering
\includegraphics[width=\textwidth]{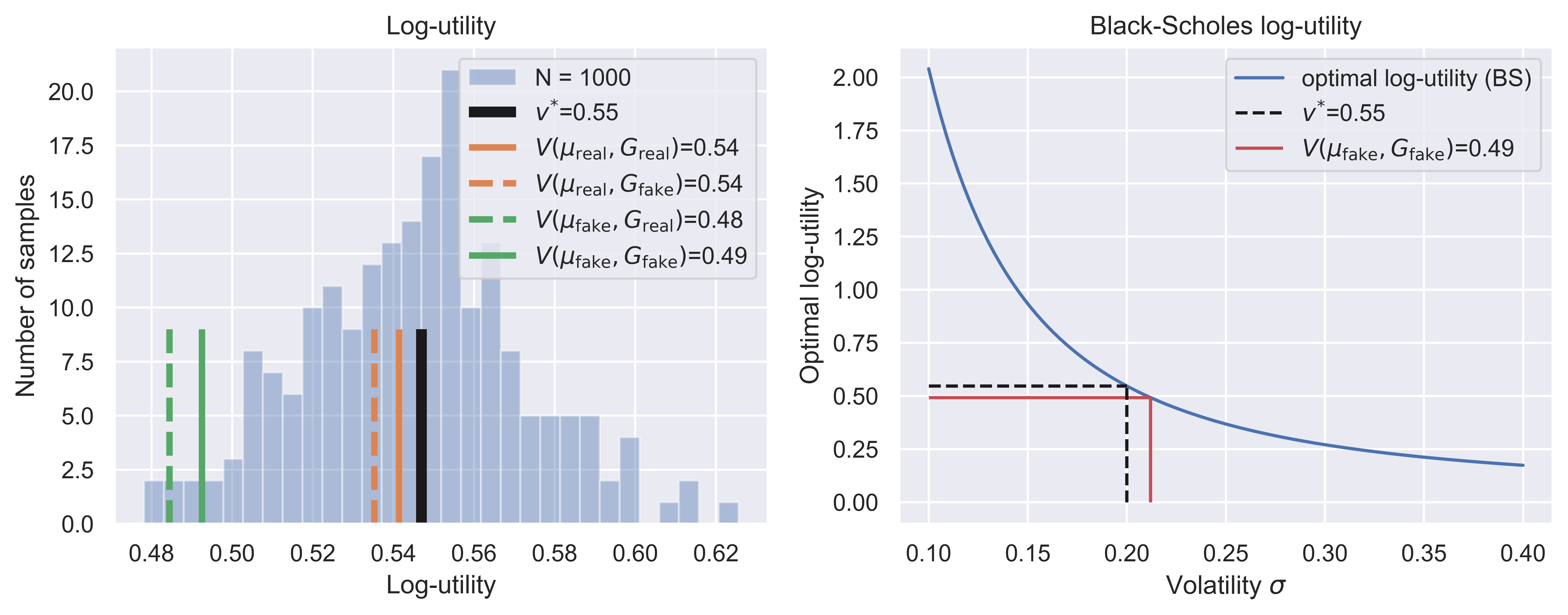}
\caption{On the left-hand side, we compare the expected log-utility of $V(\mu_\mathrm{real}, G_{\mathrm{real}})$, $V(\mu_\mathrm{real}, G_{\mathrm{fake}})$, $V(\mu_\mathrm{fake}, G_{\mathrm{real}})$ and $V(\mu_\mathrm{fake}, G_{\mathrm{fake}})$. To benchmark, the blue histogram showcases numerical optimal utilities values calculated with 1000 samples, using 200 different random seeds for the histogram. On the right-hand side, the blue curve illustrates theoretical optimal log-utility vs volatility.}
\label{fig:log_utility}
\end{figure}

\subsubsection*{Optimal Stopping problem}
Now we consider the optimal stopping problem for an American put option written on an underlying asset $S$ (given by either real, fake, or control paths). The stochastic optimization problem is given by
\begin{equation}
    \sup_{\tau\in \calT}\E[g(\tau, S_{\tau})],
\end{equation}
where $\calT$ is the set of stopping times and $g(t,S) = e^{-rt}\max(S - K)^{+}$, $r=0.1$, $S_0=100$, $K=100$. Control paths, as before, are discretized Black-Scholes paths different from real paths only in volatility. For real, fake and control paths, we compute the optimal stopping values with the deep optimal stopping solver introduced in \cite{Becker2019Dos}. We compute the optimal stopping values for real and fake paths, where each computation is repeated with 10 different random seeds, and compare them in Figure~\ref{fig:optimal_stopping_control}. The optimal stopping values are almost indistinguishable.
\begin{figure}[H]
\centering
\includegraphics[width=\textwidth]{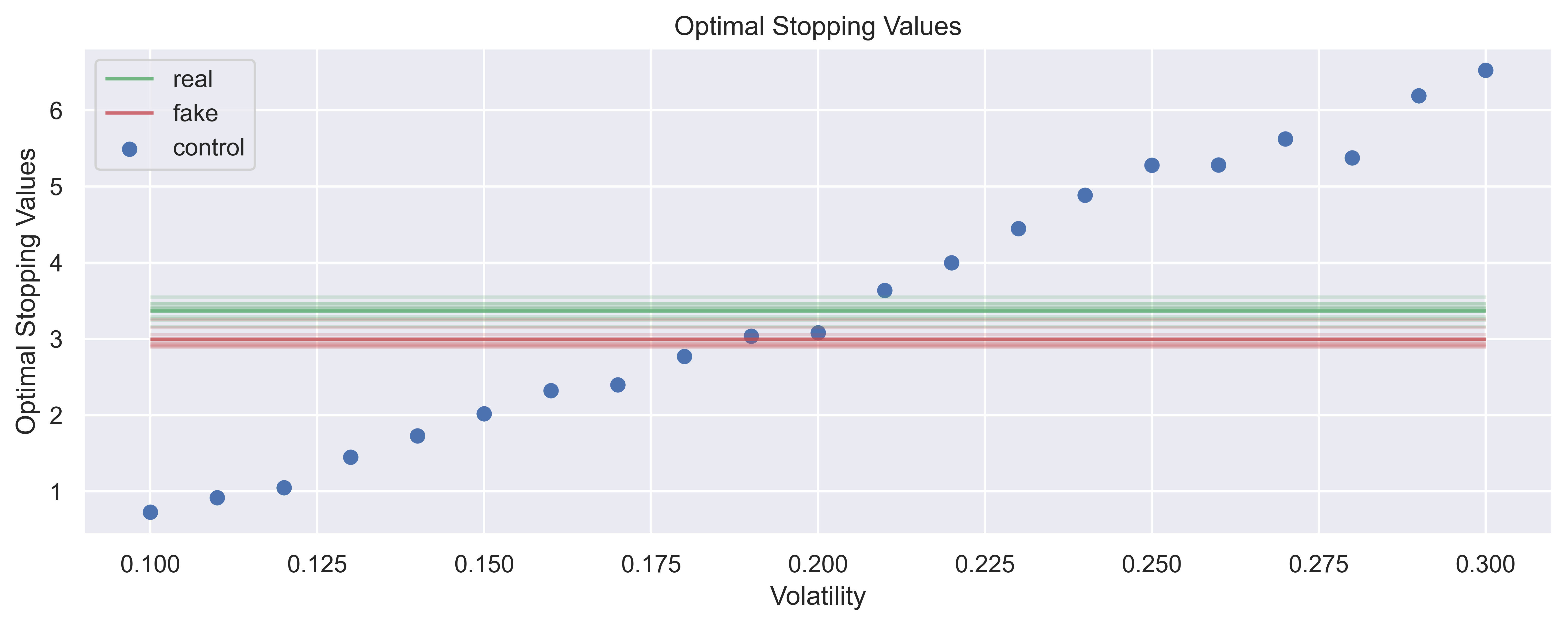}
\caption{Optimal stopping values under real (Black-Scholes model), fake (TC-VAE) and control distributions (varying volatility for the Black-Scholes model).}
\label{fig:optimal_stopping_control}
\end{figure}

\subsubsection*{Sliced adapted Wasserstein distance}
Ideally, we would like to compute the adapted Wasserstein distance between real and fake paths. However, the adapted Wasserstein distance is computationally heavy, and hence we instead evaluate a sliced version of it. Hereby, we draw inspiration from the growing literature on sliced Wasserstein distances (see e.g.~\cite{Deshpande2019Msw,Kolouri2019Gsw,Nietert2022Sra}). In general, slicing is a technique to reduce computational burden by considering low-dimensional projections of the distributions of interest. For comparing time series distributions, we interpret slicing in the sense that we only compute the adapted Wasserstein distance over subsets of time, and then average over those subsets. To this end, for a set $I \subset \{1, \dots, T\}$, denote by $\mu_I$ the marginal distribution of $\mu$ on the subset of times indexed by $I$, and by $|I|$ the number of elements in $I$. Given a distribution $\gamma$ 
over such subsets $I$, we define
$$
\mathcal{SAW}_{1}(\mu, \nu) := \int \mathcal{AW}_1(\mu_I, \nu_I) \,\gamma(dI).
$$
In practice, we use as $\gamma$ the uniform distribution over subsets of a certain size, called $n_{\mathrm{len}}$. The $\mathcal{AW}$ distances are evaluated using adapted empirical measure transformations (see \cite{AccHou2024, BackhoffVeraguas2022Epi,Hou2024Cot}) and then computed using backward induction (cf.~\cite{BackhoffVeraguas2017Cti,Pichler2022TnS}). To be precise, computation of adapted empirical measures requires clustering of the support points of the considered distributions. While \cite{BackhoffVeraguas2022Epi} introduced the adapted empirical measure using clusters based on a predefined grid, we instead use K-means clustering as in \cite{Benezet2024Lcd}, thus effectively adjusting the grids to the particular distributional shapes at hand. For the numerical implementation of the backward induction, we use the POT package \cite{Flamary2021PPO} for solving the inner optimal transport problems, i.e., for calculating each value in the dynamic programming formulation (see \cite[equation (5.1)]{BackhoffVeraguas2017Cti}). We believe that evaluating $\mathcal{SAW}_1$ gives a good indicator of temporal similarity between two measures while significantly reducing computational burden compared to $\mathcal{AW}_1$. However, one must be clear that certain information is lost through slicing, like complex long-range temporal dependencies. 

We configure the size of time slice $n_{\mathrm{len}} = 5$, the number of subsets $n_{\mathrm{slice}} = 100$, the number of samples $n_{\mathrm{sample}} = 500$, and the number of random seeds $n_{\mathrm{seed}} = 100$. Under this configuration, we compute $\mathcal{SAW}_{1}$ between real vs real paths (with different random seed), real vs fake paths, and real vs control paths. Control paths, as before, are discretized Black-Scholes paths different from real paths only in volatility, with $\sigma = 0.3$ instead of $\sigma = 0.2$ for real paths. We compare the average distance across $n_{\mathrm{seed}} = 100$ realizations and the standard deviation in Table~\ref{table:slicedadapted}. We find that $\mathcal{SAW}_{1}(\mu_{\mathrm{real}},\mu_{\mathrm{real}}')$ and $\mathcal{SAW}_{1}(\mu_{\mathrm{real}},\mu_{\mathrm{fake}})$ are almost indistinguishable, indicating that real and fake paths are relatively close under $\mathcal{SAW}_{1}$. Hereby, it is worth noting that the relatively high mean difference $\mathcal{SAW}_{1}(\mu_{\mathrm{real}},\mu_{\mathrm{real}}')$ can be explained by the fact that our models lie in a very high-dimensional space, and fine-grained high-dimensional similarity is difficult to obtain with only 500 samples. The fact that $\mathcal{SAW}_{1}(\mu_{\mathrm{real}},\mu_{\mathrm{real}}')$ is however relatively stable (low standard deviation) and much lower than $\mathcal{SAW}_{1}(\mu_{\mathrm{real}},\mu_{\mathrm{control}})$ indicates that there are certain low-dimensional features which most sample paths share. In fact, these features may be learned by the generator model as well, which could explain the similarly low values of $\mathcal{SAW}_{1}(\mu_{\mathrm{real}},\mu_{\mathrm{fake}})$.

\begin{table}[H]
	\begingroup
	\centering
	\begin{tabular}{l l l l}
		\toprule
		Description & Distance & Mean difference& Standard deviation\\\midrule
		real-real (different samples) & $\mathcal{SAW}_{1}(\mu_{\mathrm{real}},\mu_{\mathrm{real}}')$ & 0.367 & 0.032  \\
		real-fake & $\mathcal{SAW}_{1}(\mu_{\mathrm{real}},\mu_{\mathrm{fake}})$ & 0.382 & 0.028  \\
		real-control & $\mathcal{SAW}_{1}(\mu_{\mathrm{real}},\mu_{\mathrm{control}})$ & 0.670 & 0.066 \\
        \bottomrule
	\end{tabular}\\\vspace{1.5mm}
	\endgroup
     \caption{Sliced adapted Wasserstein distances between different measures.}
    \label{table:slicedadapted}
\end{table}

\subsubsection{Heston model}
\label{sec:hest}
Next, we consider a more complicated temporal dynamic given by the Heston model: $(S^{\mathrm{H}}_t, V^{\mathrm{H}}_t)_{t\geq 0}$ s.t.
\begin{align*}
    \frac{dS^{\mathrm{H}}_t}{S^{\mathrm{H}}_t} &= \mu dt + \sqrt{V^{\mathrm{H}}_t}dW^S_t,\quad S^{\mathrm{H}}_0 = 1,\\
    dV^{\mathrm{H}}_t &= \kappa(\theta - V^{\mathrm{H}}_t) dt + \xi\sqrt{V^{\mathrm{H}}_t}dW_t^V, \quad V^{\mathrm{H}}_0 = \theta,
\end{align*}
where $(W^S_t, W^V_t)_{t\geq 0}$ are Wiener processes with correlation $\rho= -0.9$, $\mu = 0.02$, $\kappa = 1$, $\theta = 0.2$, $\xi = 0.5$. We aim at learning the distribution of $(S^{\mathrm{H}}_t)_{t\geq 0}$. Under the same time discretization and sample size as in the Black-Scholes case of Section~\ref{sect.bs}, we conduct the same tests (omitting the visualization of paths and marginal distribution for the sake of space) and observe a similar performance. For this, see sliced Wasserstein distance, Gaussian kernel MMD, and signature MMD compared in Figure~\ref{fig:heston_mmd_swd}; optimal stopping values compared in Figure~\ref{fig:heston_optimal_stopping}; sliced adapted Wasserstein distance compared in Table~\ref{table:heston_slicedadapted}.

\begin{figure}[H]
    \centering
    \includegraphics[width=\linewidth]{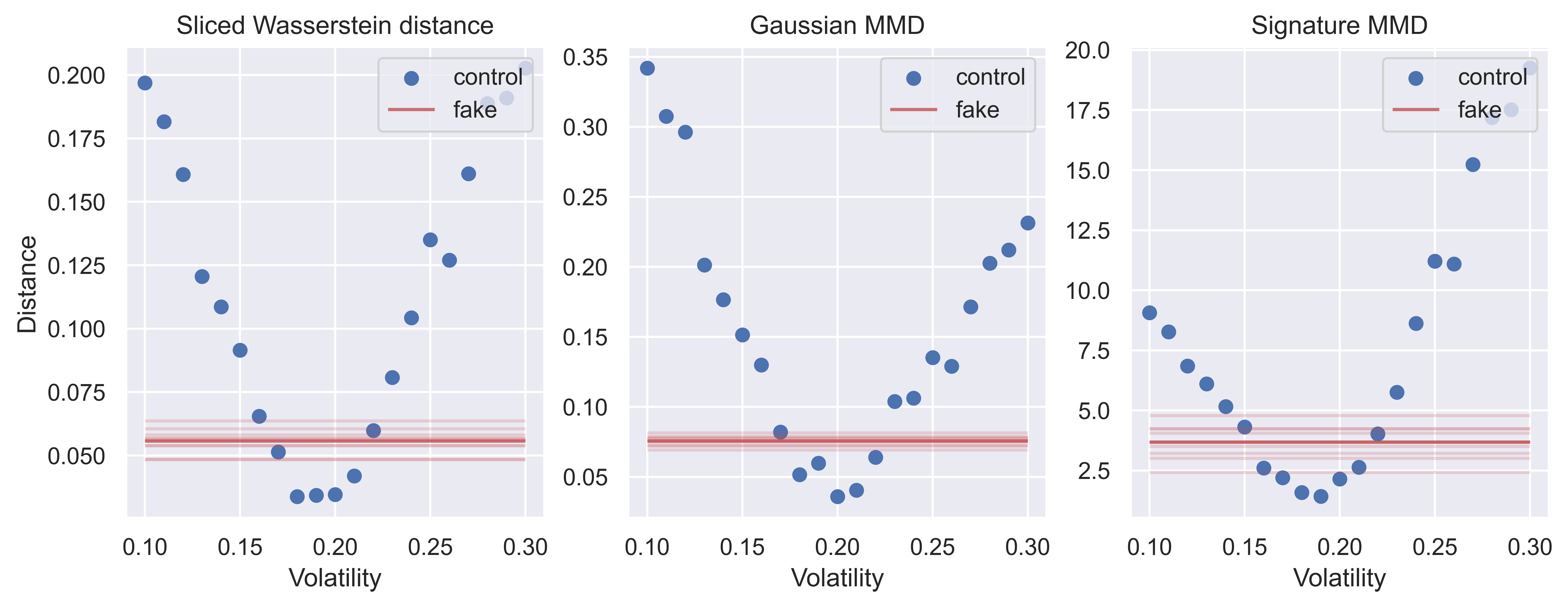}
    \caption{From left to right, we visualize the sliced Wasserstein distance, Gaussian MMD, and signature MMD. The red lines illustrate distances between real paths of the Heston model and fake paths generated from the TC-VAE model (each line is a different random seed). The blue dots show the distances between real paths and control paths. Control paths are discretized Heston paths different from real paths only in $\theta$.
}
    \label{fig:heston_mmd_swd}
\end{figure}

\begin{figure}[H]
\centering
\includegraphics[width=\textwidth]{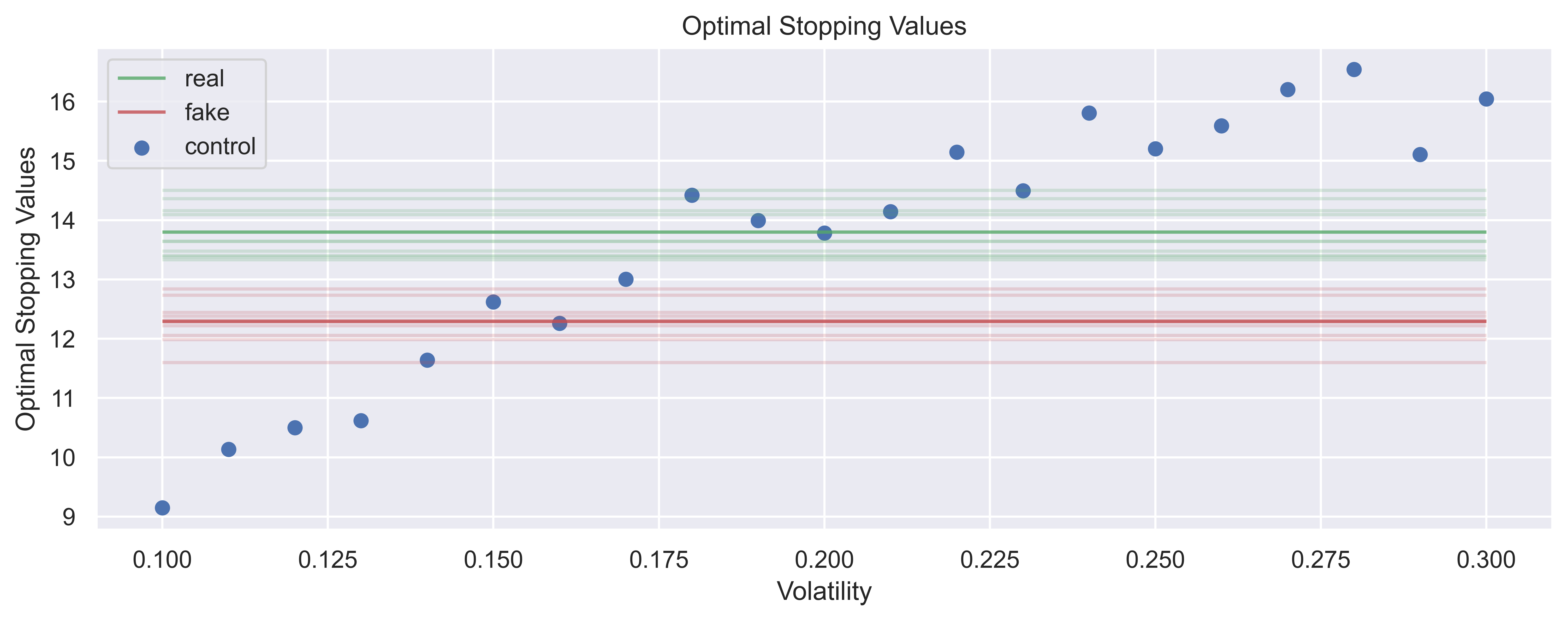}
\caption{Optimal stopping values under real, fake and control distributions in case of synthetic data using the Heston model. Control paths are discretized Heston paths different from real paths only in $\theta$.}
\label{fig:heston_optimal_stopping}
\end{figure}

\begin{table}[H]
\begingroup
\centering
\begin{tabular}{l l l l}
    \toprule
    Description & Formula & Mean difference& Standard deviation\\\midrule
    real-real (different samples) & $\mathcal{SAW}_{1}(\mu_{\mathrm{real}},\mu_{\mathrm{real}}')$ & 0.505 & 0.027  \\
    real-fake & $\mathcal{SAW}_{1}(\mu_{\mathrm{real}},\mu_{\mathrm{fake}})$ & 0.549 & 0.039  \\
    real-control & $\mathcal{SAW}_{1}(\mu_{\mathrm{real}},\mu_{\mathrm{control}})$ & 0.700 & 0.052 \\
    \bottomrule
\end{tabular}\\\vspace{1.5mm}
\endgroup
\noindent
    \caption{Sliced adapted Wasserstein distances between different measures arising from the Heston model. Control paths are discretized Heston paths different from real paths only in $\theta$ ($0.15$ vs $0.2$).}
\label{table:heston_slicedadapted}
\end{table}

\subsubsection{Path dependent volatility model}
\label{sect.pdv}
In a financial market, we only observe a single realization of a path, such as stock prices, rather than i.i.d. paths. Although one can use rolling windows to sample many sub-paths and assume them to be i.i.d. samples, this clearly leads to several risks. First of all, the sub-paths are in fact not independent and this causes severe over-fitting. Even worse, when the observed path is short, in order to extract more sub-paths for training, the rolling windows greatly overlap and causes even higher correlation. Although non-overlapping windows can alleviate correlation, this requires much longer observed paths, which is sometimes not possible. Even when this is possible, this exposes the model to distributional shift over time, which is related to another issue: non-stationarity. The sub-paths are also not identically distributed if the observed path is not stationary. This is common in financial data where the distribution shifts swiftly over time. Thus, the sub-paths might only be correlated samples coming from a different distribution. In the end, we are learning an average distribution over time, which is meaningless for forecasts about possible future evolution. To tackle this issue, we deploy the temporal nature of financial time series by generating future paths from real historical paths. To do so, we further develop a conditional version of TC-VAE. The only difference compared to TC-VAE is that we concatenate the latent variable with an additional conditional variable, see Figure~\ref{fig:cvae}. 
\begin{figure}[H]
\centering
\includegraphics[width=0.9\textwidth]{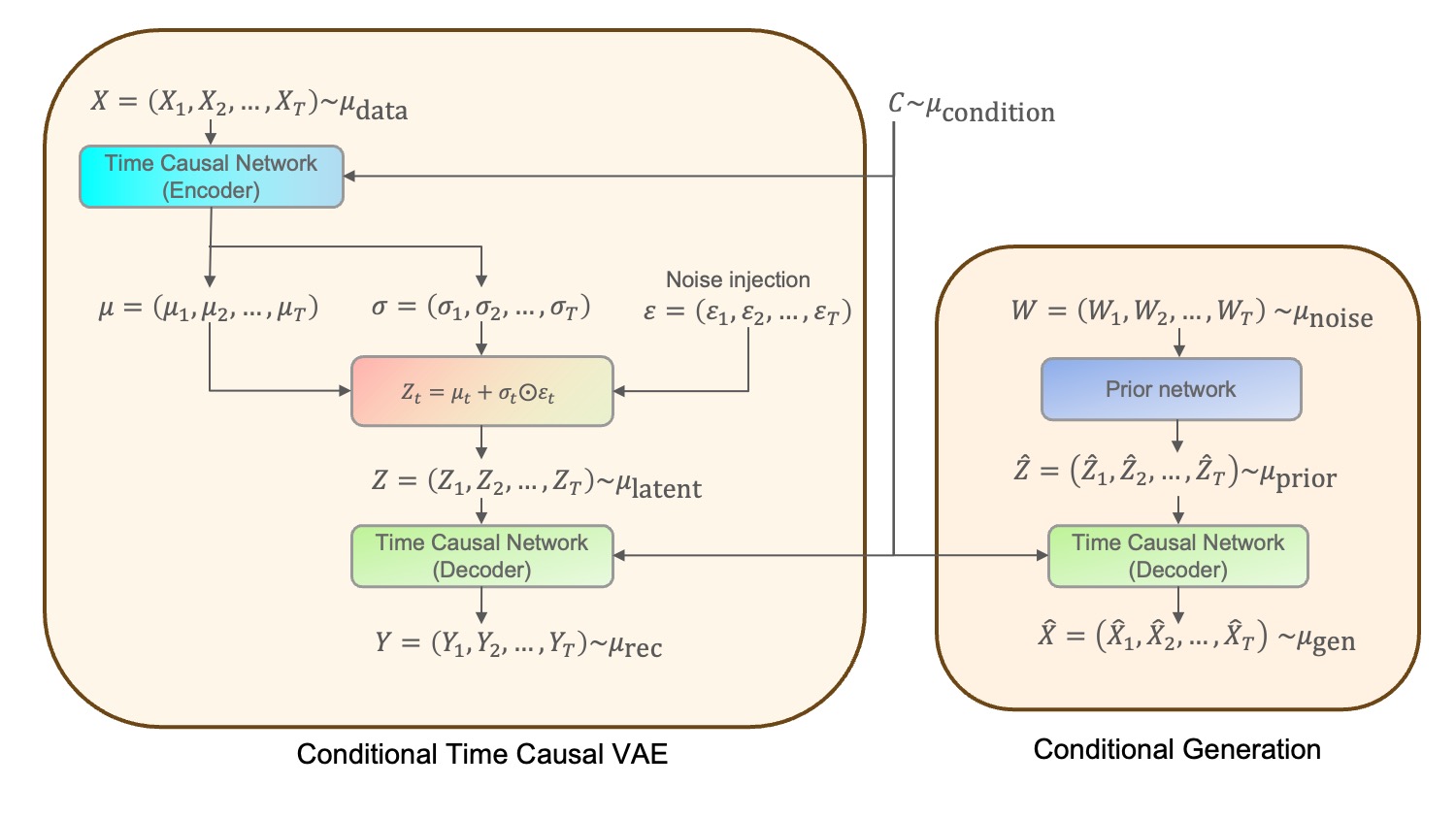}
\caption{Conditional time-causal variational autoencoder and generation}
\label{fig:cvae}
\end{figure}
With the \emph{conditional TC-VAE}, we can generate the distribution of future paths conditional on historical paths. In financial modelling, path dependent models have shown to be able to successfully capture the price dynamics. For example, consider the path dependent volatility model where the prices $(S_{t})_{t\geq 0}$ satisfy 
\begin{equation*}
    dS_{t} = S_t\sigma(S_{\leq t})dW_t,
\end{equation*}
where $\sigma$ is measurable and $(W_t)_{t\geq 0}$ is a Wiener process. Given the time-homogeneous dynamic, for every $\tau>0$, the law of $S_{[t,t+\tau]}$ conditional on $\sigma(S_{\leq t})$ is the same for all $t\geq 0$. As an example, we consider the 4-factor Markovian path dependent volatility model (PDV4) introduced in \cite{Guyon2023Vim}, where the volatility function is constructed by exponential kernels to produce the Markovian model $(S^{\mathrm{PDV}}_{t})_{t\geq 0}$ s.t. 
\begin{equation*}
    \left\{
    \begin{aligned}
        \frac{dS^{\mathrm{PDV}}_t}{S^{\mathrm{PDV}}_t} &= \mu dt + \sigma_t dW_t,\quad \sigma_t = \sigma(R_{1,t},R_{2,t}),\quad \sigma(R_{1},R_{2}) = \beta_0 + \beta_1 R_1 + \beta_2 \sqrt{R_2},\\
        R_{1,t} &= (1 - \theta_1)R_{1,1,t} + \theta_1 R_{1,2,t}, \quad R_{2,t} = (1 - \theta_2)R_{2,1,t} + \theta_2 R_{2,2,t},\\
        dR_{1,j,t} &= \lambda_{1,j} \Big(\sigma_t dW_t - R_{1,j,t}dt\Big), \quad dR_{2,j,t} = \lambda_{2,j} \Big(\sigma_t^2 dW_t - R_{1,j,t}\Big), j=1,2
    \end{aligned}\right.
\end{equation*}
where $\mu = 0.1, \beta_0 = 0.04, \beta_1 = -0.13, \beta_2 = 0.65, \lambda_{1,1} = 55, \lambda_{1,2} = 10, \theta_1 = 0.25, \lambda_{2,1} = 20, \lambda_{2,2} = 3, \theta_2 = 0.5$. 
PDV4 captures important stylized facts of volatility, produces very realistic price and volatility paths (see Figure~\ref{fig:pdv4_sp500}), and jointly fits SPX and VIX smiles remarkably well \cite{Guyon2023Vim}.
\begin{figure}[H]
    \centering
    \includegraphics[width=0.99\textwidth]{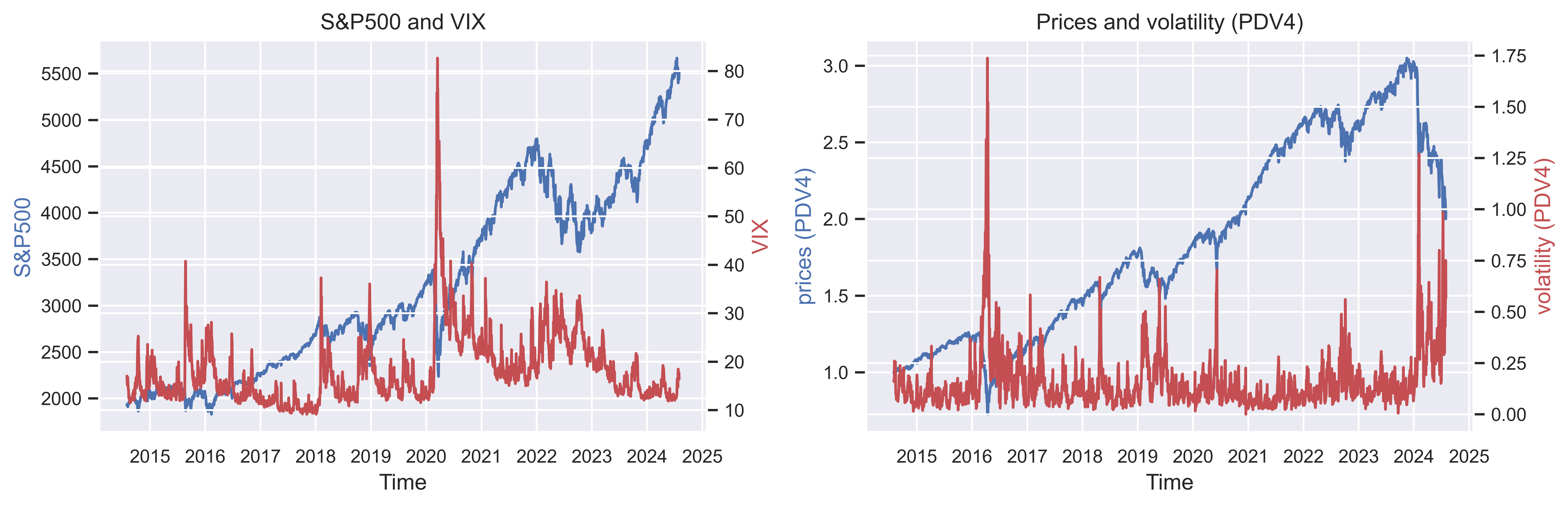}
    \caption{On the left, we plot daily S\&P500 (blue) and VIX (red) from 2014-08-01 to 2024-08-01. On the right, we plot prices (blue) and volatility (red) of the 4-factor Markovian path dependent volatility model.}
    \label{fig:pdv4_sp500}
\end{figure}
Since we work in discrete time, we let $dt = 1/365$, $N_T = 60$ to model daily prices. We choose $N_{\mathrm{sample}}=2560$ to be the number of samples in market data. Let $S_{t=1}^{N}$ be a discretized price path sampled from the PDV4 model. We extract the latest sub-paths $S^{(i)} = S_{i: i+N_T}$, $i \in \calI = \{N-N_T-N_\mathrm{sample},\dots,N-N_T\}$, and normalize them by dividing each sub-path by its starting price. This gives the return paths $X^{(i)} = S_{i: i+N_T} / S_{i}$, $i \in \calI$, which are our real paths. 
We denote the weighted historical volatility by
\[
\Sigma^{(i)} = \sqrt{\sum_{j\leq i} K_2(i-j)(\frac{S_{j} - S_{j-1}}{S_{j-1}})^2},\quad i \in \calI,
\]
where $K_2(k) = Z^{-1}_{\alpha, \delta}(k + \delta)^{-\alpha}$, $\alpha > 1$, $\delta > 0$ and $Z^{-1}_{\alpha, \delta}$ is chosen s.t. $\sum_{k=0}^{\infty}K_2(k) = 1$. 
Given the observed sample $(X^{(i)}, \Sigma^{(i)}), i\in\calI$, we apply the conditional TC-VAE to learn the distribution of future returns given the  weighted historical volatility. 

First, we visualize the fake paths under different conditions, see Figure~\ref{fig:pdv_con_gen}. The generator indeed generates different distributions conditional on different conditions. Moreover, the generated paths show gain/loss asymmetry and volatility clustering, which are stylized facts of financial time series \cite{Cont2001Epo}.
\begin{figure}[H]
    \centering
    \includegraphics[width=\textwidth]{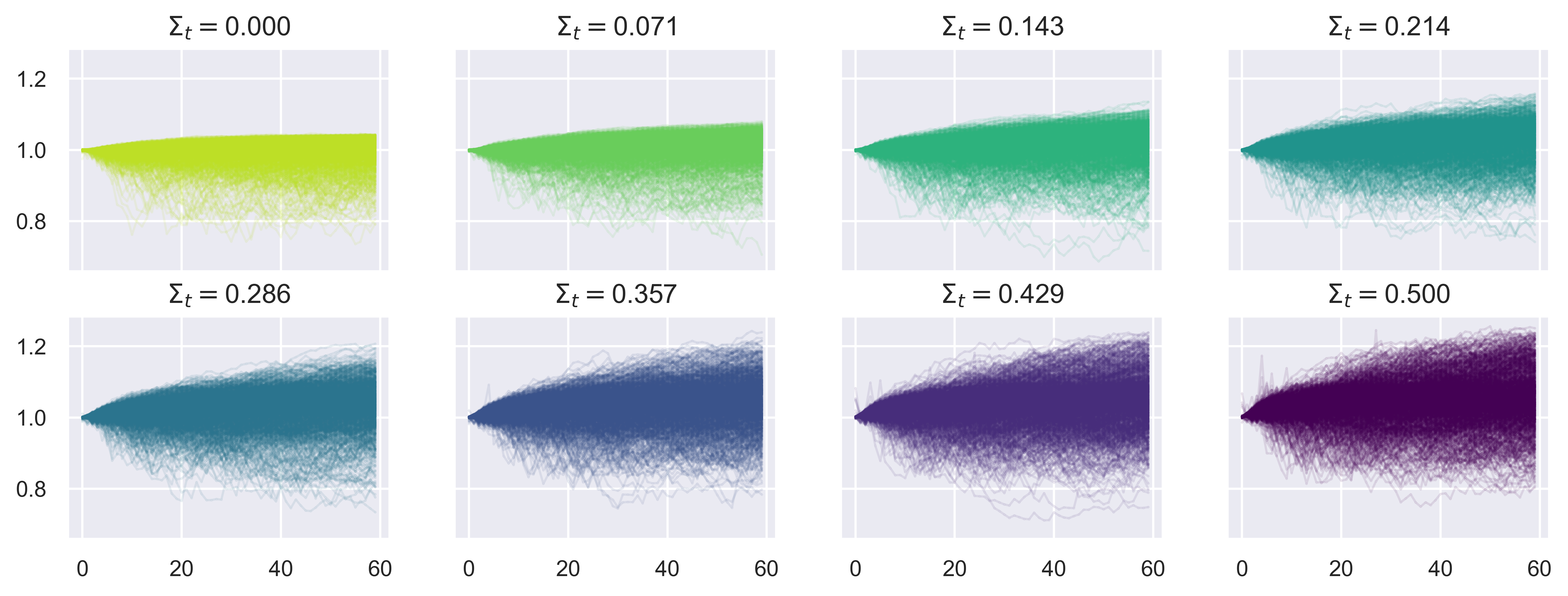}
    \caption{Visualization of generated returns conditional on weighted historical volatility}
    \label{fig:pdv_con_gen}
\end{figure}
Then we compare fake paths and true paths distributions conditional on the same history path. To benchmark, we also sample a Black-Scholes distribution with drift and volatility estimated from the past $100$ time steps. We quantitatively evaluate the conditional distributions under sliced Wasserstein distance, Gaussian MMD, signature MMD, and sliced adapted Wasserstein distance. For each historical path, we generate real, fake and control paths and compute distances between real vs real paths (with different random seed), real vs fake paths, and real vs control paths. For control paths, we use Black-Scholes paths with drift and volatility estimated from the historical paths, which serves as a benchmark. In Figure~\ref{fig:pdv_con_gen_com_swd_mmd_sawd}, we compare sliced Wasserstein distance, Gaussian MMD, signature MMD, and sliced adapted Wasserstein distance.

\begin{figure}[H]
    \centering
    \includegraphics[width=\textwidth]{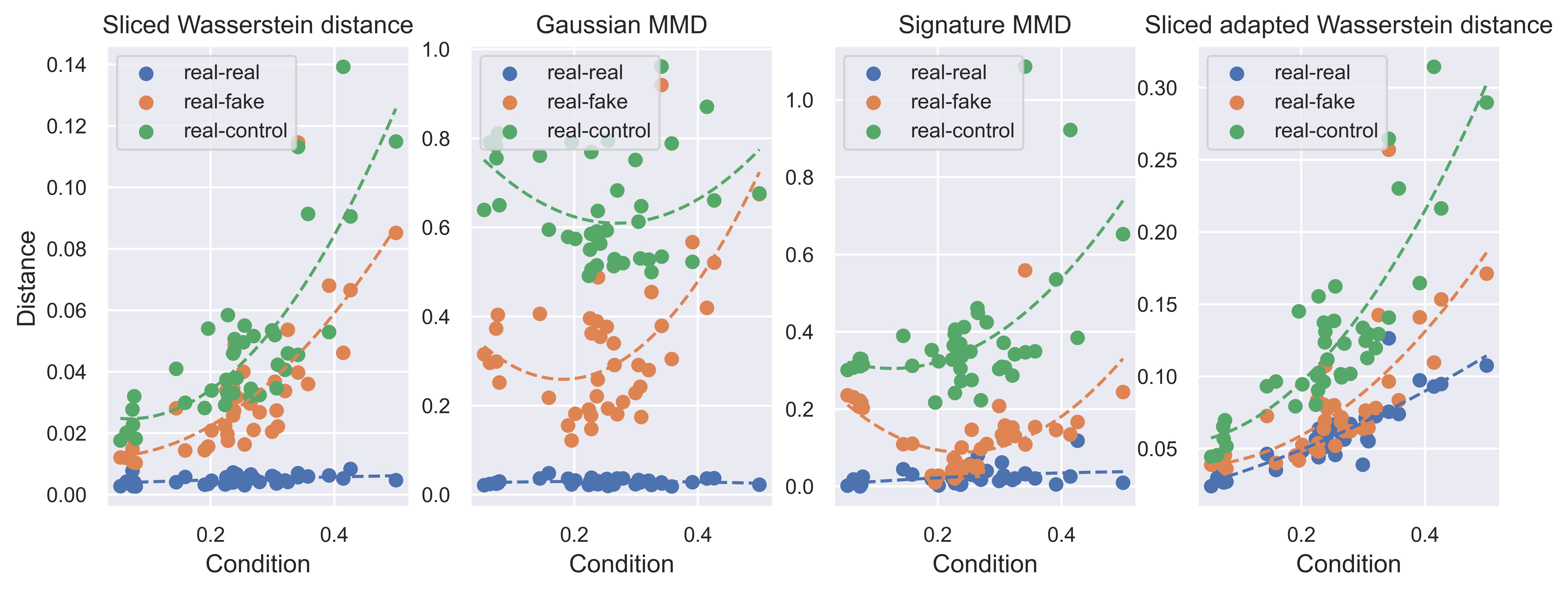}
    \caption{From left to right, we visualize the sliced Wasserstein distance, Gaussian MMD, signature MMD, sliced adapted Wasserstein distance. The dots are the distances between paths and the dash lines are quadratic polynomials fitted to the distances. 
    }
    \label{fig:pdv_con_gen_com_swd_mmd_sawd}
\end{figure}
Notably, the path generation is not constrained by the length of training paths, which means that we can extend a path as long as desired. We iterate the following two steps: 1) extending the path by conditional generation; 2) calculating conditions of the extended path. See Figure~\ref{fig:path_extend} for path extension of $600 = 10*N_T$ time steps after $10$ iterative path extensions. The generation shows long time stability without blowing up or vanishing.
\begin{figure}[H]
    \centering
    \includegraphics[width=\textwidth]{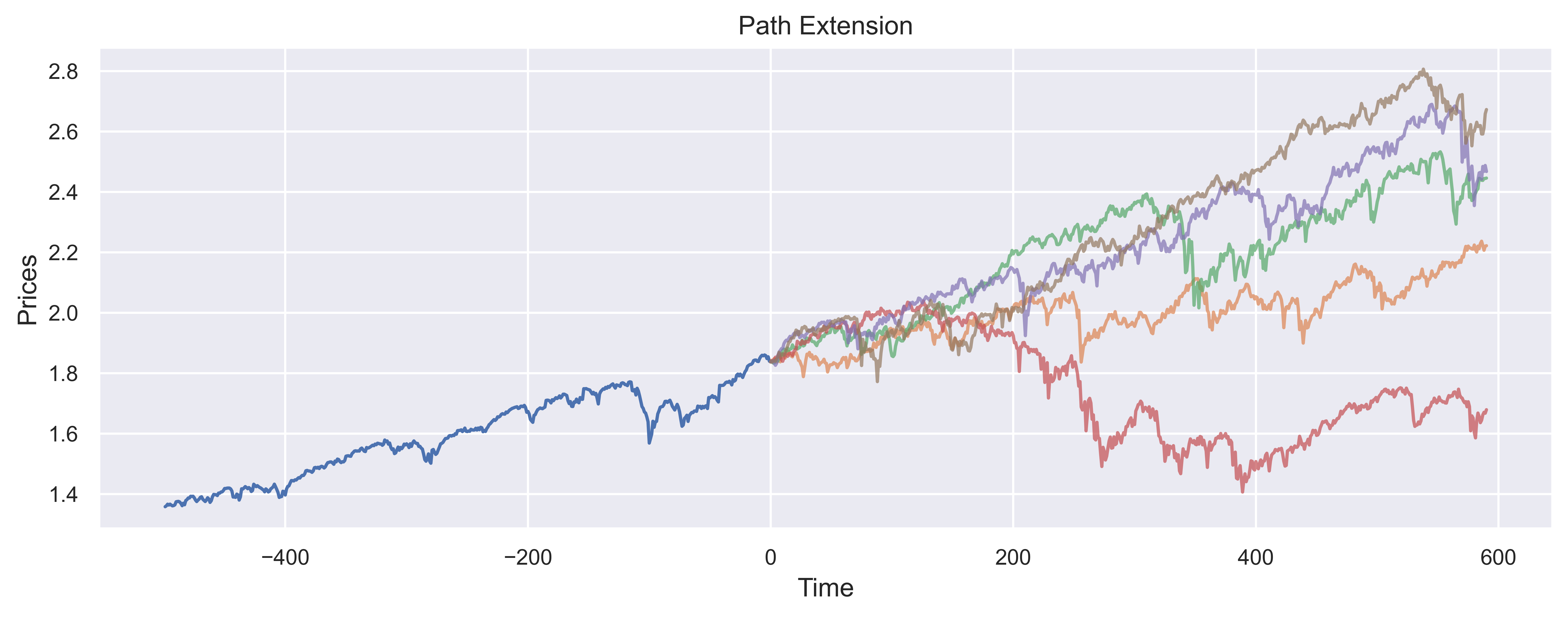}
    \caption{Extending a real path under the PDV4 model by generated conditional paths using TC-VAE.}
    \label{fig:path_extend}
\end{figure}

\subsection{Market data}
\subsubsection{S\&P500 and VIX}
\label{sect.market}
Encouraged by the promising conditional generation from the path-dependent model considered above, we now apply our generator to data from S\&P500 and VIX. We take daily S\&P500 and VIX from 2014-08-01 to 2024-08-01, for a total of $N = 2516$ trading days; see Figure~\ref{fig:pdv4_sp500}. We denote by $(S_t)_{t=1}^{N}$ the 
S\&P500 path and by $(V_t)_{t=1}^{N}$ the VIX path. As before, we consider subpaths of length $N_T = 60$ and extract the sub-paths $S^{(i)} = S_{i: i+N_T}$, $i \in \calI = \{1,\dots,N-N_T\}$ and normalize them by dividing each sub-path by its starting price. This gives the return paths $X^{(i)} = S_{i: i+N_T} / S_{i}$, $i \in \calI$, which are our real paths. As before, conditional on the VIX, we apply the conditional TC-VAE to learn S\&P500 prices in the future. With the data and condition pair $(X^{(i)}, \Sigma^{(i)})$, we apply the conditional TC-VAE to learn the distribution of $(X^{(i)})_{i\in\calI}$ given $(\Sigma^{(i)})_{i\in\calI}$. 

First, we visualize the fake paths under different conditions, see Figure~\ref{fig:sp500_con_gen}. The conditional generation shows skewness. Moreover, the volatility of generated paths is increasing with the VIX, which is in line with the financial interpretation of the condition. 
\begin{figure}[H]
    \centering
    \includegraphics[width=\textwidth]{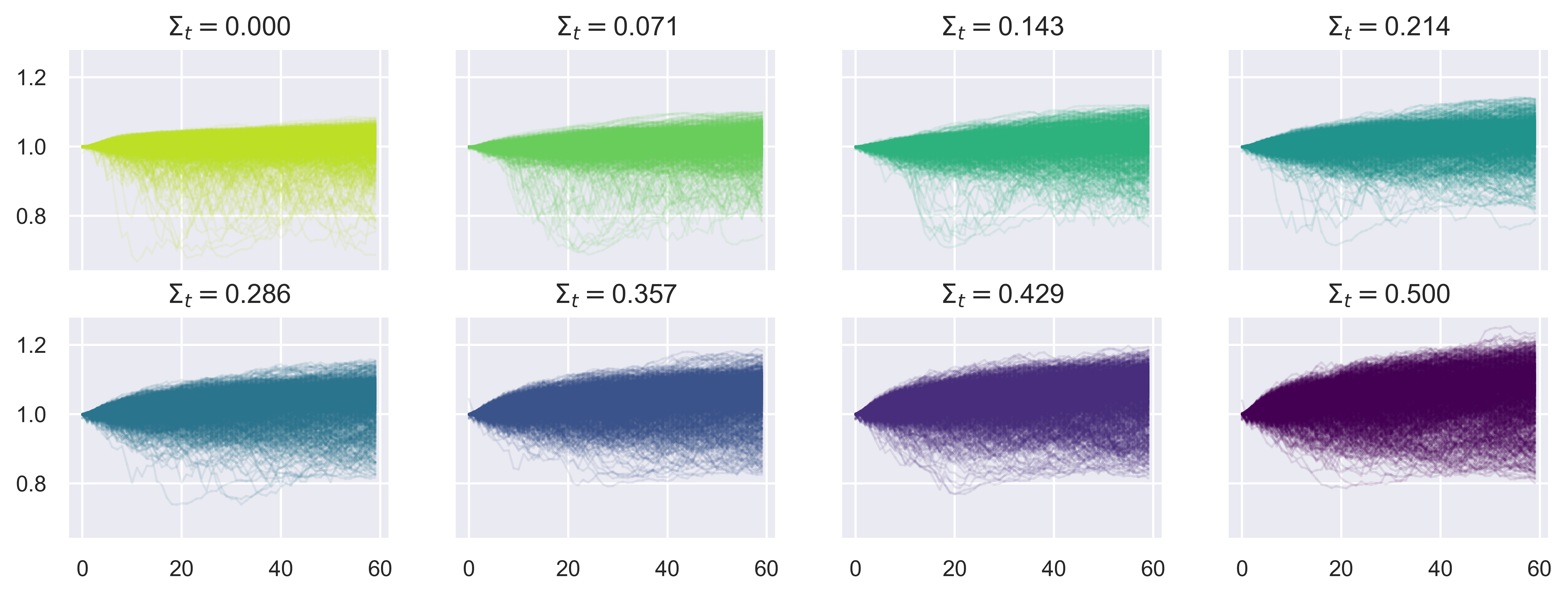}
    \caption{Visualization of generated returns conditional on VIX.}
    \label{fig:sp500_con_gen}
\end{figure}

VIX can be well estimated by the weighted historical volatility, see \cite{Guyon2023Vim} for a through analysis. Thus, similar to the PDV4 case, we can extend the path as long as desired. See Figure~\ref{fig:path_extend_sp500} for path extension of $600 = 10*N_T$ time steps after $10$ iterative path extensions.
\begin{figure}[H]
    \centering
    \includegraphics[width=\textwidth]{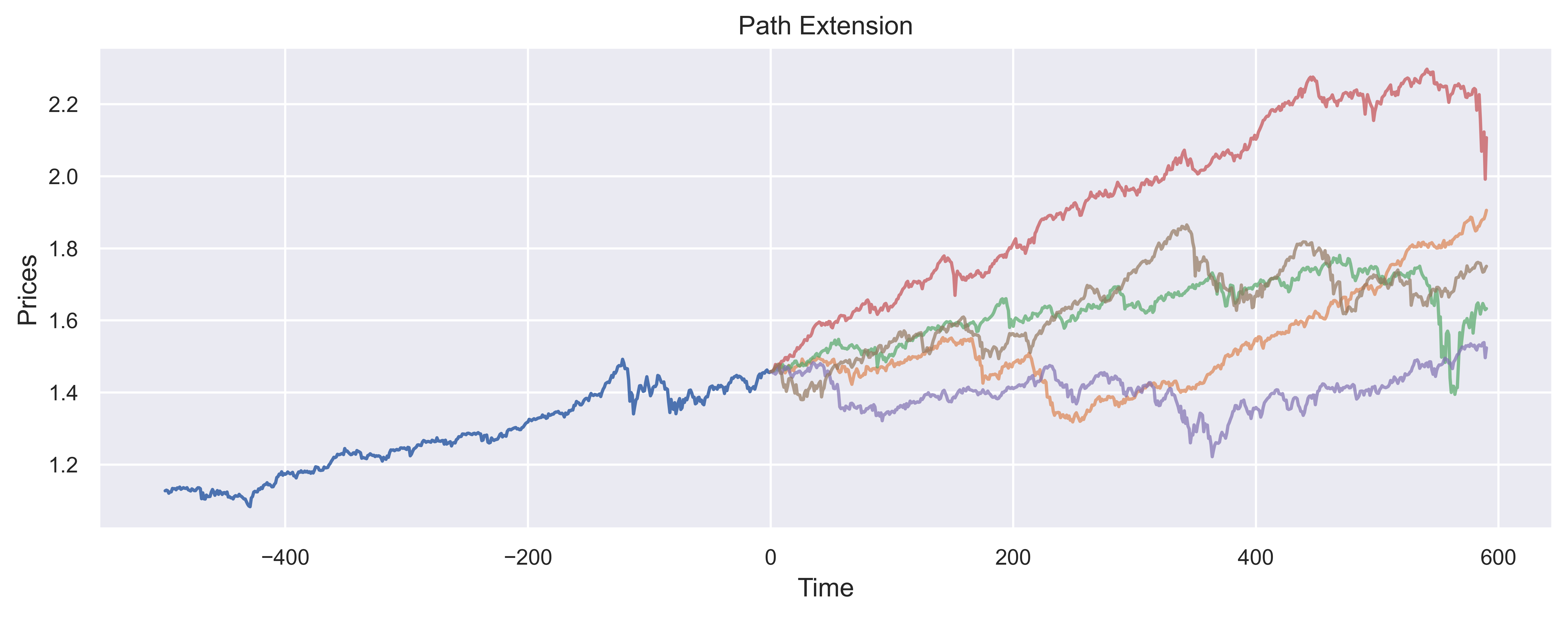}
    \caption{Path extension of normalized S\&P500 prices using TC-VAE.}
    \label{fig:path_extend_sp500}
\end{figure}
Finally, we generate a long fake path by path extension, and compare it with S\&P 500 prices in terms of stylized facts of financial time series; see \cite{Cont2001Epo}. This includes: 1) heavy tail of returns, 2) volatility clustering, 3) zero auto-correlation of returns, 4) short-time auto-correlation of square returns, 5) long-time auto-correlation of absolute returns, and 6) negative skewness of returns. Returns of both S\&P500 prices and fake prices display power-law or Pareto-like distribution; see Figure~\ref{fig:logreturn_volcluster}. The high-volatility returns of both S\&P500 prices and fake prices tend to cluster, which is known as volatility clustering.
\begin{figure}[H]
    \centering
    \includegraphics[width=\textwidth]{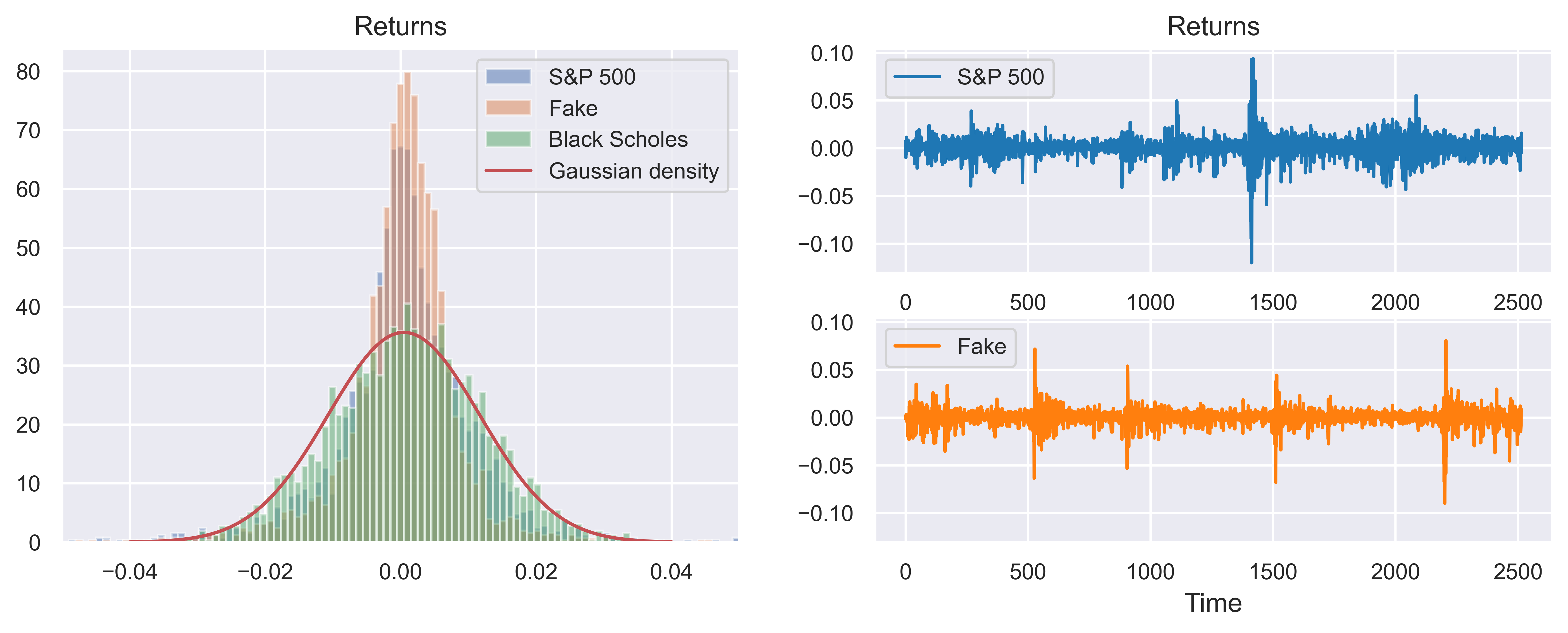}
    \caption{On the left, we visualize returns histogram of S\&P 500 (blue), fake prices (orange), Black Scholes prices (green), and the Gaussian density (red). On the right we plot returns of S\&P 500 (blue) and fake prices (orange) along the time horizon.}
    \label{fig:logreturn_volcluster}
\end{figure}
Furthermore, we inspect the auto-correlation of returns. Returns of S\&P500 prices and fake prices both show no correlation in returns, short time correlation in square returns, and long time correlation in absolute returns; see Figure~\ref{fig:autocorrelation}.
\begin{figure}[H]
    \centering
    \includegraphics[width=\textwidth]{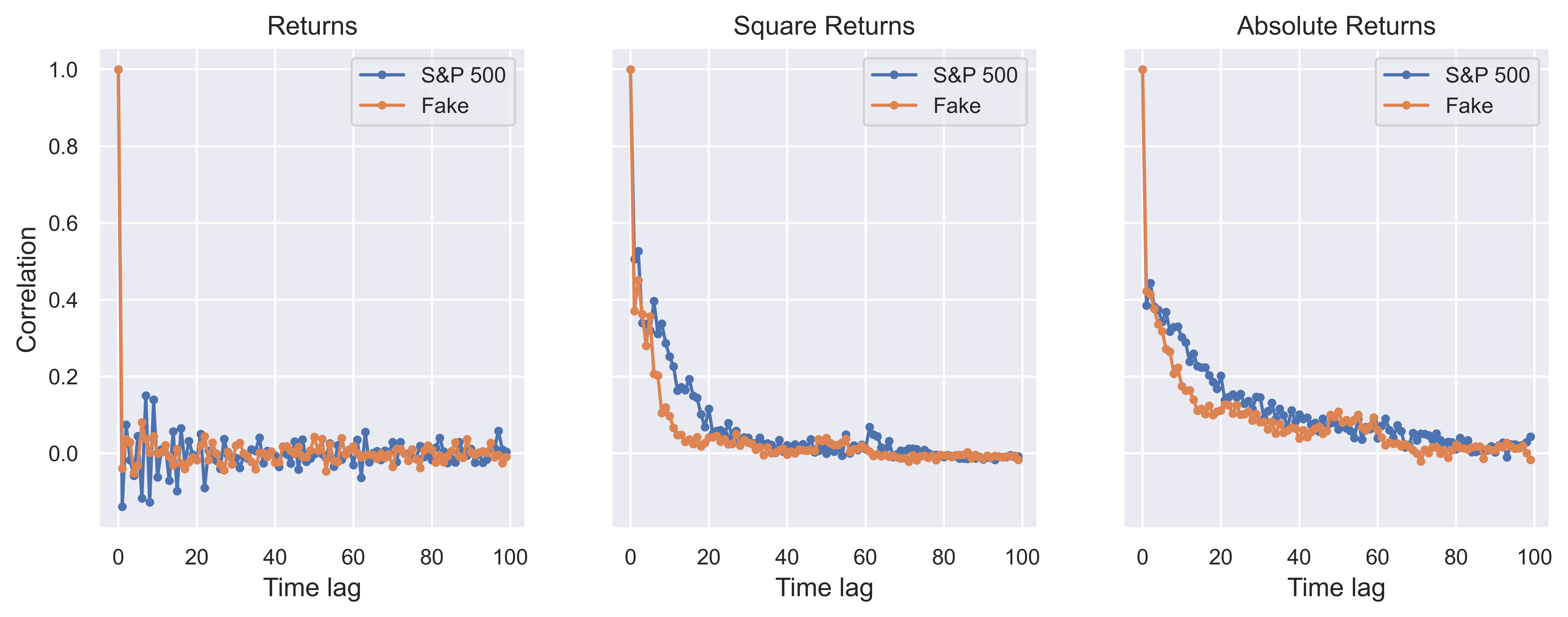}
    \caption{From left to right, we visualize the auto-correlation of returns, square returns, and absolute returns for both S\&P 500 prices (blue) and fake prices (orange).}
    \label{fig:autocorrelation}
\end{figure}
Lastly, we compare the skewness and kurtosis of returns for both S\&P500 from 2014-08-01 to 2024-08-01 and $1000$ fake paths; see Figure~\ref{fig:skew_kurtosis}. Overall, S\&P 500 prices and fake prices are close in skewness and kurtosis of returns.
\begin{figure}[H]
    \centering
    \includegraphics[width=\textwidth]{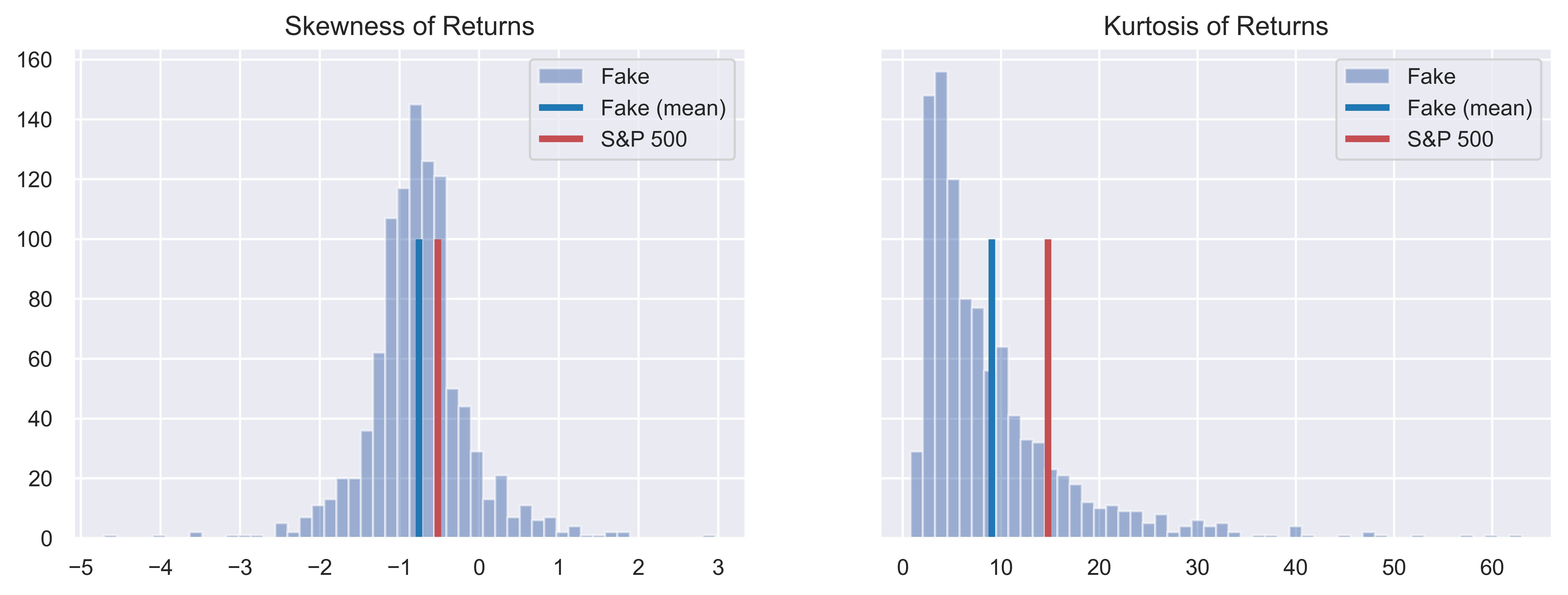}
    \caption{On the left, we plot the histogram of skewness of returns for $1000$ fake paths, their mean (blue), and the skewness of returns for S\&P 500. On the right, we plot the histogram of kurtosis of returns for $1000$ fake paths, their mean (blue), and the kurtosis of returns for S\&P 500.}
    \label{fig:skew_kurtosis}
\end{figure}

\printbibliography

@Article{BackhoffVeraguas2020AWd,
  author    = {Backhoff-Veraguas, Julio and Bartl, Daniel and Beiglb{\"o}ck, Mathias and Eder, Manu},
  journal   = {Finance and Stochastics},
  title     = {Adapted Wasserstein distances and stability in mathematical finance},
  year      = {2020},
  pages     = {601--632},
  volume    = {24},
  publisher = {Springer},
}

@Article{Pflug2012Adf,
  author    = {Pflug, Georg Ch and Pichler, Alois},
  journal   = {SIAM Journal on Optimization},
  title     = {A distance for multistage stochastic optimization models},
  year      = {2012},
  number    = {1},
  pages     = {1--23},
  volume    = {22},
  publisher = {SIAM},
}

@Article{Gretton2012Akt,
  author    = {Gretton, Arthur and Borgwardt, Karsten M and Rasch, Malte J and Sch{\"o}lkopf, Bernhard and Smola, Alexander},
  journal   = {The Journal of Machine Learning Research},
  title     = {A kernel two-sample test},
  year      = {2012},
  number    = {1},
  pages     = {723--773},
  volume    = {13},
  publisher = {JMLR. org},
}

@Article{Pammer2024Ano,
  author    = {Pammer, Gudmund},
  journal   = {Electronic Communications in Probability},
  title     = {A note on the adapted weak topology in discrete time},
  year      = {2024},
  pages     = {1--13},
  volume    = {29},
  publisher = {The Institute of Mathematical Statistics and the Bernoulli Society},
}

@Article{Kingma2014Aev,
  author  = {Kingma, Diederik P and Welling, Max},
  journal = {2nd International Conference on Learning Representations, ICLR 2014},
  title   = {Auto-encoding variational bayes},
  year    = {2014},
}

@InProceedings{Higgins2016bvL,
  author    = {Higgins, Irina and Matthey, Loic and Pal, Arka and Burgess, Christopher and Glorot, Xavier and Botvinick, Matthew and Mohamed, Shakir and Lerchner, Alexander},
  booktitle = {International conference on learning representations},
  title     = {beta-vae: Learning basic visual concepts with a constrained variational framework},
  year      = {2016},
}

@Article{Acciaio2020Cot,
  author    = {Acciaio, Beatrice and Backhoff-Veraguas, Julio and Zalashko, Anastasiia},
  journal   = {Stochastic Processes and their Applications},
  title     = {Causal optimal transport and its links to enlargement of filtrations and continuous-time stochastic optimization},
  year      = {2020},
  number    = {5},
  pages     = {2918--2953},
  volume    = {130},
  publisher = {Elsevier},
}

@Article{Lassalle2018Ctp,
  author    = {Lassalle, R{\'e}mi},
  journal   = {Stochastic Processes and their Applications},
  title     = {Causal transference plans and their Monge-Kantorovich problems},
  year      = {2018},
  number    = {3},
  pages     = {452-484},
  volume    = {36},
  publisher = {Elsevier},
}

@Article{BackhoffVeraguas2017Cti,
  author    = {Backhoff-Veraguas, Julio and Beiglbock, Mathias and Lin, Yiqing and Zalashko, Anastasiia},
  journal   = {SIAM Journal on Optimization},
  title     = {Causal transport in discrete time and applications},
  year      = {2017},
  number    = {4},
  pages     = {2528--2562},
  volume    = {27},
  publisher = {SIAM},
}

@Article{Bremaud1978Cof,
  author    = {Br{\'e}maud, Pierre and Yor, Marc},
  journal   = {Zeitschrift f{\"u}r Wahrscheinlichkeitstheorie und verwandte Gebiete},
  title     = {Changes of filtrations and of probability measures},
  year      = {1978},
  number    = {4},
  pages     = {269--295},
  volume    = {45},
  publisher = {Springer},
}

@Article{Eckstein2022Cmf,
  author  = {Eckstein, Stephan and Pammer, Gudmund},
  journal = {arXiv preprint arXiv:2203.05005},
  title   = {Computational methods for adapted optimal transport},
  year    = {2022},
}

@ARTICLE{AccHou2024,
    AUTHOR = {Beatrice Acciaio and Songyan Hou},
     TITLE = {Convergence of adapted empirical measures on $\mathbb{R}^{d}$},
   JOURNAL = {Ann. Appl. Probab.},
  FJOURNAL = {Annals of Applied Probability},
      YEAR = {2024},
    VOLUME = {34},
    NUMBER = {5},
     PAGES = {4799-4835},
      ISSN = {1050-5164},
       DOI = {10.1214/24-AAP2082},
      SICI = {1050-5164(2024)34:5<4799:COAEMO>2.0.CO;2-M},
}

@Article{Xu2020CgG,
  author  = {Xu, Tianlin and Wenliang, Li Kevin and Munn, Michael and Acciaio, Beatrice},
  journal = {Advances in neural information processing systems},
  title   = {Cot-gan: Generating sequential data via causal optimal transport},
  year    = {2020},
  pages   = {8798--8809},
  volume  = {33},
}

@Article{Becker2019Dos,
  author    = {Becker, Sebastian and Cheridito, Patrick and Jentzen, Arnulf},
  journal   = {The Journal of Machine Learning Research},
  title     = {Deep optimal stopping},
  year      = {2019},
  number    = {1},
  pages     = {2712--2736},
  volume    = {20},
  publisher = {JMLR. org},
}

@Article{Dinh2016Deu,
  author  = {Dinh, Laurent and Sohl-Dickstein, Jascha and Bengio, Samy},
  journal = {arXiv preprint arXiv:1605.08803},
  title   = {Density estimation using real nvp},
  year    = {2016},
}

@Article{Pflug2015Dgo,
  author    = {Pflug, Georg Ch and Pichler, Alois},
  journal   = {Computational Optimization and Applications},
  title     = {Dynamic generation of scenario trees},
  year      = {2015},
  number    = {3},
  pages     = {641--668},
  volume    = {62},
  publisher = {Springer},
}

@Article{Cont2001Epo,
  author    = {Cont, Rama},
  journal   = {Quantitative finance},
  title     = {Empirical properties of asset returns: stylized facts and statistical issues},
  year      = {2001},
  number    = {2},
  pages     = {223},
  volume    = {1},
  publisher = {IOP Publishing},
}

@Article{BackhoffVeraguas2022Epi,
  author    = {Backhoff-Veraguas, Julio and Bartl, Daniel and Beiglb{\"o}ck, Mathias and Wiesel, Johannes},
  journal   = {The Annals of Applied Probability},
  title     = {Estimating processes in adapted Wasserstein distance},
  year      = {2022},
  number    = {1},
  pages     = {529--550},
  volume    = {32},
  publisher = {Institute of Mathematical Statistics},
}

@Article{Pichler2013Eor,
  author    = {Pichler, Alois},
  journal   = {SIAM Journal on Optimization},
  title     = {Evaluations of risk measures for different probability measures},
  year      = {2013},
  number    = {1},
  pages     = {530--551},
  volume    = {23},
  publisher = {SIAM},
}

@Article{Pflug2016Feo,
  author    = {Pflug, Georg Ch and Pichler, Alois},
  journal   = {SIAM Journal on Optimization},
  title     = {From empirical observations to tree models for stochastic optimization: convergence properties},
  year      = {2016},
  number    = {3},
  pages     = {1715--1740},
  volume    = {26},
  publisher = {SIAM},
}

@Article{Kolouri2019Gsw,
  author  = {Kolouri, Soheil and Nadjahi, Kimia and Simsekli, Umut and Badeau, Roland and Rohde, Gustavo},
  journal = {Advances in neural information processing systems},
  title   = {Generalized sliced wasserstein distances},
  year    = {2019},
  volume  = {32},
}

@InProceedings{Assefa2020Gsd,
  author    = {Assefa, Samuel A and Dervovic, Danial and Mahfouz, Mahmoud and Tillman, Robert E and Reddy, Prashant and Veloso, Manuela},
  booktitle = {Proceedings of the First ACM International Conference on AI in Finance},
  title     = {Generating synthetic data in finance: opportunities, challenges and pitfalls},
  year      = {2020},
  pages     = {1--8},
}

@Article{Glanzer2019Ism,
  author    = {Glanzer, Martin and Pflug, Georg Ch and Pichler, Alois},
  journal   = {Mathematical Programming},
  title     = {Incorporating statistical model error into the calculation of acceptability prices of contingent claims},
  year      = {2019},
  number    = {1},
  pages     = {499--524},
  volume    = {174},
  publisher = {Springer},
}

@Article{Nutz2021Ite,
  author  = {Nutz, Marcel},
  journal = {Lecture notes, Columbia University},
  title   = {Introduction to entropic optimal transport},
  year    = {2021},
}

@Book{Takezawa2005Itn,
  author    = {Takezawa, Kunio},
  publisher = {John Wiley \& Sons},
  title     = {Introduction to nonparametric regression},
  year      = {2005},
}

@Article{Benezet2024Lcd,
  author  = {B{\'e}n{\'e}zet, Cyril and Cheng, Ziteng and Jaimungal, Sebastian},
  journal = {arXiv preprint arXiv:2406.09375},
  title   = {Learning conditional distributions on continuous spaces},
  year    = {2024},
}

@Article{Rubinstein2002Mps,
  author    = {Rubinstein, Mark},
  journal   = {The Journal of finance},
  title     = {Markowitz's" portfolio selection": A fifty-year retrospective},
  year      = {2002},
  number    = {3},
  pages     = {1041--1045},
  volume    = {57},
  publisher = {JSTOR},
}

@InProceedings{Deshpande2019Msw,
  author    = {Deshpande, Ishan and Hu, Yuan-Ting and Sun, Ruoyu and Pyrros, Ayis and Siddiqui, Nasir and Koyejo, Sanmi and Zhao, Zhizhen and Forsyth, David and Schwing, Alexander G},
  booktitle = {Proceedings of the IEEE/CVF Conference on Computer Vision and Pattern Recognition},
  title     = {Max-sliced wasserstein distance and its use for gans},
  year      = {2019},
  pages     = {10648--10656},
}

@Article{Li2017MgT,
  author  = {Li, Chun-Liang and Chang, Wei-Cheng and Cheng, Yu and Yang, Yiming and P{\'o}czos, Barnab{\'a}s},
  journal = {Advances in neural information processing systems},
  title   = {Mmd gan: Towards deeper understanding of moment matching network},
  year    = {2017},
  volume  = {30},
}

@Article{Forsyth2022MPM,
  author    = {Forsyth, Peter A and Vetzal, Kenneth R},
  journal   = {Applied Mathematical Finance},
  title     = {Multi-Period Mean Expected-Shortfall Strategies:‘Cut Your Losses and Ride Your Gains’},
  year      = {2022},
  pages     = {1--37},
  publisher = {Taylor \& Francis},
}

@Book{Pflug2014Mso,
  author    = {Pflug, Georg Ch and Pichler, Alois},
  publisher = {Springer},
  title     = {Multistage stochastic optimization},
  year      = {2014},
  volume    = {1104},
}

@InProceedings{Kidger2021Nsa,
  author       = {Kidger, Patrick and Foster, James and Li, Xuechen and Lyons, Terry J},
  booktitle    = {International conference on machine learning},
  title        = {Neural sdes as infinite-dimensional gans},
  year         = {2021},
  organization = {PMLR},
  pages        = {5453--5463},
}

@Article{BionNadal2019OaW,
  author    = {Bion--Nadal, Jocelyne and Talay, Denis},
  journal   = {The Annals of Applied Probability},
  title     = {On a Wasserstein-type distance between solutions to stochastic differential equations},
  year      = {2019},
  number    = {3},
  pages     = {1609--1639},
  volume    = {29},
  publisher = {Institute of Mathematical Statistics},
}

@PhdThesis{Gigli2008Otg,
  author = {Gigli, Nicola},
  school = {Thesis (Ph. D.)--Scuola Normale Superiore},
  title  = {On the geometry of the space of probability measures in Rn endowed with the quadratic optimal transport distance},
  year   = {2008},
}

@Article{Li2000Odp,
  author    = {Li, Duan and Ng, Wan-Lung},
  journal   = {Mathematical finance},
  title     = {Optimal dynamic portfolio selection: Multiperiod mean-variance formulation},
  year      = {2000},
  number    = {3},
  pages     = {387--406},
  volume    = {10},
  publisher = {Wiley Online Library},
}

@InCollection{Merton1975Oca,
  author    = {Merton, Robert C},
  booktitle = {Stochastic optimization models in finance},
  publisher = {Elsevier},
  title     = {Optimum consumption and portfolio rules in a continuous-time model},
  year      = {1975},
  pages     = {621--661},
}

@Article{Lou2024PGg,
  author  = {Lou, Hang and Li, Siran and Ni, Hao},
  journal = {Advances in Neural Information Processing Systems},
  title   = {PCF-GAN: generating sequential data via the characteristic function of measures on the path space},
  year    = {2024},
  volume  = {36},
}

@Article{Flamary2021PPO,
  author  = {R{\'e}mi Flamary and Nicolas Courty and Alexandre Gramfort and Mokhtar Z. Alaya and Aur{\'e}lie Boisbunon and Stanislas Chambon and Laetitia Chapel and Adrien Corenflos and Kilian Fatras and Nemo Fournier and L{\'e}o Gautheron and Nathalie T.H. Gayraud and Hicham Janati and Alain Rakotomamonjy and Ievgen Redko and Antoine Rolet and Antony Schutz and Vivien Seguy and Danica J. Sutherland and Romain Tavenard and Alexander Tong and Titouan Vayer},
  journal = {Journal of Machine Learning Research},
  title   = {POT: Python Optimal Transport},
  year    = {2021},
  number  = {78},
  pages   = {1-8},
  volume  = {22},
}

@Article{Chevyrev2018Smt,
  author  = {Chevyrev, Ilya and Oberhauser, Harald},
  journal = {arXiv preprint arXiv:1810.10971},
  title   = {Signature moments to characterize laws of stochastic processes},
  year    = {2018},
}

@InProceedings{Ni2021SWG,
  author    = {Ni, Hao and Szpruch, Lukasz and Sabate-Vidales, Marc and Xiao, Baoren and Wiese, Magnus and Liao, Shujian},
  booktitle = {Proceedings of the Second ACM International Conference on AI in Finance},
  title     = {Sig-Wasserstein GANs for time series generation},
  year      = {2021},
  pages     = {1--8},
}

@Article{Nietert2022Sra,
  author  = {Nietert, Sloan and Goldfeld, Ziv and Sadhu, Ritwik and Kato, Kengo},
  journal = {Advances in Neural Information Processing Systems},
  title   = {Statistical, robustness, and computational guarantees for sliced wasserstein distances},
  year    = {2022},
  pages   = {28179--28193},
  volume  = {35},
}

@Book{Foellmer2011Sfa,
  author    = {F{\"o}llmer, Hans and Schied, Alexander},
  publisher = {Walter de Gruyter},
  title     = {Stochastic finance: an introduction in discrete time},
  year      = {2011},
}

@Article{Cont2022TgL,
  author  = {Cont, Rama and Cucuringu, Mihai and Xu, Renyuan and Zhang, Chao},
  journal = {arXiv preprint arXiv:2203.01664},
  title   = {Tail-gan: Learning to simulate tail risk scenarios},
  year    = {2022},
}

@Article{Rezende2018Tv,
  author  = {Rezende, Danilo Jimenez and Viola, Fabio},
  journal = {arXiv preprint arXiv:1810.00597},
  title   = {Taming vaes},
  year    = {2018},
}

@Article{Pichler2022TnS,
  author    = {Pichler, Alois and Weinhardt, Michael},
  journal   = {Computational Management Science},
  title     = {The nested Sinkhorn divergence to learn the nested distance},
  year      = {2022},
  number    = {2},
  pages     = {269--293},
  volume    = {19},
  publisher = {Springer},
}

@Article{Black1973Tpo,
  author    = {Black, Fischer and Scholes, Myron},
  journal   = {Journal of political economy},
  title     = {The pricing of options and corporate liabilities},
  year      = {1973},
  number    = {3},
  pages     = {637--654},
  volume    = {81},
  publisher = {The University of Chicago Press},
}

@Article{Bailey2016Tpo,
  author  = {Bailey, David H and Borwein, Jonathan and Lopez de Prado, Marcos and Zhu, Qiji Jim},
  journal = {Journal of Computational Finance, forthcoming},
  title   = {The probability of backtest overfitting},
  year    = {2016},
}

@Article{Rueschendorf1985TWd,
  author    = {R{\"u}schendorf, Ludger},
  journal   = {Probability Theory and Related Fields},
  title     = {The Wasserstein distance and approximation theorems},
  year      = {1985},
  number    = {1},
  pages     = {117--129},
  volume    = {70},
  publisher = {Springer},
}

@Article{Guyon2023Vim,
  author    = {Guyon, Julien and Lekeufack, Jordan},
  journal   = {Quantitative Finance},
  title     = {Volatility is (mostly) path-dependent},
  year      = {2023},
  number    = {9},
  pages     = {1221--1258},
  volume    = {23},
  publisher = {Taylor \& Francis},
}

@Article{Stanczuk2021WGw,
  author  = {Stanczuk, Jan and Etmann, Christian and Kreusser, Lisa Maria and Sch{\"o}nlieb, Carola-Bibiane},
  journal = {arXiv preprint arXiv:2103.01678},
  title   = {Wasserstein GANs work because they fail (to approximate the Wasserstein distance)},
  year    = {2021},
}

@Article{Hou2024Cot,
  author  = {Hou, Songyan},
  journal = {arXiv preprint arXiv:2401.14883},
  title   = {Convergence of the Adapted Smoothed Empirical Measures},
  year    = {2024},
}

@Book{Rachev2013Tmo,
  author    = {Rachev, Svetlozar T and Klebanov, Lev B and Stoyanov, Stoyan V and Fabozzi, Frank},
  publisher = {Springer},
  title     = {The methods of distances in the theory of probability and statistics},
  year      = {2013},
  volume    = {10},
}

@Article{Goodfellow2014GAN,
  author    = {Ian Goodfellow and Jean Pouget-Abadie and Mehdi Mirza and Bing Xu and David Warde-Farley and Sherjil Ozair and Aaron Courville and Yoshua Bengio},
  journal   = {Advances in Neural Information Processing Systems},
  title     = {Generative Adversarial Nets},
  year      = {2014},
  volume    = {27},
  editor    = {Z Ghahramani and M Welling and C Cortes and N Lawrence and K Q Weinberger},
  publisher = {Curran Associates, Inc.},
}

@Article{Koshiyama2021Gan,
  author  = {Adriano Koshiyama and Nick Firoozye and Philip Treleaven},
  journal = {Quantitative Finance},
  title   = {Generative adversarial networks for financial trading strategies fine-tuning and combination},
  year    = {2021},
  month   = {5},
  pages   = {797-813},
  volume  = {21},
  issue   = {5},
}

@Article{Hamdouche2023GMf,
  author  = {Mohamed Hamdouche and Pierre Henry-Labordere and Huyen Pham},
  journal = {SSRN Electronic Journal},
  title   = {Generative Modeling for Time Series Via Schrödinger Bridge},
  year    = {2023},
  issn    = {1556-5068},
}

@Article{Takahashi2019Mft,
  author  = {Shuntaro Takahashi and Yu Chen and Kumiko Tanaka-Ishii},
  journal = {Physica A: Statistical Mechanics and its Applications},
  title   = {Modeling financial time-series with generative adversarial networks},
  year    = {2019},
  month   = {8},
  pages   = {121261},
  volume  = {527},
}

@Article{Issa2024Nat,
  author  = {Issa, Zacharia and Horvath, Blanka and Lemercier, Maud and Salvi, Cristopher},
  journal = {Advances in Neural Information Processing Systems},
  title   = {Non-adversarial training of Neural SDEs with signature kernel scores},
  year    = {2024},
  volume  = {36},
}

@Misc{Kidger2021Ond,
  author = {P. Kidger},
  title  = {On neural differential equations (PhD Thesis)},
  year   = {2021},
  school = {University of Oxford},
}

@Article{Esteban2017RvM,
  author  = {Cristobal Esteban and Stephanie Hyland and Gunnar R\"atsch},
  journal = {ArXiv},
  title   = {Real-valued (Medical) Time Series Generation with Recurrent Conditional GANs},
  year    = {2017},
  month   = {8},
  volume  = {1706.02633},
}

@Article{Fu2022Sft,
  author  = {Weilong Fu and Ali Hirsa and J\"org Osterrieder},
  journal = {ArXiv},
  title   = {Simulating financial time series using attention},
  year    = {2022},
  month   = {7},
  volume  = {2207.00493},
}

@InProceedings{Coletta2021Trm,
  author    = {Coletta, Andrea and Prata, Matteo and Conti, Michele and Mercanti, Emanuele and Bartolini, Novella and Moulin, Aymeric and Vyetrenko, Svitlana and Balch, Tucker},
  booktitle = {Proceedings of the Second ACM International Conference on AI in Finance},
  title     = {Towards realistic market simulations: a generative adversarial networks approach},
  year      = {2021},
  pages     = {1--9},
}

@Article{Buehler2020Add,
  author  = {Hans B{\"u}hler and Blanka Horvath and Terry Lyons and Imanol Perez Arribas and Ben Wood},
  journal = {ERN: Neural Networks \& Related Topics (Topic)},
  title   = {A data-driven market simulator for small data environments},
  year    = {2020},
}

@Article{Wiese2021MAS,
  author  = {Magnus Wiese and Ben Wood and Alexandre Pachoud and Ralf Korn and Hans Buehler and Murray Phillip and Lianjun Bai},
  journal = {SSRN Electronic Journal},
  title   = {Multi-Asset Spot and Option Market Simulation},
  year    = {2021},
}

@Article{Wiese2020QGd,
  author  = {Magnus Wiese and Robert Knobloch and Ralf Korn and Peter Kretschmer},
  journal = {Quantitative Finance},
  title   = {Quant GANs: deep generation of financial time series},
  year    = {2020},
  month   = {9},
  pages   = {1419-1440},
  volume  = {20},
  issue   = {9},
}

@Article{Kondratyev2019TMG,
  author  = {Alex Kondratyev and Christian Schwarz},
  journal = {Econometrics: Econometric \& Statistical Methods - Special Topics eJournal},
  title   = {The Market Generator},
  year    = {2019},
}

@Article{Yoon2019TsG,
  author    = {Jinsung Yoon and Daniel Jarrett and Mihaela van der Schaar},
  journal   = {Advances in Neural Information Processing Systems},
  title     = {Time-series Generative Adversarial Networks},
  year      = {2019},
  volume    = {32},
  editor    = {H Wallach and H Larochelle and A Beygelzimer and F d Alché-Buc and E Fox and R Garnett},
  publisher = {Curran Associates, Inc.},
}

@InProceedings{Arjovsky2017WGA,
  author       = {Arjovsky, Martin and Chintala, Soumith and Bottou, L{\'e}on},
  booktitle    = {Proceedings of the 34th International Conference on Machine Learning},
  title        = {{W}asserstein Generative Adversarial Networks},
  year         = {2017},
  editor       = {Precup, Doina and Teh, Yee Whye},
  month        = {8},
  organization = {PMLR},
  pages        = {214--223},
  volume       = {70},
}

@article{biagini2024universal,
  title={Universal randomised signatures for generative time series modelling},
  author={Biagini, Francesca and Gonon, Lukas and Walter, Niklas},
  journal={arXiv preprint arXiv:2406.10214},
  year={2024}
}

@article{chung2024generative,
  title={Generative model for financial time series trained with MMD using a signature kernel},
  author={Chung I, Lu and Sester, Julian},
  journal={arXiv preprint arXiv:2407.19848},
  year={2024}
}

@article{eckerli2021generative,
  title={Generative adversarial networks in finance: an overview},
  author={Eckerli, Florian and Osterrieder, Joerg},
  journal={arXiv preprint arXiv:2106.06364},
  year={2021}
}

@inproceedings{assefa2020generating,
  title={Generating synthetic data in finance: opportunities, challenges and pitfalls},
  author={Assefa, Samuel A and Dervovic, Danial and Mahfouz, Mahmoud and Tillman, Robert E and Reddy, Prashant and Veloso, Manuela},
  booktitle={Proceedings of the First ACM International Conference on AI in Finance},
  pages={1--8},
  year={2020}
}

@article{de2019enriching,
  title={Enriching financial datasets with generative adversarial networks},
  author={de Meer Pardo, Fernando},
  journal={MS thesis, Delft University of Technology, The Netherlands},
  year={2019}
}

@article{efimov2020using,
  title={Using generative adversarial networks to synthesize artificial financial datasets},
  author={Efimov, Dmitry and Xu, Di and Kong, Luyang and Nefedov, Alexey and Anandakrishnan, Archana},
  journal={arXiv preprint arXiv:2002.02271},
  year={2020}
}

@article{liao2024sig,
  title={Sig-Wasserstein GANs for conditional time series generation},
  author={Liao, Shujian and Ni, Hao and Sabate-Vidales, Marc and Szpruch, Lukasz and Wiese, Magnus and Xiao, Baoren},
  journal={Mathematical Finance},
  volume={34},
  number={2},
  pages={622--670},
  year={2024},
  publisher={Wiley Online Library}
}

@article{dilokthanakul2016deep,
  title={Deep unsupervised clustering with gaussian mixture variational autoencoders},
  author={Dilokthanakul, Nat and Mediano, Pedro AM and Garnelo, Marta and Lee, Matthew CH and Salimbeni, Hugh and Arulkumaran, Kai and Shanahan, Murray},
  journal={arXiv preprint arXiv:1611.02648},
  year={2016}
}

@inproceedings{tomczak2018vae,
  title={VAE with a VampPrior},
  author={Tomczak, Jakub and Welling, Max},
  booktitle={International conference on artificial intelligence and statistics},
  pages={1214--1223},
  year={2018},
  organization={PMLR}
}

@article{bishop1998gtm,
  title={GTM: The generative topographic mapping},
  author={Bishop, Christopher M and Svens{\'e}n, Markus and Williams, Christopher KI},
  journal={Neural computation},
  volume={10},
  number={1},
  pages={215--234},
  year={1998},
  publisher={MIT Press}
}

@article{chen2016variational,
  title={Variational lossy autoencoder},
  author={Chen, Xi and Kingma, Diederik P and Salimans, Tim and Duan, Yan and Dhariwal, Prafulla and Schulman, John and Sutskever, Ilya and Abbeel, Pieter},
  journal={arXiv preprint arXiv:1611.02731},
  year={2016}
}

@article{vuletic2024fin,
  title={Fin-gan: Forecasting and classifying financial time series via generative adversarial networks},
  author={Vuleti{\'c}, Milena and Prenzel, Felix and Cucuringu, Mihai},
  journal={Quantitative Finance},
  volume={24},
  number={2},
  pages={175--199},
  year={2024},
  publisher={Taylor \& Francis}
}

@article{rizzato2023generative,
  title={Generative Adversarial Networks applied to synthetic financial scenarios generation},
  author={Rizzato, Matteo and Wallart, Julien and Geissler, Christophe and Morizet, Nicolas and Boumlaik, Noureddine},
  journal={Physica A: Statistical Mechanics and its Applications},
  volume={623},
  pages={128899},
  year={2023},
  publisher={Elsevier}
}

@article{ericson2024deep,
  title={Deep Generative Modeling for Financial Time Series with Application in VaR: A Comparative Review},
  author={Ericson, Lars and Zhu, Xuejun and Han, Xusi and Fu, Rao and Li, Shuang and Guo, Steve and Hu, Ping},
  journal={arXiv preprint arXiv:2401.10370},
  year={2024}
}

@article{lu2023machine,
  title={Machine learning for synthetic data generation: a review},
  author={Lu, Yingzhou and Shen, Minjie and Wang, Huazheng and Wang, Xiao and van Rechem, Capucine and Wei, Wenqi},
  journal={arXiv preprint arXiv:2302.04062},
  year={2023}
}

@article{iglesias2023data,
  title={Data augmentation techniques in time series domain: a survey and taxonomy},
  author={Iglesias, Guillermo and Talavera, Edgar and Gonz{\'a}lez-Prieto, {\'A}ngel and Mozo, Alberto and G{\'o}mez-Canaval, Sandra},
  journal={Neural Computing and Applications},
  volume={35},
  number={14},
  pages={10123--10145},
  year={2023},
  publisher={Springer}
}

@article{cont2023limit,
  title={Limit Order Book Simulation with Generative Adversarial Networks},
  author={Cont, Rama and Cucuringu, Mihai and Kochems, Jonathan and Prenzel, Felix},
  journal={Available at SSRN 4512356},
  year={2023}
}

@article{hultin2023generative,
  title={A generative model of a limit order book using recurrent neural networks},
  author={Hultin, Hanna and Hult, Henrik and Proutiere, Alexandre and Samama, Samuel and Tarighati, Ala},
  journal={Quantitative Finance},
  volume={23},
  number={6},
  pages={931--958},
  year={2023},
  publisher={Taylor \& Francis}
}

@inproceedings{nagy2023generative,
  title={Generative ai for end-to-end limit order book modelling: A token-level autoregressive generative model of message flow using a deep state space network},
  author={Nagy, Peer and Frey, Sascha and Sapora, Silvia and Li, Kang and Calinescu, Anisoara and Zohren, Stefan and Foerster, Jakob},
  booktitle={Proceedings of the Fourth ACM International Conference on AI in Finance},
  pages={91--99},
  year={2023}
}

@inproceedings{li2020generating,
  title={Generating realistic stock market order streams},
  author={Li, Junyi and Wang, Xintong and Lin, Yaoyang and Sinha, Arunesh and Wellman, Michael},
  booktitle={Proceedings of the AAAI Conference on Artificial Intelligence},
  volume={34},
  number={01},
  pages={727--734},
  year={2020}
}

@article{ozyar2021learning,
  title={Learning the Limit Order Book: a comprehensive mix between stochastic and machine learning models for generation and prediction},
  author={{\"O}zyar, Muhammed Imran},
  year={2021}
}

@phdthesis{hultin2021generative,
  title={Generative models of limit order books},
  author={Hultin, Hanna},
  year={2021},
  school={KTH Royal Institute of Technology}
}

@inproceedings{coletta2023conditional,
  title={Conditional generators for limit order book environments: Explainability, challenges, and robustness},
  author={Coletta, Andrea and Jerome, Joseph and Savani, Rahul and Vyetrenko, Svitlana},
  booktitle={Proceedings of the Fourth ACM International Conference on AI in Finance},
  pages={27--35},
  year={2023}
}

@inproceedings{thanh2020catastrophic,
  title={Catastrophic forgetting and mode collapse in GANs},
  author={Thanh-Tung, Hoang and Tran, Truyen},
  booktitle={2020 international joint conference on neural networks (ijcnn)},
  pages={1--10},
  year={2020},
  organization={IEEE}
}

@article{karras2020training,
  title={Training generative adversarial networks with limited data},
  author={Karras, Tero and Aittala, Miika and Hellsten, Janne and Laine, Samuli and Lehtinen, Jaakko and Aila, Timo},
  journal={Advances in neural information processing systems},
  volume={33},
  pages={12104--12114},
  year={2020}
}

@article{chung2015recurrent,
  title={A recurrent latent variable model for sequential data},
  author={Chung, Junyoung and Kastner, Kyle and Dinh, Laurent and Goel, Kratarth and Courville, Aaron C and Bengio, Yoshua},
  journal={Advances in neural information processing systems},
  volume={28},
  year={2015}
}

@article{dinh2016density,
  title={Density estimation using real nvp},
  author={Dinh, Laurent and Sohl-Dickstein, Jascha and Bengio, Samy},
  journal={arXiv preprint arXiv:1605.08803},
  year={2016}
}

@article{gatopoulos2020super,
  title={Super-resolution variational auto-encoders},
  author={Gatopoulos, Ioannis and Stol, Maarten and Tomczak, Jakub M},
  journal={arXiv preprint arXiv:2006.05218},
  year={2020}
}

@article{child2020very,
  title={Very deep vaes generalize autoregressive models and can outperform them on images},
  author={Child, Rewon},
  journal={arXiv preprint arXiv:2011.10650},
  year={2020}
}

@article{girin2020dynamical,
  title={Dynamical variational autoencoders: A comprehensive review},
  author={Girin, Laurent and Leglaive, Simon and Bie, Xiaoyu and Diard, Julien and Hueber, Thomas and Alameda-Pineda, Xavier},
  journal={arXiv preprint arXiv:2008.12595},
  year={2020}
}

@article{bowman2015generating,
  title={Generating sentences from a continuous space},
  author={Bowman, Samuel R and Vilnis, Luke and Vinyals, Oriol and Dai, Andrew M and Jozefowicz, Rafal and Bengio, Samy},
  journal={arXiv preprint arXiv:1511.06349},
  year={2015}
}

@book{cinelli2021variational,
  title={Variational methods for machine learning with applications to deep networks},
  author={Cinelli, Lucas Pinheiro and Marins, Matheus Ara{\'u}jo and Da Silva, Eduardo Antonio Barros and Netto, S{\'e}rgio Lima},
  volume={15},
  year={2021},
  publisher={Springer}
}

@article{desai2021timevae,
  title={Timevae: A variational auto-encoder for multivariate time series generation},
  author={Desai, Abhyuday and Freeman, Cynthia and Wang, Zuhui and Beaver, Ian},
  journal={arXiv preprint arXiv:2111.08095},
  year={2021}
}

@article{cai2023hybrid,
  title={Hybrid variational autoencoder for time series forecasting},
  author={Cai, Borui and Yang, Shuiqiao and Gao, Longxiang and Xiang, Yong},
  journal={Knowledge-Based Systems},
  volume={281},
  pages={111079},
  year={2023},
  publisher={Elsevier}
}

@article{liu2022time,
  title={Time-Transformer AAE: Connecting Temporal Convolutional Networks and Transformer for Time Series Generation},
  author={Liu, Yuansan and Wijewickrema, Sudanthi and Li, Ang and Bailey, James},
  year={2022}
}

@inproceedings{huang2024generative,
  title={Generative Learning for Financial Time Series with Irregular and Scale-Invariant Patterns},
  author={Huang, Hongbin and Chen, Minghua and Qiao, Xiao},
  booktitle={The Twelfth International Conference on Learning Representations},
  year={2024}
}

@inproceedings{yang2021causal,
  title={Causal attention for vision-language tasks},
  author={Yang, Xu and Zhang, Hanwang and Qi, Guojun and Cai, Jianfei},
  booktitle={Proceedings of the IEEE/CVF conference on computer vision and pattern recognition},
  pages={9847--9857},
  year={2021}
}

@inproceedings{bhowalvariational,
  title={Why do Variational Autoencoders Really Promote Disentanglement?},
  author={Bhowal, Pratik and Soni, Achint and Rambhatla, Sirisha},
  booktitle={Forty-first International Conference on Machine Learning},
    year={2024}
}

@article{chen2018isolating,
  title={Isolating sources of disentanglement in variational autoencoders},
  author={Chen, Ricky TQ and Li, Xuechen and Grosse, Roger B and Duvenaud, David K},
  journal={Advances in neural information processing systems},
  volume={31},
  year={2018}
}

@inproceedings{mathieu2019disentangling,
  title={Disentangling disentanglement in variational autoencoders},
  author={Mathieu, Emile and Rainforth, Tom and Siddharth, Nana and Teh, Yee Whye},
  booktitle={International conference on machine learning},
  pages={4402--4412},
  year={2019},
  organization={PMLR}
}

@article{burgess2018understanding,
  title={Understanding disentangling in $beta$-VAE},
  author={Burgess, Christopher P and Higgins, Irina and Pal, Arka and Matthey, Loic and Watters, Nick and Desjardins, Guillaume and Lerchner, Alexander},
  journal={arXiv preprint arXiv:1804.03599},
  year={2018}
}

@article{schwarz2024interpretable,
  title={Interpretable GenAI: Synthetic Financial Time Series Generation with Probabilistic LSTM},
  author={Schwarz, Christian},
  journal={Available at SSRN 4877007},
  year={2024}
}
\end{document}